\documentclass{article}
\usepackage{iclr2027_conference,times}

\usepackage{amsmath,amsfonts,bm}

\def\eqref#1{\textup{(\ref{#1})}}

\def\1{\bm{1}}

\DeclareMathAlphabet{\mathsfit}{\encodingdefault}{\sfdefault}{m}{sl}
\SetMathAlphabet{\mathsfit}{bold}{\encodingdefault}{\sfdefault}{bx}{n}

\newcommand{\E}{\mathbb{E}}

\usepackage{amsmath,amssymb,amsthm,mathtools}
\usepackage{booktabs}
\usepackage{array}
\usepackage{multirow}
\usepackage{graphicx}
\usepackage{flafter}
\usepackage{placeins}
\usepackage{wrapfig}
\usepackage{microtype}
\usepackage{enumitem}
\usepackage{xcolor}
\usepackage{hyperref}
\usepackage{url}
\usepackage{tikz}
\newcommand{\pending}{\textnormal{\textsc{tbd}}}
\newcommand{\plotslot}[4]{%
  \IfFileExists{figures/#1.pdf}{\includegraphics[width=\linewidth]{figures/#1.pdf}}{%
  \resizebox{\linewidth}{!}{%
  \begin{tikzpicture}[x=1cm,y=1cm]
  \fill[black!2] (0,0) rectangle (4.25,2.25);
  \draw[black!12] (0,0.75)--(4.25,0.75) (0,1.5)--(4.25,1.5);
  \draw[black!60,->] (0,0)--(4.45,0);
  \draw[black!60,->] (0,0)--(0,2.4);
  \node[font=\scriptsize,text=black!55,align=center] at (2.13,1.13) {Awaiting measurements};
  \node[font=\small,anchor=north] at (2.13,-0.15) {#3};
  \node[font=\small,rotate=90,anchor=south] at (-0.12,1.12) {#4};
  \node[font=\small,anchor=south,align=center] at (2.13,2.45) {#2};
  \end{tikzpicture}}}}
\hypersetup{colorlinks=false}

\newtheorem{definition}{Definition}
\newtheorem{theorem}{Theorem}
\newtheorem{proposition}{Proposition}

\newtheorem{corollary}{Corollary}

\newcommand{\M}{\mathcal{M}}
\newcommand{\Mh}{\widehat{\mathcal{M}}}
\newcommand{\PiSet}{\Pi}
\newcommand{\TV}{\operatorname{TV}}

\newcommand{\Jh}{\widehat J}

\newcommand{\epsint}{\varepsilon_{\mathrm I}}
\newcommand{\Prob}{\mathbb{P}}
\newcommand{\ind}{\mathbb{I}}

\newcommand{\FW}{\mathcal{F}_{\mathrm W}}
\newcommand{\FA}{\mathcal{F}_{\mathrm A}}
\DeclareMathOperator{\spn}{span}

\title{Dual-Frontier: When Can an Agent Trust Its World Model?}

\author{%
\resizebox{\dimexpr\textwidth-2\tabcolsep\relax}{!}{\bfseries
Huatai Zhu$^{1}$\thanks{Equal contribution.}, Qiang Chen$^{2}$\footnotemark[1],
Ziqian Kou$^{3}$, Wenhao Li$^{4}$, Fei Wang$^{5}$, Yichao Cao$^{1}$,
Xiu Su$^{1}$\thanks{Corresponding authors.}, Yi Chen$^{2}$\footnotemark[2]}\\[4pt]
\normalfont\small
\makebox[\dimexpr\textwidth-2\tabcolsep\relax][c]{$^{1}$Central South University \quad
$^{2}$The Hong Kong University of Science and Technology}\\
\makebox[\dimexpr\textwidth-2\tabcolsep\relax][c]{$^{3}$Xiangjiang Laboratory \quad
$^{4}$University of Sydney}\\
\makebox[\dimexpr\textwidth-2\tabcolsep\relax][c]{$^{5}$University of Science and Technology of China}
}

\iclrfinalcopy

\begin{document}
\maketitle
\lhead{}
\begingroup
\renewcommand{\thefootnote}{}
\footnotetext{Paper Under Review.}
\endgroup
\vspace{-12pt}

\begin{abstract}
Learned world models are becoming essential to general-purpose agents: by predicting action consequences, they support planning and decision-making while reducing reliance on costly trial and error. This reliance creates a fundamental ambiguity: when a world-model-guided decision fails, the trajectory alone may not reveal whether the agent's decision rule or the world model caused the loss. We formalize this \emph{failure-attribution problem} as a counterfactual decomposition of return loss and prove that its components are not identifiable from passive interaction, even for finite-horizon planners. This obstruction motivates \emph{Dual-Frontier}, a learning principle that admits a world-model-guided decision only when its predicted advantage exceeds a certified bound on decision-relevant world-model error; otherwise, evidence is allocated to world-model verification. Action-conditioned value bounds and a closed-loop extension guarantee non-decreasing return for admitted decisions. Calibrated gates and simultaneous confidence sequences support adaptive evidence reuse, with sufficient and necessary verification bounds. Controlled learned-model experiments validate the predicted failure modes and certification behavior, while cross-backbone tool-use benchmarks instantiate the same verify-then-promote rule in realistic agent world-model pipelines, consistently improving decision quality and reliability.
\end{abstract}

\section{Introduction}

World models are becoming core infrastructure for agents that act beyond their immediate observations. Consequential action requires anticipating how choices alter future states, information, and opportunities: general multi-step agency entails recoverable knowledge of environmental dynamics \citep{richens2025general}, even under partial observability and stochasticity \citep{cifuentes2026general}. Learned world models operationalize this knowledge as action-conditioned predictors rolled forward before execution. Whether expressed in latent states, video, language, or internal dynamics, they augment the present observation with forecasts used to compare actions, plan farther ahead, and reduce costly online trial and error.

This predictive interface now spans latent-imagination control in DreamerV3 \citep{hafner2025dreamer} and scalable model-predictive control in TD-MPC2 \citep{hansen2024tdmpc2}, as well as navigation \citep{bar2025nwm}, manipulation \citep{assran2025vjepa}, driving \citep{russell2025gaia}, and web interaction \citep{chae2025web}. It is increasingly adaptive: WorldEvolver revises predictive memory at test time \citep{zhang2026worldevolver}, CoMAP alternates world-model adaptation with agent reflection \citep{liu2026comap}, and recent systems co-train predictive knowledge and policies \citep{lu2026paw} or co-evolve simulators with agents \citep{guo2026genenv}. Reliability is thus part of the decision mechanism. Short rollouts limit model exploitation \citep{janner2019mbpo}; horizon-calibrated uncertainty addresses compounding error \citep{wan2026hauwm}; safe-improvement methods constrain policy changes \citep{delgrange2026deepspi}.

Yet a poor world-model-guided outcome poses an unresolved question: \emph{what should improve next---the agent's decision rule or the world model on which it relied?} The same trajectory can arise because an inadequate rule ignored an accurate forecast or because an inaccurate forecast misled an otherwise sound rule. Passive interaction records only their composition. We call separating these causes \emph{failure attribution}. The distinction is operational: learning against a misleading world model can reinforce a bad decision, while gathering more world-model data after the relevant forecast is adequate wastes evidence. Nor can global prediction accuracy decide the issue. Perceptual quality and closed-loop success can diverge \citep{zhang2026worldinworld}; an arbitrarily small transition error may reverse nearly tied actions, whereas a large error in an irrelevant coordinate may alter none. The missing object is decision-specific: does current evidence establish that a world-model-proposed behavior improves upon an explicit reference?

To solve this attribution-and-trust problem, we develop \textbf{Dual-Frontier}, a theory of decision-specific trust for learned-world-model agents. It evaluates the true-return contrast between a candidate behavior proposed with the world model and a reference behavior. The world-model frontier retains promising comparisons lacking evidence; the agent frontier admits those remaining beneficial after world-model and estimation uncertainty. This \emph{verify-then-promote} rule directs unresolved comparisons to targeted verification and certified ones to agent improvement, without interpreting rejection as proof of model failure. Our contributions are:
\begin{itemize}[leftmargin=*,itemsep=1.5pt,topsep=3pt]
    \item \textbf{Failure attribution.} A fixed-operator counterfactual decomposition separates agent deficiency from world-model effect; a two-step construction proves passive non-identifiability for exact and Monte Carlo planners.
    \item \textbf{Decision reliability.} An exact Bellman-residual identity converts action-conditioned world-model error into comparison-specific radii, sharp separations, and a closed-loop condition guaranteeing nonnegative expected improvement.
    \item \textbf{Dual-frontier learning.} Calibrated promotion rules reuse one simultaneous certificate under adaptive world-model and request selection, with progress and matching-order sufficient/necessary evidence bounds.
    \item \textbf{Empirical validation.} Learned-model experiments test non-identifiability, harmful imagined improvements, and qualification; matched agent--world-model evaluations separate reliability, realized quality, and verification cost.
\end{itemize}

\section{Related Work}

\paragraph{General-purpose agents in interactive environments.}
Modern agents transact with websites \citep{zhou2023webarena}, operate desktops \citep{xie2024osworld}, repair repositories \citep{jimenez2024swebench}, and act in embodied environments \citep{assran2025vjepa}. AgentBench spans eight interactive settings \citep{liu2023agentbench}, while continual agents face changing objectives \citep{liu2025online}. Across domains, multi-step attainment requires recoverable predictive knowledge \citep{richens2025general}, including under partial observability and stochasticity \citep{cifuentes2026general,huang2026toolscontinuousflowevolving}. Learned world models expose such knowledge for planning, from navigation \citep{bar2025nwm} to web interaction \citep{chae2025web}. Prior work primarily measures end-to-end success or builds domain-specific predictors, leaving forecasts embedded in the system. Dual-Frontier instead isolates the world-model--decision-rule interface and asks whether evidence supports one proposed behavior comparison.

\paragraph{Learned world models for agent planning and adaptation.}
We use \emph{learned world model} for an action-conditioned predictor that compares future courses of action, whether latent, visual, textual, or internalized. The lineage extends from recurrent simulators \citep{ha2018world} and value-equivalent models \citep{schrittwieser2020muzero} to DreamerV3 \citep{hafner2025dreamer}, TD-MPC2 \citep{hansen2024tdmpc2}, and multi-task policy learning \citep{georgiev2025pwm}. Current systems forecast navigation \citep{yao2025navmorph}, manipulation \citep{assran2025vjepa}, driving \citep{russell2025gaia}, and web transitions \citep{chae2025web}. WorldEvolver adapts predictive memory online \citep{zhang2026worldevolver}; CoEx updates persistent beliefs during exploration \citep{kim2025coex}. WebEvolver combines synthetic trajectories with look-ahead planning \citep{fang2025webevolver}, while CoMAP alternates world-model adaptation and reflection \citep{liu2026comap}. PaW co-trains policy and world model \citep{lu2026paw}; GenEnv co-evolves agents and simulators \citep{guo2026genenv}; DreamGym and Agent World Model scale synthesized interaction \citep{chen2026dreamgym,wang2026awm}. These systems optimize or exploit prediction to improve the agent, but do not identify whether a failed decision implicates its rule or its forecast. Dual-Frontier formalizes that ambiguity and qualifies a behavior comparison rather than a world model globally.

\paragraph{Reliable model-based learning and adaptation.}
World-model exploitation motivates short rollouts \citep{janner2019mbpo} and pessimism \citep{yu2020mopo}; policy-aware learning targets downstream gradient error \citep{abachi2020paml,doro2020gradient}. Recent work further studies horizon-dependent uncertainty \citep{wan2026hauwm}, local safe improvement \citep{delgrange2026deepspi}, and confidence-filtered foresight \citep{zhang2026worldevolver}. WAKER collects data where estimated world-model error is high \citep{rigter2024waker}; AdaWM separates dynamics and policy mismatch under transfer, then fine-tunes the indicated component \citep{wang2025adawm}. These methods mainly limit error or respond to an assumed diagnostic. By contrast, Dual-Frontier first proves that passive failure need not identify its source, then links action-conditioned error to value and imagined-gradient distortion. WAKER targets accuracy across environments, whereas we certify a behavior comparison; AdaWM selects adaptation under shift, whereas we establish when a world-model-guided change is actually justified in practice. Split conformal calibration \citep{angelopoulos2023conformal} and simultaneous confidence yield certificates valid under adaptive evidence reuse, connecting attribution, decision reliability, and allocation in one formulation.

\section{Set up}
\label{sec:setup}

For task $z$, let $\M_z=(\mathcal S,\mathcal A,P_z,r_z,\rho_z,H)$ and
$\Mh_z=(\mathcal S,\mathcal A,\widehat P_z,\widehat r_z,\rho_z,H)$ denote the true world
and its learned world model. The spaces are standard Borel, $H\geq1$, rewards lie in
$[0,R_b]$, and both systems share an information interface
(Appendix~\ref{app:interface}). Suppressing $z$, let $J,\Jh$ be undiscounted returns,
$\mu_t^\pi$ the true law of $(s_t,a_t)$, and
$\TV(P,Q)=\sup_B|P(B)-Q(B)|$.

A fixed operator $\mathsf A_\phi$ maps a supplied world to a policy, with $\phi$
fixing its search, information access, budget, and randomization. Replacing only the
world defines
\begin{equation}
\widehat\pi=\mathsf A_\phi(\Mh),
\qquad
\pi^\circ=\mathsf A_\phi(\M).
\label{eq:counterfactual-policies}
\end{equation}
Let $J^*=\sup_{\pi\in\PiSet}J(\pi)$, where $\PiSet$ contains both outputs.

\begin{definition}[Counterfactual attribution]\label{def:decomp}
Let
\begin{equation}
\begin{gathered}
A:=J^*-J(\pi^\circ),
\qquad
W:=J(\pi^\circ)-J(\widehat\pi),\\
R:=J^*-J(\widehat\pi)
=[J^*-J(\pi^\circ)]+[J(\pi^\circ)-J(\widehat\pi)]
=A+W.
\end{gathered}
\label{eq:failure-decomposition}
\end{equation}
Here $R,A\geq0$, whereas $W$ is a signed world-model effect.
\end{definition}

The decomposition is relative to one fixed predictive decision procedure; because
model error may accidentally help a limited agent, $W$ is signed.

\begin{theorem}[Passive non-identifiability]\label{thm:impossibility}
For every $c\in(0,R_b]$, a fixed known operator and supplied world model admit a family
of two-step true worlds, indexed by $\lambda\in[0,c]$, with
$(A,W)=(\lambda,c-\lambda)$ and identical laws of arbitrarily many deployed episodes.
Any attribution estimator based on these episodes, the known operator, and the supplied
world model satisfies
\begin{equation}
\begin{aligned}
\sup_{\lambda\in[0,c]}\E_\lambda|\widehat A-\lambda|
&\geq\frac12\left(\E_0|\widehat A|+\E_c|\widehat A-c|\right)\\
&=\frac12\E\left[|\widehat A|+|\widehat A-c|\right]
\geq\frac c2.
\end{aligned}
\label{eq:main-attribution-risk}
\end{equation}
where the middle expectation is under the common observational law. Endpoint
classification has minimax error $1/2$. Both bounds are attained, for exact and
finite-sample Monte Carlo planning.
\end{theorem}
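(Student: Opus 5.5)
The plan is to build an explicit two-step construction in which the deployed behavior $\widehat\pi$ never visits the part of the world that distinguishes the family members. Only the unobserved counterfactual $\pi^\circ$, obtained by feeding the operator the true world, depends on $\lambda$.

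\textbf{Construction.} Take a single initial state $s_0$ with two actions $a_{\mathrm L},a_{\mathrm R}$, each leading deterministically to a terminal-stage state $s_{\mathrm L}$ or $s_{\mathrm R}$. At step two there are two further actions $b_1,b_2$ with rewards in $[0,R_b]$.
\begin{itemize}
\item The supplied model $\Mh$ makes $a_{\mathrm L}$ look best, so $\widehat\pi$ plays $a_{\mathrm L}$ and then a fixed action at $s_{\mathrm L}$.
\item In every true world $\M_\lambda$ the branch through $s_{\mathrm L}$ is identical, with value $J(\widehat\pi)=R_b-c$ (this is valid since $c\le R_b$).
\item The unvisited branch $s_{\mathrm R}$ carries the optimum: one action gives reward $R_b$, so $J^*=R_b$. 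The other gives reward $R_b-\lambda$.
\end{itemize}
Fix $\phi$ as a limited operator that, given a world, compares first actions by evaluating each branch under a fixed default second action, here $b_2$. By construction $b_2$ has reward $R_b-\lambda$ at $s_{\mathrm R}$ in $\M_\lambda$, and $R_b-\lambda\ge R_b-c$. So in $\M_\lambda$ the operator picks $a_{\mathrm R}$ (breaking ties toward $a_{\mathrm R}$), and $\pi^\circ$ attains $R_b-\lambda$.

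This gives $A=J^*-J(\pi^\circ)=\lambda$ and $W=J(\pi^\circ)-J(\widehat\pi)=c-\lambda$, as required. Since $\widehat\pi=\mathsf A_\phi(\Mh)$ does not depend on $\lambda$ and only traverses the shared branch, every deployed episode has the same law under all $\lambda$. By independence, any number of i.i.d. episodes also has a common law, and the operator and $\Mh$ are fixed and known. I will check that $\Pi$ contains both outputs and that $J^*$ is the supremum over $\Pi$; for the latter it suffices to include the optimal deterministic policy in $\Pi$.

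\textbf{Risk bound.} Since the observation law $Q$ is common, any estimator $\widehat A$ has the same distribution under every $\lambda$. Bounding the supremum by the average of the endpoints $\lambda=0$ and $\lambda=c$ gives the first inequality. Rewriting both expectations under $Q$ gives the equality. The triangle inequality $|x|+|x-c|\ge c$ then gives $c/2$. Equality holds for any $\widehat A\in[0,c]$, e.g.\ $\widehat A\equiv c/2$, which achieves risk exactly $c/2$ at both endpoints and at most $c/2$ elsewhere. For endpoint classification, any test $\psi$ has the same law under both hypotheses, so $\Prob_0(\psi=1)+\Prob_c(\psi=0)=1$ and the worst-case error is at least $1/2$. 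This is attained by a fair coin, or by the average error of any constant test.

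\textbf{Monte Carlo planning.} I will make every transition deterministic and rewards deterministic (or Bernoulli with matched means). Then a finite-sample Monte Carlo evaluation of each branch returns the exact value almost surely, so the operator's outputs coincide with the exact-planning case. Alternatively, the gaps can be made large enough that the sampling randomization, which is part of $\phi$, does not change the selected action.

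\textbf{Main obstacle.} The hardest step is making the operator a single fixed, known map that is genuinely suboptimal on the true world, so that $A=\lambda>0$ is possible, while still choosing $a_{\mathrm R}$ for every $\lambda\in[0,c]$. The tie at $\lambda=c$ is the delicate case. I will handle it with the default-second-action evaluation rule and an explicit tie-break fixed in $\phi$. If that is deemed too artificial, I would instead let a budget in $\phi$ restrict the operator to one rollout per first action with a fixed second action, which is an equivalent reading.
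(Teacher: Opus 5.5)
Your argument is correct, but your construction differs from the paper's.

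\textbf{How the constructions differ.} The paper keeps rewards known and shared, with reward $c$ at an absorbing goal $g$. It places the $\lambda$-dependence in the transition probability $P_\lambda(g\mid s,a_1)=1-\lambda/c$ of an action the deployed policy never takes. It encodes the competence limitation as a restricted search set $\{a_0,a_1\}$ that omits the optimal $a_2$. The supplied model, which sends every action at $s$ to $b$, is then exact on the deployed occupancy, so the ambiguity sits entirely in untried transition probabilities. You instead use deterministic dynamics, put $\lambda$ into the reward at the unvisited pair $(s_{\mathrm R},b_2)$, and encode the limitation as a fixed default continuation. The minimax and classification steps are then identical to the paper's, and your version has the merit of being fully deterministic.

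\textbf{What your route gives up.} The Monte Carlo clause loses its content. In a deterministic world a finite-sample planner coincides with an exact planner. That is logically enough for the existential claim, but it is a degenerate instance. In the paper, the oracle-replaced planner's imagined rollouts of $a_1$ are genuinely random, so it selects $a_1$ with probability $1-(1-p)^N$. Strict monotonicity of $f_N(p)=p[1-(1-p)^N]$ is then used to realize every $\lambda$. This shows the obstruction survives real sampling noise in the counterfactual.

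\textbf{Points to tidy.}
\begin{itemize}
\item Drop your two fallbacks for the Monte Carlo case. Bernoulli rewards make the sample means inexact, so the selected branch becomes random and $J(\pi^\circ)\neq R_b-\lambda$ in general. No fixed gap works either, because $c-\lambda\to0$ as $\lambda\to c$.
\item The tie at $\lambda=c$ is not delicate, since both branches then return $R_b-c$. Breaking ties toward $a_{\mathrm L}$, as the paper breaks toward $a_0$, is cleaner. With ties toward $a_{\mathrm R}$ and $c=R_b$, the supplied model cannot strictly prefer $a_{\mathrm L}$ while being exact on the deployed branch: that branch is worth $0$ and rewards are nonnegative.
\item A constant test has worst-case error $1$. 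Only the randomized test attains the minimax value $1/2$; a constant test attains $1/2$ only as a Bayes risk under the uniform prior.
\end{itemize}
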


Appendix~\ref{app:attribution} gives the construction and minimax proof. Notably,
the supplied world model may be exact on the deployed occupancy; the ambiguity can
reside entirely in untried actions. The obstruction is therefore passive rather than
a prohibition on verification: intervention can recover missing information, whereas
an unstable operator can amplify small predictive error
(Proposition~\ref{prop:operator}). We therefore certify a specified next use rather
than infer ownership from passive failure.

\section{World-Model Reliability}
\label{sec:reliability}

\subsection{Certifying a proposed use}

A request $x=(z,\Mh,\pi_0,\pi_1)$ specifies a learned world model, a reference
behavior $\pi_0$, and a candidate $\pi_1$, including their continuation rules. Define
\begin{equation}
\Gamma_{\M}(x)=J_{\M}(\pi_1)-J_{\M}(\pi_0),\qquad
\ell_{\mathcal C}(x)=\inf_{\mathcal N\in\mathcal C}\Gamma_{\mathcal N}(x).
\label{eq:use-contrast}
\end{equation}
On $\M\in\mathcal C$, $\ell_{\mathcal C}>0$ certifies improvement; failure to certify
does not imply harm. The policies remain fixed while the evaluation world varies
(Appendix~\ref{app:contrast}).

For bounded $f$, write $Pf=\int f(s')P(\mathrm ds'\mid s,a)$ and
$\spn(f)=\sup f-\inf f$. With learned value $\widehat V_t^\pi$ and
$\widehat V_H^\pi=0$, set
\begin{align}
\Delta_t^\pi&=r-\widehat r+(P-\widehat P)\widehat V_{t+1}^\pi,
\label{eq:bellman-residual}\\
\epsint(\pi)&=\sum_{t=0}^{H-1}\E_{\mu_t^\pi}
 [|r-\widehat r|+\spn(\widehat V_{t+1}^\pi)\TV(P,\widehat P)].
\label{eq:intervention-error}
\end{align}

\begin{theorem}[Decision reliability]\label{thm:planning}
For every policy and every policy comparison,
\begin{equation}
\begin{aligned}
J(\pi)-\Jh(\pi)
&=\int_{\mathcal S}\!\bigl(V_0^\pi-\widehat V_0^\pi\bigr)(s)\,\rho(\mathrm ds)\\
&=\sum_{t=0}^{H-1}
  \int_{\mathcal S\times\mathcal A}\!\Delta_t^\pi(s,a)\,
  \mu_t^\pi(\mathrm ds,\mathrm da)\\
&=\sum_{t=0}^{H-1}\E_{\mu_t^\pi}\Delta_t^\pi.
\end{aligned}
\label{eq:value-bound}
\end{equation}
Moreover,
\begin{equation}
\begin{aligned}
|J(\pi)-\Jh(\pi)|
&\leq\sum_{t=0}^{H-1}\int
 \left(|r-\widehat r|+
 \left|\int\widehat V_{t+1}^\pi\,\mathrm d(P-\widehat P)\right|\right)
 \mathrm d\mu_t^\pi\\
&\leq\sum_{t=0}^{H-1}\int
 \left(|r-\widehat r|+
 \spn(\widehat V_{t+1}^\pi)\TV(P,\widehat P)\right)
 \mathrm d\mu_t^\pi\\
&=\epsint(\pi).
\end{aligned}
\label{eq:value-error}
\end{equation}
Consequently,
\begin{equation}
\begin{aligned}
J(\pi_1)-J(\pi_0)
&=\Jh(\pi_1)-\Jh(\pi_0)
 +[J(\pi_1)-\Jh(\pi_1)]-[J(\pi_0)-\Jh(\pi_0)]\\
&\geq\Jh(\pi_1)-\Jh(\pi_0)
 -|J(\pi_1)-\Jh(\pi_1)|-|J(\pi_0)-\Jh(\pi_0)|\\
&\geq\Jh(\pi_1)-\Jh(\pi_0)
 -\epsint(\pi_1)-\epsint(\pi_0).
\end{aligned}
\label{eq:comparison}
\end{equation}
\end{theorem}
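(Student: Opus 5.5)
The plan is a finite-horizon simulation-lemma telescoping argument. I would first fix the conventions. For a (possibly history-independent, Markov, time-indexed) policy $\pi$, $V_t^\pi$ and $\widehat V_t^\pi$ are the true and learned value functions, both with $V_H^\pi=\widehat V_H^\pi=0$. They satisfy the Bellman recursions
\[
V_t^\pi(s)=\E_{a\sim\pi_t(\cdot\mid s)}\bigl[r(s,a)+PV_{t+1}^\pi(s,a)\bigr],\qquad
\widehat V_t^\pi(s)=\E_{a\sim\pi_t(\cdot\mid s)}\bigl[\widehat r(s,a)+\widehat P\widehat V_{t+1}^\pi(s,a)\bigr].
\]
Both value functions are bounded by $HR_b$ and measurable, since the spaces are standard Borel and rewards lie in $[0,R_b]$. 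All integrals below are therefore finite, and Fubini/tower arguments are legitimate. Since both systems share $\rho$, integrating the $t=0$ values against $\rho$ gives $J(\pi)-\Jh(\pi)=\int(V_0^\pi-\widehat V_0^\pi)\,\mathrm d\rho$, which is the first equality of \eqref{eq:value-bound}.

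\textbf{Key step: the one-step residual identity.} Subtracting the two recursions and adding and subtracting $P\widehat V_{t+1}^\pi$ gives
\[
V_t^\pi(s)-\widehat V_t^\pi(s)=\E_{a\sim\pi_t(\cdot\mid s)}\bigl[\Delta_t^\pi(s,a)+P(V_{t+1}^\pi-\widehat V_{t+1}^\pi)(s,a)\bigr].
\]
I integrate this against the true state marginal at time $t$. The tower property under the true kernel $P$ turns the second term into the integral of $V_{t+1}^\pi-\widehat V_{t+1}^\pi$ against the time-$(t+1)$ state marginal. I then induct downward, or equivalently telescope from $t=0$ to $H-1$. The terminal term vanishes, which leaves $\sum_t\E_{\mu_t^\pi}\Delta_t^\pi$, the remaining equalities of \eqref{eq:value-bound}.

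I expect the main obstacle to be making this rigorous when the policy uses the shared information interface, i.e., is history-dependent rather than Markov. In that case I would apply the same argument on the augmented (history) state space from Appendix~\ref{app:interface}. The identity keeps the stated form provided $\Delta_t^\pi$ is read on that space and $\mu_t^\pi$ is its marginal. The essential point is that the state marginals propagate under $P$, not $\widehat P$; this is what places the residual under the true occupancy.

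\textbf{Bound \eqref{eq:value-error}.} I apply the triangle inequality inside the sum and use $|\E X|\le\E|X|$. For the transition term I use the standard fact that, for bounded measurable $f$ and probability measures $P,Q$,
\[
\Bigl|\int f\,\mathrm d(P-Q)\Bigr|=\Bigl|\int (f-c)\,\mathrm d(P-Q)\Bigr|\le \spn(f)\,\TV(P,Q),
\]
where $c=(\sup f+\inf f)/2$. This holds because $\|P-Q\|_1=2\TV(P,Q)$ and $\sup|f-c|=\spn(f)/2$. Applying it pointwise in $(s,a)$ gives the second inequality, and the last line is the definition \eqref{eq:intervention-error}.

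\textbf{Comparison \eqref{eq:comparison}.} This is pure algebra. Adding and subtracting $\Jh(\pi_1)$ and $\Jh(\pi_0)$ gives the first equality. Bounding $x-y\ge-|x|-|y|$ gives the second line, and applying \eqref{eq:value-error} separately to $\pi_1$ and $\pi_0$ gives the third. Note that $\epsint$ is evaluated under each policy's own true occupancy.
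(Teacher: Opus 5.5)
Your proposal is correct and takes essentially the same approach as the paper. The paper likewise subtracts the two Bellman recursions to obtain $V_t^\pi-\widehat V_t^\pi=(r-\widehat r)^\pi+(P^\pi-\widehat P^\pi)\widehat V_{t+1}^\pi+P^\pi(V_{t+1}^\pi-\widehat V_{t+1}^\pi)$, telescopes it under the true state laws $\rho_{t+1}=\rho_tP^\pi$ with $V_H^\pi-\widehat V_H^\pi=0$ (handling time and history dependence through the augmented state), bounds the transition term by centering $f$ at $(\sup f+\inf f)/2$, and derives the comparison by the same add-and-subtract algebra with $e_\pi=J(\pi)-\Jh(\pi)$.
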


\begin{corollary}[Planning regret]\label{cor:planning-regret}
If $\Jh(\widehat\pi)\geq\sup_{\pi\in\PiSet}\Jh(\pi)-\delta_p$, then
\begin{equation}
J^*-J(\widehat\pi)
\leq\sup_{\pi\in\PiSet}\epsint(\pi)+\delta_p+\epsint(\widehat\pi).
\label{eq:planning-bound}
\end{equation}
\end{corollary}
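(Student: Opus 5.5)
The proof is a standard simulation-lemma argument built entirely on the two-sided bound \eqref{eq:value-error} of Theorem~\ref{thm:planning}. We apply it once to an arbitrary comparator and once to the planner output $\widehat\pi$, and chain the two through the near-optimality of $\widehat\pi$ in the learned world. Since $J^*=\sup_{\pi\in\PiSet}J(\pi)$ may fail to be attained, we work with an arbitrary $\pi\in\PiSet$ and take the supremum at the end.

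Fix $\pi\in\PiSet$ and write the telescoping decomposition
\begin{equation*}
J(\pi)-J(\widehat\pi)
=\bigl[J(\pi)-\Jh(\pi)\bigr]
+\bigl[\Jh(\pi)-\Jh(\widehat\pi)\bigr]
+\bigl[\Jh(\widehat\pi)-J(\widehat\pi)\bigr].
\end{equation*}
The three brackets are handled as follows:
\begin{itemize}[leftmargin=*,itemsep=1pt,topsep=2pt]
\item By \eqref{eq:value-error}, the first bracket is at most $\epsint(\pi)\le\sup_{\pi'\in\PiSet}\epsint(\pi')$.
\item By the hypothesis $\Jh(\widehat\pi)\ge\sup_{\PiSet}\Jh-\delta_p\ge\Jh(\pi)-\delta_p$, the second bracket is at most $\delta_p$.
\item Again by \eqref{eq:value-error}, now applied to $\widehat\pi$, the third bracket is at most $\epsint(\widehat\pi)$.
\end{itemize}
Summing gives $J(\pi)-J(\widehat\pi)\le\sup_{\PiSet}\epsint+\delta_p+\epsint(\widehat\pi)$ for every $\pi\in\PiSet$. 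Taking the supremum over $\pi$ on the left yields \eqref{eq:planning-bound}. This uses $\widehat\pi\in\PiSet$, which the setup guarantees.

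The main point needing care is well-definedness rather than any hard estimate. The bound is vacuous if $\sup_{\PiSet}\epsint=\infty$, and since rewards lie in $[0,R_b]$, every $\epsint$ is finite anyway. We must also check that Theorem~\ref{thm:planning} applies to each $\pi\in\PiSet$, with measurable values and occupancies $\mu_t^\pi$ on the standard Borel spaces. Both follow from the standing assumptions of Section~\ref{sec:setup}, so the argument reduces entirely to the three-term split above.
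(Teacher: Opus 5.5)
Your proof is correct and matches the paper's: the appendix uses the two-sided bound $|J(\pi)-\Jh(\pi)|\le\epsint(\pi)$ in the two-policy comparison, applies it to an optimal comparator (or to a sequence approaching $J^*$ when no optimum exists), and invokes the $\delta_p$-near-optimality of $\widehat\pi$ in the learned world, which is exactly your three-term split followed by a supremum over arbitrary $\pi\in\PiSet$. One small correction: your chain never actually uses $\widehat\pi\in\PiSet$, since the third bracket is bounded for any policy and the second bracket only needs $\pi\in\PiSet$.
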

Appendix~\ref{app:value} gives the Bellman telescope and full regret derivation;
Appendix~\ref{app:transport} treats shifts beyond verified occupancy.

\begin{corollary}[Predictive accuracy does not certify a decision]\label{cor:absolute}
Arbitrarily small uniform transition error can reverse world-model-greedy action
rankings; total variation one can coexist with exact values for every policy.
\end{corollary}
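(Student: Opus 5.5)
Both claims of Corollary~\ref{cor:absolute} are existence statements, so the plan is to build two small explicit constructions in the setting of Section~\ref{sec:setup}. Each will be checked with the exact identity \eqref{eq:value-bound} of Theorem~\ref{thm:planning}.

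\emph{Ranking reversal under small uniform error.} Fix $\eta\in(0,1/2]$ and take $H=2$, $\widehat r=r$, and a deterministic initial state $s_0$ with two actions $a,b$. Both actions lead to one of two terminal-stage states: $g$ pays reward $R_b$ at $t=1$ and $h$ pays $0$. Set $P(g\mid s_0,a)=1/2+\eta$ and $P(g\mid s_0,b)=1/2$. The learned model swaps these: $\widehat P(g\mid s_0,a)=1/2$ and $\widehat P(g\mid s_0,b)=1/2+\eta$. Transitions out of $g$ and $h$ are the same in both systems. The uniform error is then $\sup_{s,a}\TV(P,\widehat P)(\cdot\mid s,a)=\eta$. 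By \eqref{eq:value-bound}, or by direct computation, $J(a)-J(b)=\eta R_b>0$ while $\Jh(a)-\Jh(b)=-\eta R_b<0$. So the world-model-greedy action $b$ is strictly worse in the true world, and this holds however small $\eta$ is. If a strictly positive true gap is wanted while the model's gap is merely reversed, the same template works with asymmetric perturbations. This part is routine.

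\emph{Total variation one with exact values.} Use the fact that $\Delta_t^\pi=r-\widehat r+(P-\widehat P)\widehat V_{t+1}^\pi$ depends on the transition only through integrals of the learned values. Take $\widehat r=r$ and let every reward depend only on time, for example $r_t\equiv R_b/2$. Then $\widehat V_{t+1}^\pi$ is constant in the state, $(P-\widehat P)\widehat V^\pi_{t+1}=0$, and $\Delta_t^\pi\equiv0$ for every $\pi$. Hence $J(\pi)=\Jh(\pi)$ for all $\pi$, and the same telescope applied from any intermediate state gives $V_t^\pi=\widehat V_t^\pi$ pointwise. Next give $P$ and $\widehat P$ mutually singular supports. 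For instance, let $\mathcal S=\{u,v\}$, let $P$ always move to $u$, and let $\widehat P$ always move to $v$. Then $\TV(P,\widehat P)=1$ at every state--action pair.

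The main obstacle is interpretive rather than technical. One must make sure the second example is not vacuous: "exact values for every policy" must hold for nonstationary and history-dependent policies too. State-independent rewards settle this, because every policy collects the same reward sequence in both systems. If a nontrivial example is preferred, where actions change returns, I would let rewards depend on actions but not on states. Actions then matter while $\Delta_t^\pi$ still vanishes identically. Finally, I would note the consequence: $\epsint$ in \eqref{eq:intervention-error} is loose in the second case, because $\spn(\widehat V)=0$ there, and it is tight in the first. This is why the certificate must be stated per decision.
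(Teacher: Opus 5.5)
Your ranking-reversal example is essentially the paper's. The paper keeps $P(g\mid a_0)=\widehat P(g\mid a_0)=\tfrac12$ and sets $P(g\mid a_1)=\tfrac12-\tfrac\epsilon2$, $\widehat P(g\mid a_1)=\tfrac12+\tfrac\epsilon2$, whereas you swap the two rows. You should also state that $r(s_0,\cdot)$ does not depend on the action.

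For the second clause, your constant-reward witness does satisfy the literal statement, but it is degenerate and differs from the paper's route. Returns do not depend on the policy at all, so no decision is at stake. Moreover $\spn(\widehat V_{t+1}^\pi)=0$ gives $\epsint(\pi)=0$, so the span-weighted certificate already reports the error exactly. Your closing remark therefore has it backwards: $\epsint$ is exact, not loose, in your second example. Only raw TV, and the uniform bound $\tfrac{H(H-1)}2R_b\epsilon_p$ built from it, are loose there. The example therefore does not show that TV-based certificates can wrongly block a real decision.

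Your proposed non-degenerate upgrade, with action-dependent but state-independent rewards and $\Delta_t^\pi\equiv0$, is false. $\widehat V_{t+1}^\pi$ is state-independent only for policies whose action law ignores the state. Take your $\mathcal S=\{u,v\}$ with $P\equiv\delta_u$, $\widehat P\equiv\delta_v$, and set $H=2$, $\rho=\delta_u$, $r(\cdot,a_1)=R_b$, $r(\cdot,a_0)=0$. Let $\pi$ play $a_1$ at $u$ and $a_0$ at $v$. Then $\Delta_0^\pi(u,a_1)=\widehat V_1^\pi(u)-\widehat V_1^\pi(v)=R_b$, and indeed $J(\pi)=2R_b\neq R_b=\Jh(\pi)$. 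This contradicts your own requirement that the claim hold for state- and history-dependent policies. Whenever actions matter and the worlds disagree in an observable coordinate, a policy can condition a later reward-relevant action on that coordinate.

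The paper avoids this by exposing the discrepancy only after the last reward-relevant decision:
\begin{itemize}
\item The horizon is $H=2$ and the next state is $(y,v)$.
\item Actions $a_0,a_1$ give $y=1$ with probabilities $1/4,3/4$ in both worlds.
\item The true world sets $v=0$ and the model sets $v=1$, so the next-state laws are mutually singular.
\item The terminal reward is $cy$, whatever the terminal action.
\end{itemize}
Then $J(\pi)=\Jh(\pi)=c[1/4+\pi(a_1\mid s_0)/2]$ for every policy, including history-dependent ones, and the action choice genuinely matters. Yet $\epsint(\pi)=\spn(\widehat V_1^\pi)\cdot1=c$. The TV-based radius $\epsint(\pi_0)+\epsint(\pi_1)=2c$ therefore rejects the exactly predicted improvement $c/2$ of $a_1$ over $a_0$. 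This is the sense in which the paper says a TV-based gate need not characterize every valid use.
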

Appendix~\ref{app:separation} gives both constructions: qualification must compare
prediction error with the advantage at stake.

\subsection{From a certified comparison to closed-loop planning}

Replanning changes a certified continuation. Fix a reference policy $\pi_0$, with
$Q_t^0=r+PV_{t+1}^{\pi_0}$ and
$\widehat Q_t^0=\widehat r+\widehat P\widehat V_{t+1}^{\pi_0}$.
For candidate action kernel $\kappa_t$, let $|Q_t^0-\widehat Q_t^0|\leq b_t$ and define
\begin{align}
d_t(s)&=\int Q_t^0(s,a)(\kappa_t-\pi_{0,t})(\mathrm da\mid s),
\label{eq:local-advantage}\\
\widehat d_t(s)&=\int\widehat Q_t^0(s,a)(\kappa_t-\pi_{0,t})(\mathrm da\mid s),
\label{eq:imagined-local-advantage}\\
B_t(s)&=\int b_t(s,a)(\kappa_t+\pi_{0,t})(\mathrm da\mid s).
\label{eq:local-radius}
\end{align}

\begin{theorem}[Closed-loop reliability]\label{thm:closed-loop}
Suppose simultaneously at all admissible states that $\widehat d_t\geq S_t-\xi_t$,
where $\xi_t\geq0$. Put
\begin{equation}
\begin{aligned}
T_t&=S_t-B_t-\xi_t,
&g_t&=\ind\{T_t>0\},\\
\nu_t&=g_t\kappa_t+(1-g_t)\pi_{0,t}.
\end{aligned}
\label{eq:closed-gate}
\end{equation}
If $\rho_t^\nu$ is the true state law under $\nu$, then
\begin{equation}
\begin{aligned}
J(\nu)-J(\pi_0)
&=\sum_{t=0}^{H-1}\int_{\mathcal S}\rho_t^\nu(\mathrm ds)
  \int_{\mathcal A}Q_t^0(s,a)[\nu_t-\pi_{0,t}](\mathrm da\mid s)\\
&=\sum_{t=0}^{H-1}\int_{\mathcal S}g_t(s)d_t(s)\rho_t^\nu(\mathrm ds)\\
&\geq\sum_{t=0}^{H-1}\int_{\mathcal S}g_t(s)
 [\widehat d_t(s)-B_t(s)]\rho_t^\nu(\mathrm ds)\\
&\geq\sum_{t=0}^{H-1}\int_{\mathcal S}g_t(s)T_t(s)
 \rho_t^\nu(\mathrm ds)\geq0.
\end{aligned}
\label{eq:closed-performance}
\end{equation}
Strict improvement holds if a positive-margin intervention is reached with positive
probability.
\end{theorem}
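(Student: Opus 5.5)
The plan is to prove the first equality in \eqref{eq:closed-performance} with a performance-difference telescope in the true world, taken relative to the reference value $V^{\pi_0}$. Then I substitute the gated kernel and use the simultaneous lower bound on $\widehat d_t$ to obtain the inequalities. Concretely, let $V_t^{\pi_0}$ be the true value of $\pi_0$ from step $t$, with $V_H^{\pi_0}=0$. Then $V_t^{\pi_0}(s)=\int Q_t^0(s,a)\pi_{0,t}(\mathrm da\mid s)$ and $Q_t^0=r+PV_{t+1}^{\pi_0}$. Take the trajectory generated by $\nu$ in $\M$, with state laws $\rho_t^\nu$ and $\rho_0^\nu=\rho$. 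I write
\begin{equation*}
J(\nu)-J(\pi_0)=\E^\nu\Bigl[\sum_{t=0}^{H-1}\bigl(r(s_t,a_t)+V_{t+1}^{\pi_0}(s_{t+1})-V_t^{\pi_0}(s_t)\bigr)\Bigr],
\end{equation*}
which holds because the $V$ terms telescope to $-\int V_0^{\pi_0}\,\mathrm d\rho=-J(\pi_0)$ and $V_H^{\pi_0}=0$. Conditioning on $(s_t,a_t)$ replaces $r+V_{t+1}^{\pi_0}(s_{t+1})$ by $Q_t^0(s_t,a_t)$. Integrating $a_t$ against $\nu_t(\cdot\mid s_t)$ then gives the first line. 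Boundedness of rewards in $[0,R_b]$ and measurability of the kernels justify the Fubini and tower steps.

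For the second equality, I note that $\nu_t-\pi_{0,t}=g_t(\kappa_t-\pi_{0,t})$ pointwise in $s$, since $g_t(s)\in\{0,1\}$. By \eqref{eq:local-advantage}, the inner integral is therefore exactly $g_t(s)d_t(s)$.

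For the first inequality, I bound $|d_t-\widehat d_t|$ at each state:
\begin{equation*}
|d_t(s)-\widehat d_t(s)|=\Bigl|\int(Q_t^0-\widehat Q_t^0)(s,a)(\kappa_t-\pi_{0,t})(\mathrm da\mid s)\Bigr|\leq\int b_t(s,a)(\kappa_t+\pi_{0,t})(\mathrm da\mid s)=B_t(s).
\end{equation*}
This uses $|\kappa_t-\pi_{0,t}|\leq\kappa_t+\pi_{0,t}$ as measures. It gives $d_t\geq\widehat d_t-B_t$, and multiplying by $g_t\geq0$ preserves the inequality. For the next inequality, the hypothesis $\widehat d_t\geq S_t-\xi_t$ gives $\widehat d_t-B_t\geq T_t$. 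The final nonnegativity holds because $g_tT_t=T_t\ind\{T_t>0\}\geq0$.

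The main obstacle is that the certification hypothesis is used at states drawn from $\rho_t^\nu$. This is the law induced by the gated policy itself, which is not the reference occupancy. This is why the statement requires the bound ``simultaneously at all admissible states'': I would define admissible states to include the support of every $\rho_t^\nu$, so the pointwise inequality holds $\rho_t^\nu$-a.s. with no transport argument. I also need $b_t$ to bound the \emph{reference} $Q$-functions, not the $\nu$-values. The telescope only ever evaluates $Q_t^0$, so no replanning-dependent error enters.

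For strict improvement, suppose that for some $t$ the set $\{T_t>0\}$ has positive $\rho_t^\nu$-measure. Then $\int g_tT_t\,\mathrm d\rho_t^\nu>0$, and all other summands are nonnegative, so the chain is strict.
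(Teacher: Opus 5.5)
Your proof is correct and follows the paper's argument essentially step for step. Both use the same performance-difference telescope against $V^{\pi_0}$ under the true law of the gated policy, the pointwise identity $\nu_t-\pi_{0,t}=g_t(\kappa_t-\pi_{0,t})$, the bound $|d_t-\widehat d_t|\leq B_t$ via $|\kappa_t-\pi_{0,t}|\leq\kappa_t+\pi_{0,t}$, and the same positive-probability argument for strictness. The paper additionally records the sharper intermediate radius $\int b_t\,\mathrm d|\kappa_t-\pi_{0,t}|$ and derives the explicit envelope \eqref{eq:conditional-radius}, but the theorem as stated needs neither.
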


Appendix~\ref{app:closed-loop} proves the result under repeated replanning. The
guarantee remains decision-specific: it compares local predicted advantage against
continuation-value uncertainty under the state distribution induced by the gated
policy. Under $|r-\widehat r|\leq\epsilon_r$ and
$\TV(P,\widehat P)\leq\epsilon_p$, Theorem~\ref{thm:planning} gives
\begin{equation}
b_t=(H-t)\epsilon_r+\tfrac12(H-t)(H-t-1)R_b\epsilon_p.
\label{eq:conditional-radius}
\end{equation}
Appendix~\ref{app:closed-loop} gives tighter and truncated-rollout variants;
Appendix~\ref{app:gradient} gives the corresponding differential certificates.

\section{Dual-Frontier Learning}
\label{sec:dual}

\subsection{Qualification and promotion}

Let $S(x)$ estimate $\widehat\Gamma(x)=\Jh(\pi_1)-\Jh(\pi_0)$, $B(x)$ bound
world-model distortion, and $\xi(x)$ bound estimation error. Define
\begin{align}
T(x)&=S(x)-B(x)-\xi(x),
\label{eq:decision-margin}\\
\FW&=\{x:S(x)>\xi(x),\ B(x)\geq S(x)-\xi(x)\},
\label{eq:frontiers}\\
\FA&=\{x:T(x)>0\}.
\label{eq:agent-frontier}
\end{align}
$\FW$ retains promising but uncertified comparisons for verification, whereas
$\FA$ contains certified improvements; $S\leq\xi$ is deferred.

\paragraph{Calibrated uncertainty.}
For $e(x)=|\widehat\Gamma(x)-\Gamma_\M(x)|$, fit a nonnegative score $u$
independently of $n$ exchangeable calibration requests and set
\begin{equation}
q_j=e_j-u(x_j),
\qquad k=\lceil(n+1)(1-\alpha)\rceil,
\qquad B(x)=[u(x)+q_{(k)}]_+.
\label{eq:main-calibration}
\end{equation}
Proposition~\ref{prop:conformal} gives $\Prob\{e(x)>B(x)\}\leq\alpha$. If
$\widehat\Gamma\geq S-\xi$ except with probability $\delta$, then
\begin{equation}
\begin{aligned}
\{T(x)>0,\Gamma_\M(x)\leq0\}
&\subseteq
\{e(x)>B(x)\}\cup\{\widehat\Gamma(x)<S(x)-\xi(x)\},\\
\Prob\{T(x)>0,\Gamma_\M(x)\leq0\}
&\leq\alpha+\delta.
\end{aligned}
\label{eq:main-marginal}
\end{equation}
Appendix~\ref{app:calibration} gives the rank proof, noisy-label extension, and
selection-conditional distinction.

\subsection{Reusable evidence and adaptive planning}

For a stationary finite task with $D$ states, $m$ state--action rows, shared known
rewards, and $n$ samples per row, let $\overline P$ be empirical and define
\begin{equation}
\begin{gathered}
a_n=\sqrt{\frac{D\log2+\log(4m/\alpha)}{2n}},
\qquad
\mathcal C=\left\{\mathcal N:
\max_{s,a}\TV(P_\mathcal N,\overline P)\leq a_n\right\}.
\end{gathered}
\label{eq:main-audit}
\end{equation}
Appendix~\ref{app:fresh} gives $\Prob\{\M\in\mathcal C\}\geq1-\alpha$. Put
$K=R_bH(H-1)$ and, for selected $\widehat P_i$,
\begin{equation}
u_i=\max_{s,a}\min\{1,\TV(\overline P,\widehat P_i)+a_n\}.
\label{eq:uniform-request-radius}
\end{equation}

\begin{theorem}[Adaptive reuse of a world certificate]\label{thm:shared}
Let $x_i,\widehat P_i$ depend on the audit and all previous observations. Conditional
on their selection, assume $\Prob(F_i^c\mid\mathcal H_i)\leq\delta_i$, where
$F_i=\{\widehat\Gamma_i\geq S_i-\xi_i\}$. Accept only
\begin{equation}
T_i=S_i-Ku_i-\xi_i>0.
\label{eq:dual-margins}
\end{equation}
For
$E=\{\M\in\mathcal C\}$ and
$\mathcal B=\bigcup_{i\geq1}\{T_i>0,\Gamma_\M(x_i)\leq0\}$,
\begin{equation}
\begin{aligned}
\mathcal B&\subseteq E^c\cup\bigcup_{i\geq1}F_i^c,\\
\Prob(\mathcal B)
&\leq\Prob(E^c)+\sum_{i\geq1}
\E[\Prob(F_i^c\mid\mathcal H_i)]
\leq\alpha+\sum_{i\geq1}\delta_i.
\end{aligned}
\label{eq:shared-risk}
\end{equation}
The audit costs $mn$ queries, independently of subsequent comparisons.
\end{theorem}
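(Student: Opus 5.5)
The proof establishes a deterministic inclusion for $\mathcal B$ and then applies a union bound. The main step is to show that on $E\cap F_i$, acceptance ($T_i>0$) forces $\Gamma_\M(x_i)>0$. This gives $\{T_i>0,\Gamma_\M(x_i)\le 0\}\subseteq E^c\cup F_i^c$ for every $i$, and the union over $i$ yields the first line of \eqref{eq:shared-risk}.

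To get the implication, I would bound $|\Gamma_\M(x_i)-\widehat\Gamma_i|\le Ku_i$ on $E$ via Theorem~\ref{thm:planning}. Here $\widehat\Gamma_i=\Jh_i(\pi_1)-\Jh_i(\pi_0)$ is computed in the selected model $\widehat P_i$. Rewards are shared and known, so the $|r-\widehat r|$ terms vanish. The learned values lie in $[0,R_b(H-t-1)]$, so $\spn(\widehat V_{t+1}^\pi)\le R_b(H-t-1)$. On $E$, the triangle inequality gives $\TV(P,\widehat P_i)\le \TV(P,\overline P)+\TV(\overline P,\widehat P_i)\le a_n+\TV(\overline P,\widehat P_i)$ rowwise. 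Since TV never exceeds one, this is $\le u_i$ uniformly over rows. The uniform bound matters because the true occupancies $\mu_t^\pi$ are unknown, and it is what makes the radius valid for whatever policies are compared.

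Summing over $t$ gives $\epsint(\pi)\le R_b u_i\sum_{t=0}^{H-1}(H-t-1)=\tfrac12 R_bH(H-1)u_i$. Applying this to both $\pi_1$ and $\pi_0$ as in \eqref{eq:comparison} yields $\Gamma_\M(x_i)\ge\widehat\Gamma_i-Ku_i$. On $F_i$ we also have $\widehat\Gamma_i\ge S_i-\xi_i$, hence $\Gamma_\M(x_i)\ge T_i>0$, which is the contradiction we need.

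The argument is deterministic given $E$. The adaptivity of $x_i$ and $\widehat P_i$ (selected using the audit and history) therefore does no harm: one event $E$ certifies every request at once, and this is exactly why the audit cost $mn$ is paid once.

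For the probability bound, I would write $\Prob(\mathcal B)\le\Prob(E^c)+\sum_i\Prob(F_i^c)$ by countable subadditivity. Each term satisfies $\Prob(F_i^c)=\E[\Prob(F_i^c\mid\mathcal H_i)]\le\delta_i$ by the tower property, where $\mathcal H_i$ is the $\sigma$-field containing the selection. Finally, $\Prob(E^c)\le\alpha$ is the audit guarantee from Appendix~\ref{app:fresh}.

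I expect the main obstacle to be the care needed in applying Theorem~\ref{thm:planning} with a data-dependent model $\widehat P_i$ and policies chosen adaptively. I would stress that \eqref{eq:value-error} is a pointwise identity and bound valid for every fixed pair of kernels. It therefore holds simultaneously for all candidates on $E$, and no measurability or conditioning is needed beyond measurability of the $F_i$.

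The audit bound itself rests on two facts. First, $\TV=\max_B|P(B)-\overline P(B)|$ is a maximum over $2^D$ events. Second, Hoeffding's inequality with a union bound over $2^D m$ events, at two-sided level $\alpha/m$ per row, gives $\log(2\cdot 2^D m/\alpha)$, matching $a_n$ up to the constant $4$. I would cite that appendix rather than re-derive it.
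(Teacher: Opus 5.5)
Your proposal is correct and follows the paper's proof essentially step for step: on $E$ the rowwise triangle inequality gives $\TV(P,\widehat P_i)\le u_i$, so Theorem~\ref{thm:planning} yields $|\Gamma_\M(x_i)-\widehat\Gamma_i|\le Ku_i$ simultaneously for every adaptively selected model and policy pair, and $E\cap F_i\cap\{T_i>0\}$ then forces $\Gamma_\M(x_i)\ge T_i>0$. The probability bound via countable subadditivity, the tower property, and $\Prob(E^c)\le\alpha$ from the fresh audit is the paper's argument as well; its appendix version only adds a $2Hv_i$ reward term, which vanishes under shared rewards.
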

Appendix~\ref{app:shared} gives the full adaptive proof and extensions.

\subsection{Progress and evidence cost}

For successive accepted behaviors with certified margins $T_i>0$, simultaneous
validity gives
\begin{align}
J(\pi_n)-J(\pi_0)
&\geq\sum_{i=0}^{n-1}T_i,
\label{eq:decision-progress}\\
\left|\{i\in\{0,\ldots,n-1\}:T_i\geq\tau\}\right|
&\leq\frac{HR_b-J(\pi_0)}{\tau},
\qquad \tau>0.
\label{eq:large-promotions}
\end{align}
Thus a bounded objective cannot sustain a fixed positive certified margin indefinitely
(Appendix~\ref{app:stationarity}).

With shared rewards and $H\geq2$, let
$q_i=\max_{s,a}\TV(P,\widehat P_i)$ and
$a_i=\widehat\Gamma_i-Kq_i$. For exact world-model evaluation, on $E$,
\begin{align}
T_i&\geq a_i-2Ka_n,
\label{eq:margin-audit-price}\\
n&>\frac{2}{s^2}[D\log2+\log(4m/\alpha)]
\label{eq:amortized-size}
\end{align}
certifies every request with $a_i/K\geq s$ simultaneously
(Appendix~\ref{app:amortized}).

Conversely, distinguishing true advantages $\pm c\gamma$,
$\gamma\in(0,1/4]$, with promotion probabilities at least $1-\delta$ and at most
$\delta$ requires
\begin{equation}
n\geq\frac{\operatorname{kl}(1-\delta,\delta)}{16\gamma^2},
\qquad 0<\delta<\tfrac12.
\label{eq:verification-lower}
\end{equation}
Appendices~\ref{app:evidence} and~\ref{app:sequential} give the lower bound and
adaptive accounting. Together, the bounds establish the inverse-square statistical
price of reliable promotion with reusable evidence.

\clearpage
\noindent\begin{minipage}{\textwidth}
\refstepcounter{table}\label{tab:agentworld-main}
\raggedright Table~\thetable: Agent--world-model evaluation on BFCL v4, API-Bank, and NexusRaven. We report Task Success (Success) and Parameter Accuracy (Acc.); Avg. is the uniform mean across benchmarks.\par
\vspace{2pt}
\centering
\scriptsize
\setlength{\tabcolsep}{3.25pt}
\renewcommand{\arraystretch}{0.90}
\resizebox{0.90\textwidth}{!}{%
\begin{tabular}{@{}cc*{8}{c}@{}}
\toprule
\addlinespace[2.5pt]
\multirow{2}{*}{Models} & \multirow{2}{*}{Methods}
& \multicolumn{2}{c}{BFCL v4} & \multicolumn{2}{c}{API-Bank}
& \multicolumn{2}{c}{NexusRaven} & \multicolumn{2}{c}{Avg.}\\
\cmidrule(lr){3-4}\cmidrule(lr){5-6}\cmidrule(lr){7-8}\cmidrule(l){9-10}
& & Success & Acc. & Success & Acc. & Success & Acc. & Success & Acc.\\
\midrule
\multirow{6}{*}{\shortstack[c]{Llama-3.1-8B\\Instruct}}
& Agent-only    & 69.12 & 85.31 & 64.96 & 78.76 & 56.29 & 72.36 & 63.46 & 78.81\\
& Always-WM     & 22.50 & 60.79 & 66.73 & 79.20 & 63.21 & 80.33 & 50.81 & 73.44\\
& Confidence    & 35.12 & 69.51 & 88.58 & 92.55 & 82.39 & 87.89 & 68.70 & 83.32\\
& Consistency   & 46.38 & 74.52 & 97.24 & 98.00 & 96.54 & 96.83 & 80.05 & 89.78\\
& Pessimistic   & 41.62 & 72.14 & 94.29 & 95.67 & 90.88 & 92.91 & 75.60 & 86.91\\
& \textbf{Dual-Frontier} & \textbf{78.12} & \textbf{90.30} & \textbf{98.62} & \textbf{98.83} & \textbf{98.11} & \textbf{98.01} & \textbf{91.62} & \textbf{95.71}\\
\midrule
\multirow{6}{*}{Qwen3-8B}
& Agent-only    & 76.38 & 90.78 & 71.26 & 80.00 & 74.21 & 81.31 & 73.95 & 84.03\\
& Always-WM     & 21.88 & 60.38 & 67.52 & 80.68 & 62.58 & 80.68 & 50.66 & 73.91\\
& Confidence    & 35.00 & 69.41 & 91.34 & 94.06 & 91.19 & 93.02 & 72.51 & 85.50\\
& Consistency   & 47.75 & 75.42 & 98.03 & 98.10 & 97.11 & 97.80 & 80.96 & 90.44\\
& Pessimistic   & 41.62 & 73.18 & 93.11 & 95.24 & 94.97 & 96.27 & 76.57 & 88.23\\
& \textbf{Dual-Frontier} & \textbf{79.38} & \textbf{92.20} & \textbf{98.92} & \textbf{98.64} & \textbf{97.80} & \textbf{97.96} & \textbf{92.03} & \textbf{96.27}\\
\midrule
\multirow{3}{*}{\shortstack[c]{Frontier\\Models}}
& GPT-6 Astra      & 82.19 & 89.56 & 84.24 & 85.14 & 87.40 & 91.35 & 84.61 & 88.68\\
& Claude Opus 5    & 84.47 & 93.96 & 64.72 & 93.35 & 67.62 & 96.85 & 72.27 & 94.72\\
& GLM-5.3-Flash    & 85.78 & 80.86 & 67.00 & 79.08 & 79.53 & 86.46 & 77.44 & 82.13\\
\bottomrule
\end{tabular}
}
\end{minipage}
\vspace{8pt}
\noindent\begin{minipage}{\textwidth}
{\centering
\includegraphics[width=\linewidth]{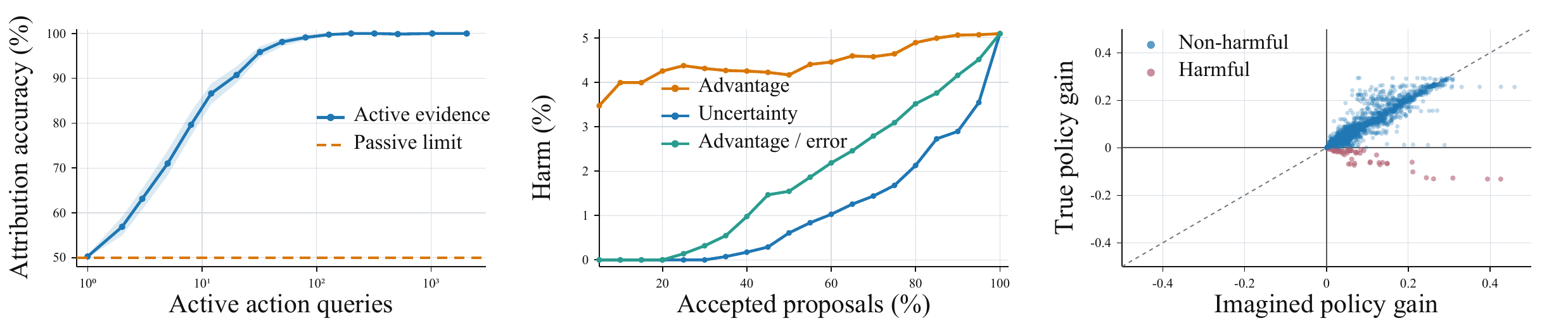}\par}
\refstepcounter{figure}\label{fig:requests}
\raggedright Figure~\thefigure: Controlled finite-world validation of intervention attribution, planning qualification, and imagined-update transfer.\par
\end{minipage}
\vspace{-8pt}
\section{Experiments}
\label{sec:experiments}
\enlargethispage{18pt}
Our experiments progressively test whether the theory identifies when a learned world model can be used reliably and whether Dual-Frontier converts that diagnosis into better decisions. We proceed in two stages: controlled finite-world experiments directly isolate and validate attribution, qualification, and evidence allocation under auditable conditions, after which a cross-backbone evaluation on public agent benchmarks directly instantiates the same verify-then-promote rule in modern tool-use pipelines across heterogeneous model-task settings.

\subsection{Controlled validation with learned world models}
\label{sec:controlled-validation}

\begin{wrapfigure}[11]{r}{0.48\textwidth}
\vspace{-0.9em}
\centering
\includegraphics[width=0.492\linewidth]{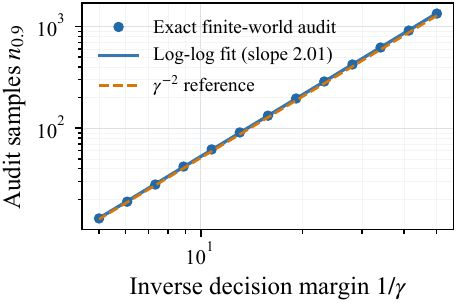}%
\hspace{0.004\linewidth}%
\includegraphics[width=0.492\linewidth]{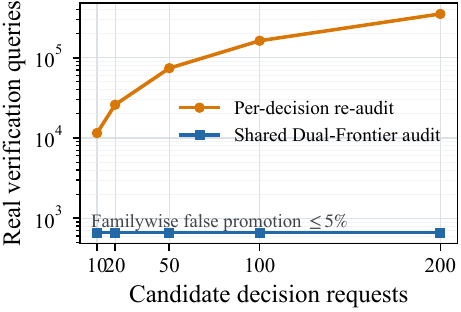}
\refstepcounter{figure}\label{fig:scaling-reuse}
\par\vspace{2pt}
\raggedright Figure~\thefigure: Evidence complexity and audit reuse. Left: inverse-square certification cost. Right: shared-certificate verification.
\vspace{-0.2em}
\end{wrapfigure}
We train action-conditioned learned world models in sparse, chain, and grid finite-horizon worlds and test held-out decisions. Each environment family contains 16 states and four actions, with horizons $H\in\{4,8,12\}$, providing controlled variation in both dynamics and planning depth. We use 40 development worlds to fix the protocol, 80 independent worlds for finite-world auditing, and 240 disjoint held-out worlds for final evaluation, with separate identifiers and random streams across all three roles. The learned world model is a Beta-smoothed empirical transition kernel fitted from sampled row observations, while a separately acquired 32-query-per-row model determines the fixed reference behavior. Given a reference action and candidate intervention, the model predicts the counterfactual consequence and the router decides whether to alter the action; exact returns are evaluation-only. The evaluation combines 400 paired constructions that share the learned model and passive record but differ on one unobserved action, held-out planning and teaching requests comparing unconditional positive-imagination acceptance, uncertainty-only rejection, and decision-relative qualification, and online runs contrasting ungated, uniformly gated, and decision-directed evidence acquisition under the same real-transition budget.
\begin{table}[h]
\caption{Dual-Frontier component ablation with Llama-3.1-8B-Instruct on three benchmarks. We report Task Success (Success), Parameter Accuracy (Acc.), and uniform benchmark averages.}
\label{tab:agentworld-ablation}
\vspace{4pt}
\centering
\scriptsize
\setlength{\tabcolsep}{3.25pt}
\renewcommand{\arraystretch}{0.96}
\resizebox{\textwidth}{!}{%
\begin{tabular}{@{}cc*{8}{c}@{}}
\toprule
\addlinespace[2.5pt]
\multirow{2}{*}{Models} & \multirow{2}{*}{Methods}
& \multicolumn{2}{c}{BFCL v4} & \multicolumn{2}{c}{API-Bank}
& \multicolumn{2}{c}{NexusRaven} & \multicolumn{2}{c}{Avg.}\\
\cmidrule(lr){3-4}\cmidrule(lr){5-6}\cmidrule(lr){7-8}\cmidrule(l){9-10}
& & Success & Acc. & Success & Acc. & Success & Acc. & Success & Acc.\\
\midrule
\multirow{4}{*}{\shortstack[c]{Llama-3.1-8B\\Instruct}}
& Advantage-only             & 65.41 & 82.60 & 94.60 & 96.12 & 93.71 & 95.40 & 84.57 & 91.37\\
& w/o world-model error      & 69.82 & 85.57 & 96.49 & 97.40 & 95.61 & 96.61 & 87.31 & 93.19\\
& w/o estimation error       & 76.41 & 89.33 & 98.02 & 98.23 & 97.39 & 97.70 & 90.61 & 95.09\\
& \textbf{Dual-Frontier}     & \textbf{78.12} & \textbf{90.30} & \textbf{98.62} & \textbf{98.83} & \textbf{98.11} & \textbf{98.01} & \textbf{91.62} & \textbf{95.71}\\
\bottomrule
\end{tabular}
}
\end{table}
\begin{figure}[h]
\centering
\includegraphics[width=\linewidth]{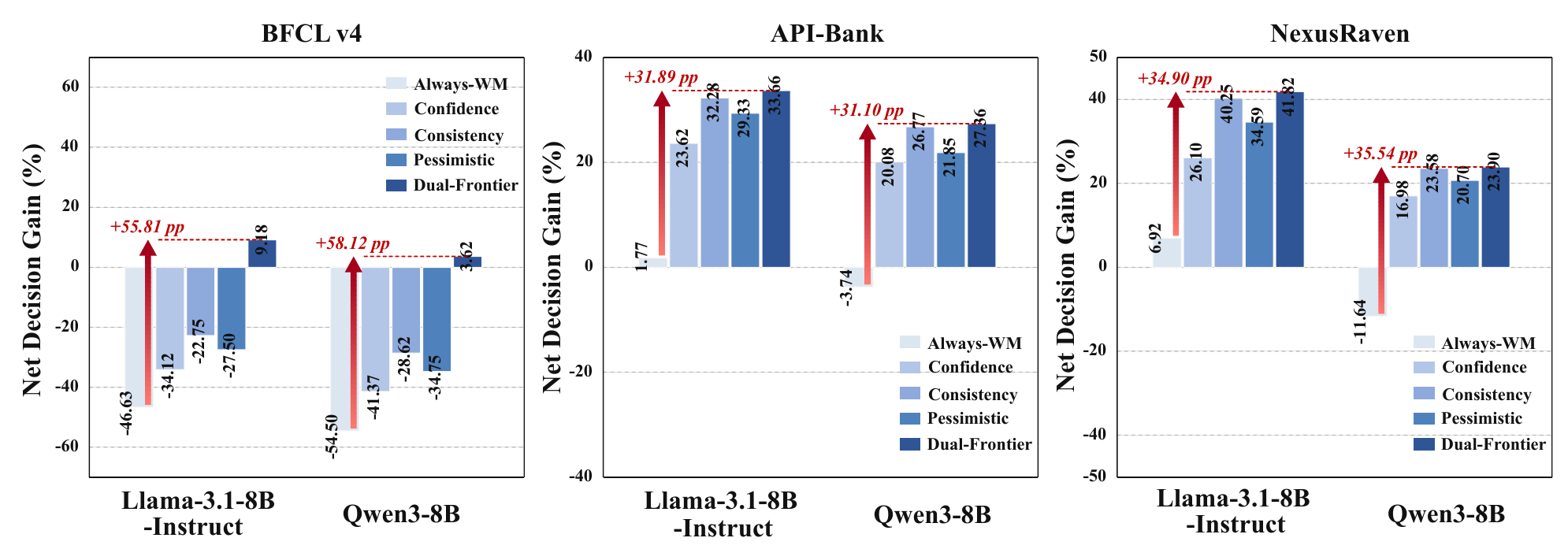}
\caption{Net decision gain across benchmarks and backbones. Red arrows show Dual-Frontier's NDG improvement over Always-WM.}
\label{fig:fig3}
\end{figure}
All methods share the world model, reference behavior, proposals, calibration split, and evidence budget, varying only structure, horizon, margin, and evidence. Appendix~\ref{app:exact-exp} gives the construction and protocol. Figure~\ref{fig:requests} shows that targeted evidence raises attribution accuracy from $50.25\%$ to $100\%$ and cuts effect error fivefold ($0.0900$ to $0.0189$); over 11,520 requests, qualification reduces harmful use from $5.10\%$ to $0.30\%$. Certified imagined updates preserve beneficial transfer while rejecting harmful revisions. Figure~\ref{fig:scaling-reuse} yields a log--log exponent of $2.014$ when varying only the true decision margin, matching $n=\Theta(\gamma^{-2})$; a shared certificate handles 200 adaptive requests with 664 versus 350,400 real queries ($527.7\times$ fewer) while keeping familywise false-promotion risk below $5\%$. These results validate counterfactual attribution, decision-relative qualification, inverse-margin scaling, and reusable certification. Appendix~\ref{app:paired-exp} reports full curves, ablations, and per-environment results.
\subsection{Agent-world-model evaluation across benchmarks}
\label{sec:agent-evaluation}
\begin{wraptable}[10]{r}{0.45\textwidth}
\vspace{-0.8em}
\centering
\caption{Robustness of Dual-Frontier across three independent world-model generation seeds. Results are mean $\pm$ standard deviation over a fixed randomly sampled benchmark subset.}
\label{tab:seed-robustness}
\vspace{3pt}
\scriptsize
\setlength{\tabcolsep}{3.4pt}
\renewcommand{\arraystretch}{1.00}
\begin{tabular}{lcc}
\toprule
Backbone & Success  & Acc.  \\
\midrule
Llama-3.1-8B-Instruct & $91.48 \pm 0.63$ & $95.57 \pm 0.41$ \\
Qwen3-8B              & $91.91 \pm 0.57$ & $96.18 \pm 0.35$ \\
\bottomrule
\end{tabular}
\vspace{-0.4em}
\end{wraptable}
We next instantiate the same Dual-Frontier decision rule in realistic agent--world-model interaction, using Qwen-AgentWorld as the learned world model~\citep{zuo2026qwenagentworld}. We evaluate Llama-3.1-8B-Instruct~\citep{grattafiori2024llama3} and Qwen3-8B~\citep{yang2025qwen3}. The benchmarks separately cover function calling in BFCL v4~\citep{patil2025bfcl}, API use in API-Bank~\citep{li2023apibank}, and heterogeneous tool schemas in NexusRaven~\citep{nexusraven2023}. For each request, the agent first emits its base action, after which Qwen-AgentWorld proposes candidate revisions together with predicted consequences. To isolate the trust decision, all routing rules evaluate the same fixed candidate and share identical prompts, schemas, candidate records, and inference budgets; they differ only in whether and how that proposal is admitted. This matched protocol separates selective qualification from proposal quality or generation cost. We also test frontier models~\citep{openai2026gpt6astra,anthropic2026claudeopus5,zai2026glm53flash}. Appendix~\ref{app:benchmark-rules} details the calibrated routing rule, and Appendix~\ref{app:benchmark-metrics} defines the metrics.

\clearpage
\begin{figure}[!ht]
    \centering
    \includegraphics[width=\linewidth]{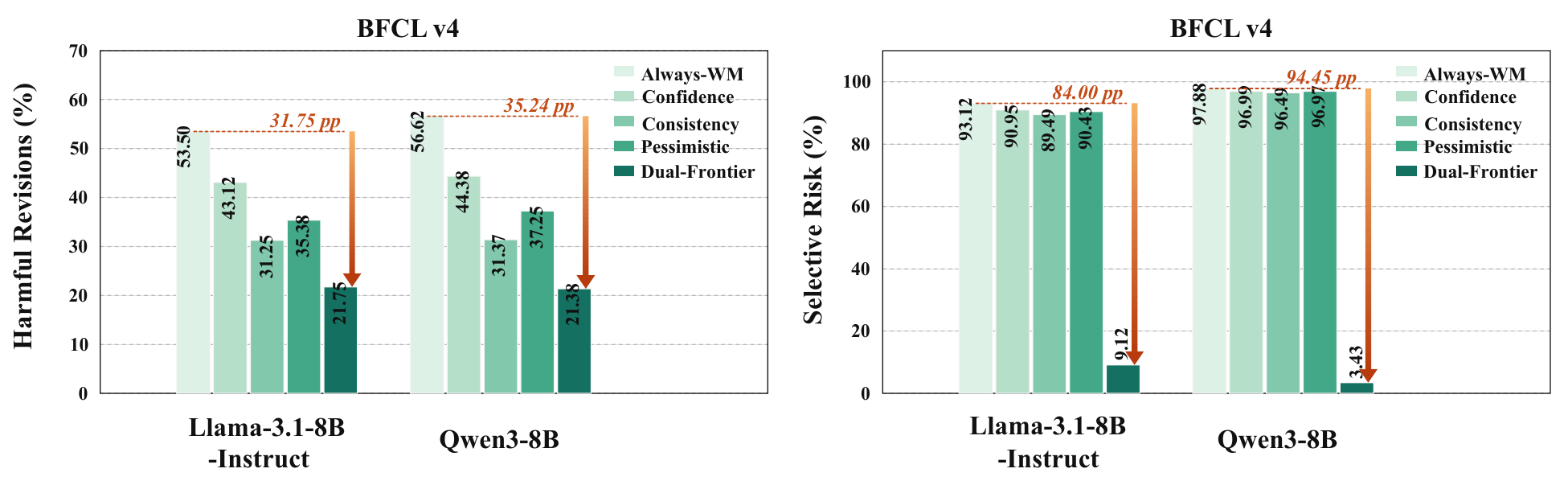}
    \caption{BFCL v4 reliability. Dual-Frontier reduces harmful revisions and selective risk across both agent backbones.}
    \label{fig:bfcl-reliability}
    \vspace{-8pt}
\end{figure}
\begin{figure}[!ht]
\centering
\includegraphics[width=\linewidth]{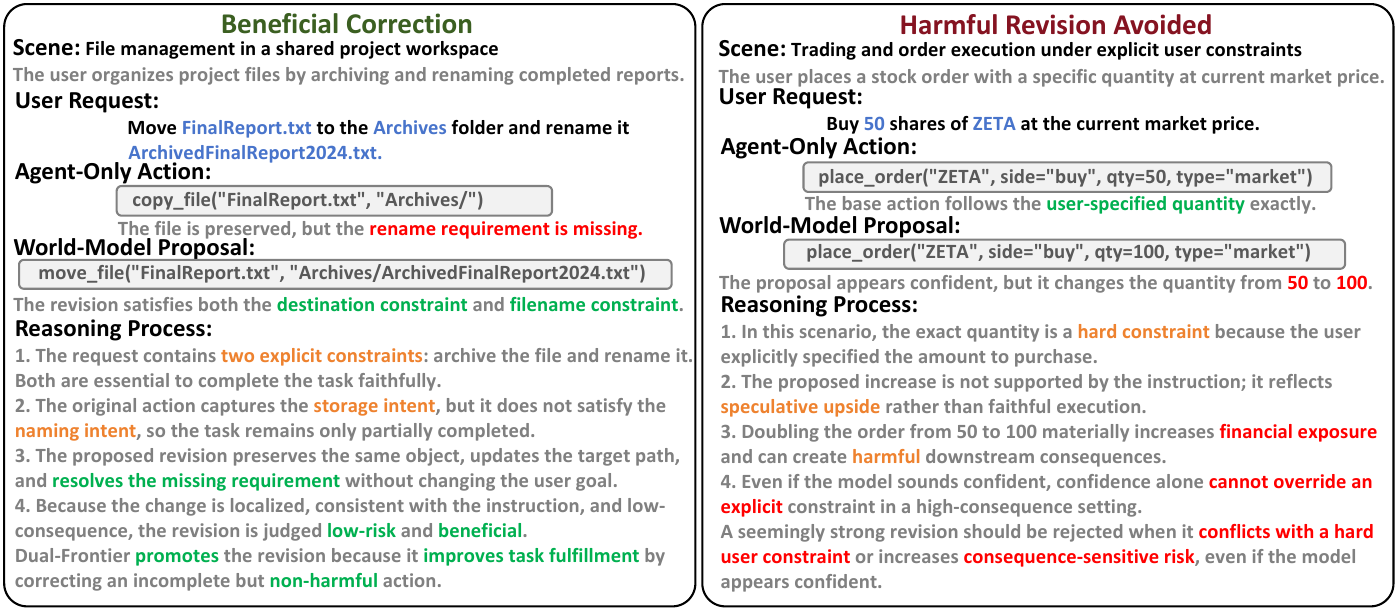}
\caption{Representative BFCL v4 cases. Dual-Frontier promotes a reliable correction while deferring a confident but high-risk revision.}
\label{fig:case-study}
\vspace{-8pt}
\end{figure}
\FloatBarrier

Table~\ref{tab:agentworld-main} shows that Dual-Frontier leads every benchmark--metric pair for both backbones. Against the strongest competing rule, average Success improves by $11.57$/$11.07$ points and Acc. by $5.93$/$5.83$ for Llama-3.1-8B-Instruct/Qwen3-8B. Table~\ref{tab:agentworld-ablation} shows monotonic recovery as the two uncertainty coordinates are restored; Appendix~\ref{app:benchmark-additional} gives the Qwen3-8B ablation. Figure~\ref{fig:fig3} reports positive gains on all benchmarks, exceeding Always-WM by $31.10$--$58.12$ points, while Table~\ref{tab:seed-robustness} shows limited variation across three generation seeds.

Figure~\ref{fig:bfcl-reliability} shows lower harmful revisions and selective risk on BFCL v4 for both backbones, and Figure~\ref{fig:case-study} illustrates the same rule on individual requests. Together with the coverage--risk results in Appendix~\ref{app:benchmark-additional}, these findings attribute the gains to selectively admitted revisions rather than more aggressive world-model use.

Dual-Frontier preserves the base action when estimated benefit is not sufficiently separated from model and finite-generation uncertainty. This request-level policy matters because the same proposal can help one request and harm another even when predicted benefits appear similar. It therefore avoids treating the world model as globally reliable or unreliable. Because prompts, schemas, candidate records, and inference budgets are fixed, the reliability change reflects qualification rather than a stronger proposal generator. Across both backbones, lower harmful revisions accompany higher task success and parameter accuracy; on API-Bank and NexusRaven, the rule retains high coverage while rejecting revisions that fail the certified margin. The ablations reinforce the same mechanism---benefit proposes an intervention, but evidence determines whether it is promoted.

Agreement between task success and parameter accuracy rules out a simple trade between tool selection and malformed arguments. Dual-Frontier improves both, indicating more correct complete actions rather than locally plausible revisions. Consistency across datasets, backbones, ablations, and seeds supports an evidence-sensitive trust mechanism.

\clearpage

\subsection{In-depth Analysis}
\label{sec:in-depth-analysis}
\begin{wrapfigure}[14]{r}{0.43\textwidth}
\vspace{-0.85em}
\centering
\includegraphics[width=\linewidth]{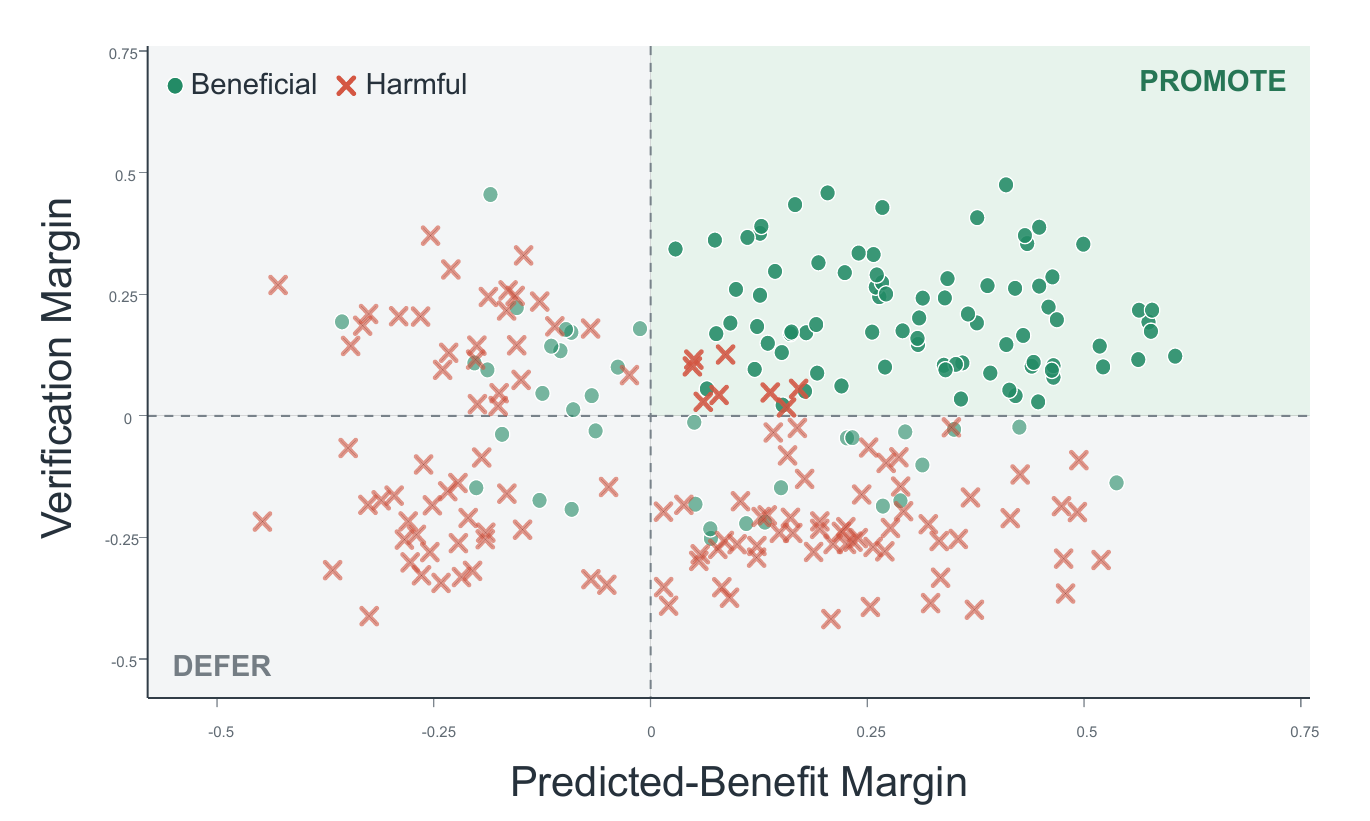}
\caption{Frontier geometry over 240 sampled requests.}
\label{fig:frontier-mechanism}
\vspace{-0.15em}
\end{wrapfigure}

We finally examine the mechanism behind Dual-Frontier.
Figure~\ref{fig:frontier-mechanism} shows that beneficial revisions cluster in the
positive-benefit, positive-verification region, whereas many harmful proposals remain
outside the promotion frontier despite favorable predicted benefit. This indicates that
predicted benefit alone is insufficient and verification is needed to screen unreliable
interventions. Figure~\ref{fig:case-study} provides complementary BFCL v4 cases, where
Dual-Frontier promotes a supported correction but defers a confident, high-risk revision.
Together, these observations align with Sections~\ref{sec:setup}--\ref{sec:dual}:
decision-level qualification separates promising predictions from those sufficiently
supported for action. Controlled finite-world experiments and public agent benchmarks
consistently validate the same frontier mechanism under formal certification and realistic tool use, confirming its robustness across both settings.
\FloatBarrier

\vspace{-0.4em}
\section{Conclusion}
\vspace{-0.20em}
Dual-Frontier establishes a decision-specific view of world-model reliability. We show that passive failures cannot identify whether an error originates from the agent or the world model, motivating explicit qualification against model and estimation uncertainty. This yields a verify-then-promote principle with decision-level guarantees, closed-loop improvement, and reusable verification under adaptive evidence allocation. Controlled finite-world experiments and agent benchmarks consistently validate the resulting attribution and qualification behavior. More broadly, reliable world-model use should be judged at the decision boundary: the key question is not whether a model is accurate in general, but whether current evidence is sufficient to justify the action that depends on it.
\clearpage

\bibliography{references}
\bibliographystyle{iclr2027_conference}

\clearpage

\appendix

\section{Mathematical Interface and Notation}
\label{app:interface}

For reference, the paired world notation is
\begin{equation*}
\M_z=(\mathcal S,\mathcal A,P_z,r_z,\rho_z,H),
\qquad
\Mh_z=(\mathcal S,\mathcal A,\widehat P_z,\widehat r_z,\rho_z,H).
\end{equation*}

\subsection{Objects held fixed in an intervention}

The task index $z$ identifies a reference environment, not merely a generated description. Once $z$ is fixed, both worlds share state space $\mathcal S$, action space $\mathcal A$, initial law $\rho$, and horizon $H$. Their transition kernels are $P,\widehat P$ and bounded rewards are $r,\widehat r$. All are measurable. An episode contains states and actions at times $0,\ldots,H-1$; the transition following the final reward is omitted. This convention explains why transition error is summed only through $H-2$.

The kernel construction on standard Borel spaces defines a unique trajectory law. For a measurable policy $\pi$, let $p^\pi$ and $\widehat p^\pi$ denote these two laws; $\mu_t^\pi$ and $\widehat\mu_t^\pi$ are their state--action marginals. Policies may depend on time, already included in the state. $V_t^\pi(s)$ and $\widehat V_t^\pi(s)$ denote expected rewards from time $t$ onward. Returns are finite and lie in $[0,HR_b]$. Suprema over policy classes are used when maximizers are not guaranteed to exist.

$\mathsf A_\phi$ is a measurable, possibly randomized model-to-policy operator. Returns of randomized outputs average the return of each realized policy over a fixed seed law. This is not generally the return of the pointwise mean of Markov policy kernels. Oracle replacement changes the model argument only: the operator, available actions, budget, and seed law remain fixed. An oracle that also changes the optimizer or observation interface defines a different intervention.

In the optional differential analysis, $\theta\in\Theta\subseteq\mathbb R^d$ parameterizes a behavior evaluated by predictive policy search. It is distinct from the fixed competence description $\phi$ used to define counterfactual attribution. At an update, the kernels and rewards do not depend on the differentiation variable. Models may be refitted between updates; every certificate must then be checked for the new snapshot.

\subsection{Partial observability and learned representations}

An observable history can serve as the state: at time $t$ take the full observation--action history, including any observed rewards or tool outputs. The true kernel is the conditional law of the next observable history under an action, and the learned kernel predicts the same object. This is a common measurable interface on which the finite-horizon arguments apply. It does not assume access to hidden physical states or equality of internal representations.

For a latent model supporting a history-dependent policy, one route is to specify a history-space kernel, for example through a decoder. Alternatively, when the policy consumes only $f(h)$, Proposition~\ref{prop:latent} compares true history-conditional latent transitions directly with the learned latent kernel. Its uniform error includes representation aliasing; neither a decoder nor exact Markov sufficiency is then assumed. Passive latent prediction loss alone supplies neither guarantee. Likewise, a generated executable environment that defines a new task is not automatically an approximation to a reference environment. The evaluation correspondence must be specified.

\subsection{Predictive decision procedures}
\label{app:uses}
A planner with specification $\phi$ maps the supplied world $\mathcal N$ to a behavior $\mathsf A_\phi(\mathcal N)$. It may select a plan once or replan after each observation. With a fixed internal supplied world, the latter procedure defines a history-based policy. Randomized search, finite imagined rollouts and a fixed rollout budget are part of the operator; no parameter update is needed.

Attribution recomputes the operator with the oracle world and compares $J_\M(\mathsf A_\phi(\M))$ with $J_\M(\mathsf A_\phi(\Mh))$. Certification instead first constructs $\pi_0,\pi_1$ and evaluates these same policies in every plausible world. In particular, evaluating $J_\mathcal N(\pi_1)$ does not replace the internal model or rerun the candidate search. A realized randomized output can be conditioned on before fresh evaluation; reuse of the data that selected it requires a simultaneous certificate.

For repeated action selection, a policy-pair certificate applies to the specified complete continuation. Theorem~\ref{thm:closed-loop} instead fixes the reference continuation in each local action comparison and controls the resulting policy through the performance-difference identity. This distinction connects a frozen predictive model to an interactive agent without assuming that a short imagined plan will actually be followed.

\subsection{Notation ledger}

\begin{table}[ht]
\centering
\caption{Notation for attribution and use-dependent reliability.}
\label{tab:notation}
\small
\begin{tabular}{>{\raggedright\arraybackslash}p{0.22\textwidth}>{\raggedright\arraybackslash}p{0.69\textwidth}}
\toprule
Symbol & Meaning\\
\midrule
$z,t,i,k$ & Task, within-episode time, sample or accepted-update index, verification round\\
$\M,\Mh$ & True environment and its learned approximation on a common interface\\
$P,\widehat P,r,\widehat r$ & Transition kernels and reward functions\\
$\rho,H,R_b$ & Shared initial law, horizon, and common nonnegative reward bound\\
$\PiSet,\mathsf A_\phi$ & Reference policy class and fixed model-to-policy operator\\
$J,\Jh,J^*$ & True return, model return, supremal true return in $\PiSet$\\
$\widehat\pi,\pi^\circ$ & Deployed policy and oracle-model counterfactual\\
$R,A,W$ & Observed regret, intrinsic agent regret, signed model effect\\
$\mu_t^\pi,\widehat\mu_t^\pi$ & True and learned state--action occupancies\\
$\Delta_t^\pi,\epsint(\pi)$ & Cross-model Bellman residual and value-error certificate\\
$\ell_t,\delta$ & Policy-independent error envelope and policy-TV radius; $\delta$ denotes a sampling error probability where stated\\
$\psi_\theta,G,S_t$ & Policy score, uniform score bound, accumulated score through $t$\\
$d_t,e,D_t$ & Expected local transition error, its sum, and prefix coupling bound\\
$g,\widehat g,\widetilde g$ & True gradient, exact imagined gradient, sampled imagined gradient\\
$C_H,b_\theta,\xi$ & Trajectory-score bound, model gradient-bias bound, sampling radius\\
$b,h,L,\eta$ & Total gradient radius, sampled gradient norm, smoothness, step size\\
$x,u,q_i,U_\alpha,T$ & Snapshot, uncertainty score, residual, calibrated upper bound, promotion margin\\
$\mathcal H_k,\varepsilon_k,N_K$ & Verification history, round risk budget, count of non-improving executed steps\\
$x,\pi_0,\pi_1$ & Decision request, reference policy, candidate policy\\
$\Gamma_{\M},\mathcal C,\ell_{\mathcal C}$ & True policy contrast, world confidence set, robust contrast\\
$\FW,\FA$ & Model-evidence frontier and certified agent frontier\\
$S,B,\xi,T$ & Estimated contrast, model-error radius, estimation radius, certified margin (return units)\\
\bottomrule
\end{tabular}
\end{table}

Local symbols are defined when first introduced. $\|\cdot\|_2$ is Euclidean norm, $\langle\cdot,\cdot\rangle$ its inner product, and the norm of a matrix is its induced operator norm. $[a]_+=\max\{a,0\}$. $\E$ and $\Prob$ denote expectation and probability under the law specified in context.

\subsection{Logical dependencies}

\begin{table}[ht]
\centering
\caption{What each guarantee requires and what it does not supply.}
\label{tab:assumption-ledger}
\small
\begin{tabular}{>{\raggedright\arraybackslash}p{0.23\textwidth}>{\raggedright\arraybackslash}p{0.33\textwidth}>{\raggedright\arraybackslash}p{0.33\textwidth}}
\toprule
Result & Required information or regularity & Not implied\\
\midrule
Theorem~\ref{thm:impossibility} & Passive episodes of a fixed composition & Impossibility after interventions\\
Theorem~\ref{thm:planning} & Common measurable interface; bounded rewards & An error bound from arbitrary pixel loss\\
Proposition~\ref{prop:trust-transport} & Uniform policy-TV control & Reliability after unrestricted policy search\\
Theorem~\ref{thm:closed-loop} & Simultaneous conditional value bounds; fixed reference & Guaranteed improvement from an arbitrary critic\\
Theorem~\ref{thm:gradient} & Frozen worlds; common reward; bounded policy score & Gradient fidelity for shared-parameter model updates\\
Theorem~\ref{cor:safe-update} & Gradient-error ball; local smoothness; feasible step & Harmfulness of every rejected step\\
Proposition~\ref{prop:conformal} & Independent score fitting; exchangeable true error labels & Coverage conditional on selection\\
Proposition~\ref{prop:fresh} & Finite spaces; fresh generative row queries & Cheap verification in arbitrary visual domains\\
Theorem~\ref{thm:shared} & One stationary task; uniform world confidence; valid estimation & Acceptance of all future proposals\\
Theorem~\ref{thm:sequential} & Conditional risk control after adaptive task choice & Improvement of every task when only one task is audited\\
Appendix~\ref{app:stationarity} & Fixed objective; relative accuracy; accepted updates & A bound on waiting time or real samples\\
Theorem~\ref{thm:allocation} & Explicit no-transfer, monotone precedence response & Universal advantage over scalar curricula\\
\bottomrule
\end{tabular}
\end{table}

\FloatBarrier
\section{Attribution: Impossibility, Instability, and Recovery}
\label{app:attribution}

The attribution continuum in Theorem~\ref{thm:impossibility} is
\begin{equation*}
(A,W)=(\lambda,c-\lambda).
\end{equation*}

\subsection{Complete fixed-operator construction}

\paragraph{Proof of Theorem~\ref{thm:impossibility}.}

Let $\mathcal S=\{s,g,b\}$ and $\mathcal A=\{a_0,a_1,a_2\}$. Start at $s$, use horizon two, and set $r(s,a)=r(b,a)=0$ and $r(g,a)=c$ for every action. The states $g,b$ are absorbing. For $\lambda\in[0,c]$, define
\begin{equation*}
\begin{aligned}
P_\lambda(g\mid s,a_0)&=0,&
P_\lambda(g\mid s,a_1)&=1-\lambda/c,&
P_\lambda(g\mid s,a_2)&=1.
\end{aligned}
\end{equation*}
The remaining probability goes to $b$. The supplied world model $\widehat P$ sends every action at $s$ to $b$ and agrees with the true kernels elsewhere. The rewards are known and shared.

The operator evaluates only actions $a_0,a_1$ using its supplied world model and chooses the higher-return action, breaking ties toward $a_0$. Its terminal action is fixed arbitrarily. This same operator is used for every $\lambda$; it is not hidden from the observer. Its restricted search is a concrete competence limitation, while the reference class contains the policy choosing $a_2$.

Under $\widehat P$, deployment always chooses $a_0$. Under $P_\lambda$, this produces $(s,a_0,0,b,a_0,0)$ with probability one. For $\lambda<c$, oracle replacement chooses $a_1$ and achieves $c-\lambda$; at $\lambda=c$ the tie-breaking rule chooses $a_0$ and still achieves $c-\lambda=0$. The reference value is always $c$. Hence $R=c,A=\lambda,W=c-\lambda$.

Let $\mathcal O$ be generated by any number of deployed episodes, the supplied world model, the known operator, and independent randomization used by an estimator. All these objects have the same law for every $\lambda$. For any $\mathcal O$-measurable real estimate $X$,
\begin{align*}
\sup_{\lambda\in[0,c]}\E_\lambda|X-\lambda|
&\geq \max\{\E_0|X|,\E_c|X-c|\}\\
&\geq\frac12\bigl(\E_0|X|+\E_c|X-c|\bigr)\\
&=\frac12\int\bigl(|x|+|x-c|\bigr)\,\mathsf Q(\mathrm dx)\\
&\geq\frac c2,
\end{align*}
where $\mathsf Q$ is the common law of $X$ and the last step is the triangle inequality. Conversely, $X=c/2$ obeys
\[
\sup_{\lambda\in[0,c]}|X-\lambda|=c/2,
\]
so the lower bound is exact. The endpoint classification argument follows from the same common-law experiment: the two endpoint labels have equal prior probability and identical observations, hence Bayes and minimax error are both $1/2$. This proves Theorem~\ref{thm:impossibility} for arbitrary passive sample size, including infinite passive repetition.

Notice that the supplied world model is exact on the deployed occupancy: its error there is zero. The missing information is about untried actions. Direct observation of the world-model parameters does not remove this obstruction. Repeated controlled trials of $a_1$ identify its success probability asymptotically; a single trial need not determine the attribution. The construction isolates why passive return and on-policy fit cannot alone justify an attribution or a certificate for new interventions.

\subsection{Finite-sample Monte Carlo planning}
\label{app:training-attribution}

The obstruction persists when action values are estimated from finitely many imagined rollouts. Use the same states, reward, supplied world model, and reference class as above, but let $P(g\mid s,a_1)=p$ range over $[0,1]$. Fix a positive integer $N$. The planner samples $N$ independent supplied-world episodes starting with each of $a_0,a_1$, compares the sample return means, and outputs the corresponding deterministic initial action. Ties select $a_0$; terminal actions are fixed.

Under the supplied world model all $2N$ imagined episodes return zero, so the complete search transcript and output policy are identical for all true worlds. Subsequent real deployment of $a_0$ is also identical. Under oracle replacement, $a_0$ still always returns zero, whereas $a_1$ is selected if and only if at least one of its $N$ imagined episodes succeeds. Its selection probability is $1-(1-p)^N$. Evaluation uses a new episode independent of search, giving
\begin{equation*}
J(\pi^\circ)=c f_N(p),\qquad
f_N(p)=p[1-(1-p)^N].
\end{equation*}
The function $f_N$ is continuous, has $f_N(0)=0,f_N(1)=1$, and for $p\in(0,1)$,
\[
f_N'(p)=1-(1-p)^N+Np(1-p)^{N-1}>0.
\]
For every $\lambda\in[0,c]$, there is therefore a unique $p$ with $c f_N(p)=c-\lambda$. Since $J^*=c$ and $J(\widehat\pi)=0$, the same $(A,W)=(\lambda,c-\lambda)$ continuum results. The common-law estimation and classification proofs apply even when the observer also sees every imagined search record.

The planner, rollout budget, and supplied world model are fixed throughout this family. The oracle does not increase compute or give a better optimizer; it changes only the predictive law used for look-ahead. This finite-sample construction requires no change of agent parameters.

\subsection{A fixed true-world variant}

A different construction keeps the true one-step MDP fixed but varies the agent operator. Use actions with rewards $c,0$ and one supplied world model distinct from the truth. For each $\lambda$, let the operator choose the zero-reward action under the supplied world model and the good action with probability $1-\lambda/c$ under the true model. This gives the same decomposition and minimax bounds already in a one-state MDP. Theorem~\ref{thm:impossibility} is stronger with respect to what the observer knows about the operator: there the operator and supplied world model are fixed.

\subsection{Predictive reliability does not identify a black-box operator's model effect}

\begin{proposition}[No universal modulus for failure ownership]\label{prop:operator}
There exist one fixed agent operator and true one-step world such that, for every sufficiently small $\epsilon>0$, a supplied world model satisfies
$\sup_{\pi\in\PiSet}|J(\pi)-\Jh(\pi)|\leq\epsilon$
but $W=c$ for a constant $c>0$ independent of $\epsilon$.
\end{proposition}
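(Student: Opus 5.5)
We will construct an explicit one-step example in which the operator's output depends discontinuously on the supplied world. Take a single start state with horizon one and two actions $a_0,a_1$, with true rewards $r(a_1)=c$ and $r(a_0)=0$. The transitions are irrelevant because $H=1$, so $P$ can be set to $\widehat P$ and $\TV$ terms vanish. Let $\PiSet$ be all (possibly randomized) policies on this state.

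The fixed operator $\mathsf A_\phi$ works as follows. Given a supplied world $\mathcal N$ with reward function $r_\mathcal N$, it outputs $a_1$ if $r_\mathcal N(a_1)=c$ exactly and $r_\mathcal N(a_0)=0$ exactly, and otherwise outputs $a_0$. An alternative, measurable but equally discontinuous, is a thresholded comparison with an exact-equality test on the reward gap. The operator is fixed once and for all; it is a legitimate measurable map from models to policies, and its discontinuity is the whole point of the proposition. For each small $\epsilon>0$, supply $\widehat r_\epsilon(a_1)=c-\epsilon$ and $\widehat r_\epsilon(a_0)=0$.

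We then verify the two claims. First, for any $\pi\in\PiSet$, Theorem~\ref{thm:planning} (or direct computation, since $H=1$) gives
\[
|J(\pi)-\Jh(\pi)|=\pi(a_1)\epsilon\leq\epsilon.
\]
So the supremum over $\PiSet$ is at most $\epsilon$. Second, $\pi^\circ=\mathsf A_\phi(\M)$ selects $a_1$ with $J(\pi^\circ)=c$, whereas $\widehat\pi=\mathsf A_\phi(\Mh_\epsilon)$ selects $a_0$ with $J(\widehat\pi)=0$. Hence $W=c$ for every $\epsilon\in(0,c)$, while $A=0$.

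The main obstacle is making sure the operator is admissible in the paper's sense: measurable, with $\phi$ fixed, and not secretly changing with $\epsilon$. If an exact-equality test is considered too artificial, we would instead use a greedy operator with tie-breaking on a near-tie. Set true rewards $c$ and $c-\eta$ with $\eta$ small, and let the supplied world swap the order by an $\epsilon$-perturbation. This yields a world-model effect of only $\eta$, not a constant, so it is insufficient; that is why a genuinely discontinuous fixed operator is needed. We will remark that this is consistent with Corollary~\ref{cor:planning-regret}, which bounds regret only for near-optimal planners, whereas a black-box operator has no such modulus.
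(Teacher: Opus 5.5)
Your construction is correct and takes essentially the same route as the paper. The paper also uses a one-step bandit with a fixed, deliberately discontinuous operator: an $\epsilon$ reward perturbation, which changes every policy value by at most $\epsilon$, flips the operator's output from the rewarding action under the oracle to a zero-reward action under the supplied model, giving $W=c$ and $A=0$. The only differences are cosmetic. The paper uses three actions and a sign test on the perturbed reward of an action the operator never selects, where you use two actions and an exact-equality test. Both are fixed measurable operators, and your remark that a greedy operator cannot produce a constant $W$ correctly explains why the discontinuity is essential.
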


\begin{proof}
Use three actions with true rewards $0,0,c$. The model changes only the first reward from zero to $\epsilon$, with $0<\epsilon\leq R_b$. Define the operator to choose the second action if the supplied first reward is positive, and the third action otherwise. This is one fixed measurable operator. It achieves zero in deployment and $c$ after oracle replacement, so $W=c$. The value of any policy changes by at most $\epsilon$. Thus no function $\omega(\epsilon)\to0$ can universally bound $|W|$ for arbitrary operators.
\end{proof}

A continuity or optimization-residual assumption on the operator can exclude this example, but it cannot be omitted. The value and update comparisons in the main text avoid that extra assumption by certifying the policies actually compared. Their validity is not a claim that they estimate $A$ or $W$.

The sign of $W$ is equally unrestricted. Reverse the two branches of the operator in the preceding example. The wrong model now induces the good action and oracle replacement the bad action, giving $W=-c$. The decomposition remains an exact signed identity.

\subsection{Finite-sample attribution with an oracle intervention}

\begin{proposition}[Interventional recovery]\label{prop:attribution-sampling}
Fix $\widehat\pi,\pi^\circ$ and a reference policy $\pi_r$ before evaluation, with
$0\leq J^*-J(\pi_r)\leq\omega$.
Use $m$ independent episodes of each policy and write their sample means as $\overline J,\overline J^\circ,\overline J_r$. For $\delta\in(0,1)$, put
\begin{equation*}
t=HR_b\sqrt{\frac{\log(6/\delta)}{2m}},
\qquad
\widehat A=\overline J_r-\overline J^\circ,\qquad
\widehat W=\overline J^\circ-\overline J.
\end{equation*}
With probability at least $1-\delta$,
\begin{equation*}
|\widehat A-A|\leq2t+\omega,\qquad
|\widehat W-W|\leq2t.
\end{equation*}
\end{proposition}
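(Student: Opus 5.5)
The plan is to apply Hoeffding's inequality to each of the three sample means, combine them with a union bound, and then propagate the deviations through the definitions of $A$ and $W$ with the triangle inequality.

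\emph{Concentration.} Each episode return lies in $[0,HR_b]$, since rewards lie in $[0,R_b]$ over $H$ steps. The policies $\widehat\pi,\pi^\circ,\pi_r$ are fixed before evaluation and each receives $m$ independent episodes. Hence each sample mean is an average of $m$ i.i.d.\ bounded variables with the corresponding true return as its mean. For randomized operator outputs, the seed law is part of the episode law, consistent with the convention in Appendix~\ref{app:interface}, so independence still holds.

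Hoeffding's inequality gives
\[
\Prob\{|\overline J-J(\widehat\pi)|>t\}\leq 2\exp\bigl(-2mt^2/(HR_b)^2\bigr)=\delta/3
\]
at the stated $t$, and the same bound holds for $\overline J^\circ$ and $\overline J_r$. A union bound over the three deviations then gives an event $G$ with $\Prob(G)\geq1-\delta$ on which all three means are within $t$ of their targets. This is exactly why $\log(6/\delta)$ appears: three two-sided bounds of $2e^{-\cdot}$ each.

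\emph{Propagation on $G$.} For the model effect, write
\[
\widehat W-W=(\overline J^\circ-J(\pi^\circ))-(\overline J-J(\widehat\pi)),
\]
so $|\widehat W-W|\leq 2t$. For the agent term, use $A=J^*-J(\pi^\circ)$ and write
\[
\widehat A-A=(\overline J_r-J(\pi_r))-(\overline J^\circ-J(\pi^\circ))-(J^*-J(\pi_r)).
\]
The first two brackets contribute at most $2t$. The last bracket lies in $[0,\omega]$ by hypothesis, so $|\widehat A-A|\leq 2t+\omega$.

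The only delicate point is the independence and fixedness requirement. The policies, and in particular any realized randomized output, must be chosen before the evaluation episodes are drawn, so that Hoeffding's inequality applies without selection effects. The statement assumes this explicitly. Everything else is routine, and $J^*$ enters only through the reference bias $\omega$.
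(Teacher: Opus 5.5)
Your proposal is correct and follows essentially the same route as the paper: Hoeffding at level $\delta/3$ for each of the three sample means, a union bound, and the triangle inequality, with $\omega$ entering only through the reference-policy bias in $\widehat A$. As the paper also notes, the union bound means no independence across the three policies' episode streams is needed, only independence within each stream.
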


\begin{proof}
A bounded return has range $HR_b$. Hoeffding's inequality makes each sample mean accurate to $t$ except with probability $\delta/3$. A union bound yields simultaneous accuracy. Each difference has sampling error at most $2t$; only the agent-regret estimate has the additional reference-policy error $\omega$. Cross-policy independence is unnecessary, so paired random numbers are allowed if each policy's episodes remain independent.
\end{proof}

This result requires the oracle-model counterfactual policy, not merely more episodes of deployment. It quantifies evaluation after the identifying intervention; it does not remove the intervention's cost.

\section{Decision Reliability: Full Derivations}
\label{app:value}

Theorem~\ref{thm:planning} also controls planning regret. If
$\Jh(\widehat\pi)\geq\sup_{\pi\in\PiSet}\Jh(\pi)-\delta_p$, then the
planning-regret inequality in Corollary~\ref{cor:planning-regret} follows.
The proof below takes a supremum and does not require an optimal policy to exist.

\subsection{Robust policy contrasts and the order of evaluation}
\label{app:contrast}

Fix a use request $x$, including the two output policies, and a nonempty class $\mathcal C$ of worlds sharing the evaluation interface. Bounded rewards imply
$-HR_b\leq\Gamma_{\mathcal N}(x)\leq HR_b$. Let $\underline\ell(x)$ be any computable lower bound on $\ell_{\mathcal C}(x)$. By the definition of infimum,
\begin{equation*}
\M\in\mathcal C
\quad\Longrightarrow\quad
\Gamma_{\M}(x)\geq\ell_{\mathcal C}(x)
\geq\underline\ell(x).
\end{equation*}
On an event where $\M\in\mathcal C$, this deterministic implication holds for all requests at once, including any selected as a function of $\mathcal C$. No union bound over policies or use requests is required. For random sets and selectors we assume the displayed infima and events are measurable; the explicit finite-state bounds used in the paper are measurable functions of finite arrays.

A uniform improvement guarantee with positive constant $a$ exists if and only if $\ell_{\mathcal C}(x)>0$: one direction takes infima, and the other chooses $a=\ell_{\mathcal C}(x)$. This equivalence is for the information represented by $\mathcal C$, not for the unknown true world alone. A computable lower bound may fail even when the exact infimum is positive.

If $\mathcal C$ is compact and $\Gamma_{\mathcal N}(x)$ is continuous in $\mathcal N$, its image is a nonempty compact subset of $\mathbb R$ and contains its infimum. Thus $\ell_{\mathcal C}(x)\leq0$ exhibits a plausible world where the comparison is non-improving. Without attainment, $\ell_{\mathcal C}=0$ may coexist with strictly positive contrast in every world, as for contrasts $1/j$, $j\geq1$. It still precludes a uniform positive margin.

For finite spaces and finite horizon, the expectation of a fixed history-based policy is a finite sum of products of transition probabilities and rewards. It is continuous in these arrays. Closed row-confidence sets in the probability simplices are compact and nonempty; the empirical kernel is feasible. These observations justify attainment for the concrete verifier. Exact contrast minimization can remain computationally difficult, which is why the main text supplies tractable conservative bounds instead of assuming access to an exact robust optimizer.

A request compares a proposed behavior with an explicit reference behavior. The certificate does not establish global optimality; the planning bound controls suboptimality separately when an optimization residual is available.

\subsection{Bellman operators and the telescoping identity}

\paragraph{Proof of Theorem~\ref{thm:planning} and Corollary~\ref{cor:planning-regret}.}

For a policy $\pi$, define $r^\pi(s)=\int r(s,a)\pi(\mathrm da\mid s)$ and
\begin{equation*}
(P^\pi f)(s)=\int_{\mathcal A}\int_{\mathcal S}
 f(s')P(\mathrm ds'\mid s,a)\pi(\mathrm da\mid s).
\end{equation*}
Time dependence is understood through the augmented state. Both Bellman recursions hold for bounded measurable values:
$V_t^\pi=r^\pi+P^\pi V_{t+1}^\pi$ and
$\widehat V_t^\pi=\widehat r^\pi+\widehat P^\pi\widehat V_{t+1}^\pi$.
Subtracting gives the exact operator identity
\begin{align*}
V_t^\pi-\widehat V_t^\pi
&=r^\pi-\widehat r^\pi
  +P^\pi V_{t+1}^\pi-\widehat P^\pi\widehat V_{t+1}^\pi\\
&=(r-\widehat r)^\pi
  +(P^\pi-\widehat P^\pi)\widehat V_{t+1}^\pi
  +P^\pi(V_{t+1}^\pi-\widehat V_{t+1}^\pi).
\end{align*}
Let $\rho_t$ be the true state law, $d_t=V_t^\pi-\widehat V_t^\pi$, and $b_t=\int\Delta_t^\pi(\cdot,a)\pi(\mathrm da\mid\cdot)$. These local symbols are used only in this proof. Since $\rho_{t+1}=\rho_tP^\pi$ and $d_H=0$, the operator recursion above yields
\begin{align*}
\int d_t\,\mathrm d\rho_t
&=\int b_t\,\mathrm d\rho_t
  +\int P^\pi d_{t+1}\,\mathrm d\rho_t\\
&=\E_{\mu_t^\pi}\Delta_t^\pi
  +\int d_{t+1}\,\mathrm d\rho_{t+1},\\
\int d_0\,\mathrm d\rho_0
&=\sum_{t=0}^{H-1}\E_{\mu_t^\pi}\Delta_t^\pi
  +\int d_H\,\mathrm d\rho_H\\
&=\sum_{t=0}^{H-1}\E_{\mu_t^\pi}\Delta_t^\pi.
\end{align*}
Because the initial law is shared, the left-hand side is exactly $J(\pi)-\Jh(\pi)$.

For finite signed measure $P-Q$ of total mass zero and bounded $f$, let $a=(\sup f+\inf f)/2$. Then
\[
|(P-Q)f|=|(P-Q)(f-a)|
 \leq 2\TV(P,Q)\|f-a\|_\infty
 =\spn(f)\TV(P,Q).
\]
Applying this inequality to the residual gives
\begin{align*}
|J(\pi)-\Jh(\pi)|
&=\left|\sum_{t=0}^{H-1}\int\Delta_t^\pi\,\mathrm d\mu_t^\pi\right|\\
&\leq\sum_{t=0}^{H-1}\int
 \left(|r-\widehat r|+
 \left|\int\widehat V_{t+1}^\pi\,\mathrm d(P-\widehat P)\right|\right)
 \mathrm d\mu_t^\pi\\
&\leq\sum_{t=0}^{H-1}\int
 \left(|r-\widehat r|+\spn(\widehat V_{t+1}^\pi)\TV(P,\widehat P)\right)
 \mathrm d\mu_t^\pi\\
&=\epsint(\pi).
\end{align*}
No common support of the transition kernels is required.

For the two-policy comparison, let $e_\pi=J(\pi)-\Jh(\pi)$. Then
\begin{align*}
J(\pi')-J(\pi)
&=\Jh(\pi')-\Jh(\pi)+e_{\pi'}-e_\pi\\
&\geq\Jh(\pi')-\Jh(\pi)-|e_{\pi'}|-|e_\pi|\\
&\geq\Jh(\pi')-\Jh(\pi)-\epsint(\pi')-\epsint(\pi).
\end{align*}
The symmetric argument gives the corresponding upper bound. If an optimal $\pi^*$ exists, the planning bound sharpens to
\begin{equation*}
J(\pi^*)-J(\widehat\pi)
\leq\delta_p+\epsint(\pi^*)+\epsint(\widehat\pi).
\end{equation*}
If not, apply this inequality to a sequence approaching $J^*$, retaining the uniform supremum used in the main theorem.

\subsection{Uniform bounds, equality examples, and horizon scaling}
\label{app:sharpness}

If $|r-\widehat r|\leq\epsilon_r$ and $\TV(P,\widehat P)\leq\epsilon_p$ uniformly, then $0\leq\widehat V_{t+1}^\pi\leq(H-t-1)R_b$ implies
\begin{equation*}
|J(\pi)-\Jh(\pi)|
\leq H\epsilon_r+\frac{H(H-1)}2R_b\epsilon_p.
\end{equation*}
Both model rewards and true rewards must obey the stated reward bound.

\begin{proposition}[Two endpoints are necessary]\label{prop:planning-sharp}
For every $\beta\in(0,R_b/2]$, the right-hand side of the endpoint inequality above is attained with $\delta_p=0$.
\end{proposition}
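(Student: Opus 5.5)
The plan is to build an explicit one-step example ($H=1$) in which every inequality in the chain behind Corollary~\ref{cor:planning-regret} holds with equality. Two things must be tight at once. The uniform endpoint bound $|J(\pi)-\Jh(\pi)|\leq H\epsilon_r+\tfrac{H(H-1)}2R_b\epsilon_p$ must be tight; with $H=1$ the transition term vanishes, so this means $|J(\pi)-\Jh(\pi)|=\epsilon_r$. And the planning bound must be tight with \emph{both} error terms, $\sup_{\pi}\epsint(\pi)$ and $\epsint(\widehat\pi)$, contributing in full. This is why the proposition says "two endpoints".

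Concretely, I take one state and two actions $a_0,a_1$, and let $\PiSet$ consist of the two deterministic policies. Set the true rewards to $r(a_0)=0$ and $r(a_1)=2\beta$. This is admissible precisely because $\beta\leq R_b/2$, which explains the range of $\beta$. Set the model rewards to $\widehat r(a_0)=\widehat r(a_1)=\beta$. Transitions are irrelevant at $H=1$; take $\widehat P=P$. Then $\Jh$ is constant on $\PiSet$, so a planner that breaks ties toward $a_0$ outputs $\widehat\pi=a_0$ with $\delta_p=0$.

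The computation is then direct from Theorem~\ref{thm:planning}. Since $\widehat V_1=0$, we have $\epsint(a_0)=|0-\beta|=\beta$ and $\epsint(a_1)=|2\beta-\beta|=\beta$, so $\sup_\pi\epsint(\pi)=\beta$ and $\epsint(\widehat\pi)=\beta$. The true regret is $J^*-J(\widehat\pi)=2\beta=\beta+0+\beta$, which is exactly the right-hand side of \eqref{eq:planning-bound}. Also $|J(\pi)-\Jh(\pi)|=\beta=H\epsilon_r$ for each policy with $\epsilon_r=\beta$, so the endpoint value bound is attained for both policies simultaneously.

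The only delicate point is ensuring that the two residuals carry opposite signs. The model must underestimate the optimal policy and overestimate the chosen one. Only then do the two absolute values in \eqref{eq:comparison} add with no cancellation, which shows that neither term can be dropped. If a version with $H>1$ and a nonzero transition term is also wanted, I would pad the horizon with an absorbing zero-reward state. For the $\epsilon_p$ term, I would reuse the good/bad absorbing construction of Appendix~\ref{app:attribution}, with reward $R_b$ on the good state, and check that the span factor $(H-t-1)R_b$ is realized. I expect that extension, rather than the basic example, to be the main bookkeeping obstacle.
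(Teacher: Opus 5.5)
Your proposal is correct and uses essentially the same construction as the paper: a one-step, two-action task with true rewards $2\beta,0$ and model rewards $\beta,\beta$, with ties broken toward the zero-reward action. Each endpoint error is then $\beta$ with opposite signs, so the regret $2\beta$ attains the bound with $\delta_p=0$; the $H>1$ extension you sketch is not needed for the statement.
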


\begin{proof}
Take a one-step two-action task with true rewards $2\beta,0$ and model rewards $\beta,\beta$. Let the world-model optimizer break the tie toward the second action. Each endpoint value error is $\beta$, and the true regret is $2\beta$. Thus one endpoint error cannot simply be removed from a general planning comparison.
\end{proof}

The transition coefficient in the uniform value bound above is first-order sharp. Consider a deterministic true chain with zero initial reward and reward $R_b$ at each later nonabsorbing state. The model enters a zero-reward absorbing state with independent probability $\epsilon_p$ at each transition. The actual discrepancy is
\begin{equation*}
R_b\sum_{t=1}^{H-1}\left[1-(1-\epsilon_p)^t\right]
=\frac{H(H-1)}2R_b\epsilon_p+O(H^3\epsilon_p^2)
\end{equation*}
as $\epsilon_p\downarrow0$ for fixed $H$. A uniform reward offset attains the linear reward coefficient. These examples establish value-bound scaling, not minimax sharpness of the gradient horizon exponent.

\subsection{Transport to a new policy}
\label{app:transport}

Define the policy-independent envelope
$\ell_t=|r-\widehat r|+(H-t-1)R_b\TV(P,\widehat P)$. We prove the transport inequality the transport inequality above.

\begin{proposition}[Verified-policy transport]\label{prop:trust-transport}
If $\sup_s\TV(\pi(\cdot\mid s),\pi_0(\cdot\mid s))\leq\delta$, then
\begin{equation*}
\epsint(\pi)\leq
\sum_{t=0}^{H-1}\E_{\mu_t^{\pi_0}}\ell_t+
R_b\sum_{t=0}^{H-1}(H-t)\min\{1,(t+1)\delta\}.
\end{equation*}
\end{proposition}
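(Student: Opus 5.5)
The plan is to pass from $\epsint(\pi)$ to a policy-independent integrand, and then to transport its expectation from $\mu_t^{\pi}$ to $\mu_t^{\pi_0}$ at a cost controlled by the total variation between the two occupancies. Since $0\leq\widehat V_{t+1}^\pi\leq(H-t-1)R_b$, we have $\spn(\widehat V_{t+1}^\pi)\leq(H-t-1)R_b$. Hence the integrand in \eqref{eq:intervention-error} is pointwise at most $\ell_t$, and
\[
\epsint(\pi)\leq\sum_{t=0}^{H-1}\E_{\mu_t^\pi}\ell_t .
\]
The envelope $\ell_t$ depends only on $(s,a)$ and not on the policy. This is the reason for introducing it: it lets one function be integrated against two different occupancies.

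Next I bound the range of $\ell_t$. Both reward functions take values in $[0,R_b]$, so $|r-\widehat r|\leq R_b$, and $\TV(P,\widehat P)\leq1$. Therefore $0\leq\ell_t\leq R_b+(H-t-1)R_b=(H-t)R_b$. For any two probability measures $\mu,\mu'$ and any measurable $f$ with values in $[0,M]$, the elementary bound $|\int f\,\mathrm d\mu-\int f\,\mathrm d\mu'|\leq M\,\TV(\mu,\mu')$ holds (the same centering trick as in the proof of Theorem~\ref{thm:planning}). Applying it with $f=\ell_t$ gives
\[
\E_{\mu_t^\pi}\ell_t\leq\E_{\mu_t^{\pi_0}}\ell_t+(H-t)R_b\,\TV(\mu_t^\pi,\mu_t^{\pi_0}) .
\]
It therefore suffices to show that $\TV(\mu_t^\pi,\mu_t^{\pi_0})\leq\min\{1,(t+1)\delta\}$. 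The bound by $1$ is trivial, so the content is the bound $(t+1)\delta$.

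The main step, and the only real obstacle, is this occupancy bound. I would prove it by a hybrid argument on true trajectory prefixes $(s_0,a_0,\ldots,s_t,a_t)$. For $k=0,\ldots,t+1$, let $\sigma^k$ be the history-based policy that uses $\pi$ at times $<k$ and $\pi_0$ at times $\geq k$, and let $p_k$ be the true law of the prefix under $\sigma^k$. Then $p_{t+1}$ is the prefix law under $\pi$ and $p_0$ is the prefix law under $\pi_0$. The laws $p_k$ and $p_{k+1}$ share the initial law, all transitions, and every action kernel except the one at time $k$. So both are obtained by applying the same Markov kernel (the common later action and transition kernels) to the joint law of $(h_{k-1},s_k,a_k)$. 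These two joint laws share the marginal of $(h_{k-1},s_k)$ and differ only in the conditional law of $a_k$, which differs by at most $\delta$ in TV at every state. Hence their TV distance is at most $\delta$, and by data processing $\TV(p_k,p_{k+1})\leq\delta$. Summing over the $t+1$ steps with the triangle inequality gives $\TV(p_0,p_{t+1})\leq(t+1)\delta$. Marginalizing the prefix to $(s_t,a_t)$ cannot increase TV, so $\TV(\mu_t^\pi,\mu_t^{\pi_0})\leq(t+1)\delta$.

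Summing the transported bound over $t=0,\ldots,H-1$ then yields the stated inequality. The only measurability care needed is that the hybrid policies are history-based, which is allowed by the augmented-state convention in Appendix~\ref{app:interface}.
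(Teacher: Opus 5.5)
Your proof is correct. The outer structure is the same as the paper's: dominate the integrand of $\epsint(\pi)$ by $\ell_t$, bound the range of $\ell_t$ by $(H-t)R_b$, and pay $(H-t)R_b\,\TV(\mu_t^\pi,\mu_t^{\pi_0})$ to move the expectation to the reference occupancy. The difference lies in the key occupancy lemma.

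The paper builds one sequential coupling. While states agree it draws actions from a maximal coupling of the two action kernels, and when actions agree it samples transitions synchronously. It then bounds the TV by the probability that at least one of the $t+1$ action draws disagrees.

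You instead telescope through the hybrid policies $\sigma^k$. This needs only two one-line facts: two joint laws with a common first marginal are within the averaged conditional TV of each other, and TV cannot increase under a common Markov kernel. Your route is the more elementary one, since it never constructs a measurable maximal coupling of action kernels.

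The paper's route reuses the same sequential-coupling technique it applies in the proof of Theorem~\ref{thm:gradient}. Its first-disagreement form also localizes naturally: each term is the expected policy discrepancy on the event that the copies still agree, so it is dominated by its expectation under the verified occupancy of $\pi_0$. Your hybrids localize too, each step costing $\E\,\TV(\pi(\cdot\mid s_k),\pi_0(\cdot\mid s_k))$. With your ordering ($\pi$ used before time $k$), though, the weights come from the candidate's state law. Reversing the order of the hybrids would put them on the reference occupancy, which is the more useful form for a transport statement. Under the uniform hypothesis $\sup_s\TV\leq\delta$ used in the proposition, both routes give the same $\min\{1,(t+1)\delta\}$.
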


\begin{proof}
Couple identical initial states. Whenever states agree, use a maximal coupling of the action kernels; conditional action disagreement has probability at most $\delta$. If actions also agree, sample the identical true transition synchronously. After disagreement, any coupling preserving both marginal dynamics suffices.

The event that state--action pairs disagree by time $t$ requires at least one of $t+1$ action disagreements. A union bound, or a first-disagreement decomposition, gives
\[
\TV(\mu_t^\pi,\mu_t^{\pi_0})\leq\min\{1,(t+1)\delta\}.
\]
For $0\leq f\leq B$, the signed-measure argument above gives
$\E_\mu f-\E_\nu f\leq B\TV(\mu,\nu)$.
Apply it to $\ell_t$, whose range lies in $[0,(H-t)R_b]$. Since the integrand defining $\epsint(\pi)$ is bounded by $\ell_t$, summing proves the transport inequality above. Finally,
\[
\sum_{t=0}^{H-1}(H-t)(t+1)=\frac{H(H+1)(H+2)}6.
\]
\end{proof}

A coverage assumption provides another route. If
$\mu_t^\pi\ll\mu_t^{\pi_0}$ with Radon--Nikodym derivative bounded by $c_t$, then nonnegativity gives
\begin{equation*}
\epsint(\pi)\leq\sum_{t=0}^{H-1}c_t\E_{\mu_t^{\pi_0}}\ell_t.
\end{equation*}
This is a stated density-ratio assumption, not something guaranteed by low passive prediction loss. If a candidate policy visits an action absent from the validation support, neither this bound nor unrestricted reuse of an on-policy certificate is justified.

\subsection{Wasserstein and discounted variants}

Suppose $(\mathcal S,d)$ is Polish, both next-state kernels have finite first moments, and $\widehat V_{t+1}^\pi$ is $L_{t+1}$-Lipschitz. The Kantorovich--Rubinstein dual formula gives
\begin{equation*}
|J(\pi)-\Jh(\pi)|
\leq\sum_{t=0}^{H-1}\E_{\mu_t^\pi}
 \left[|r-\widehat r|+L_{t+1}W_1(P,\widehat P)\right].
\end{equation*}
Here $W_1$ is the first Wasserstein distance for metric $d$. The proof substitutes
$|(P-\widehat P)\widehat V_{t+1}^\pi|\leq L_{t+1}W_1(P,\widehat P)$
into the exact identity. The additional Lipschitz assumption is indispensable; a small state-space metric error does not control an arbitrary discontinuous continuation value.

For a stationary discounted problem with $\gamma\in(0,1)$, define
$J^\gamma(\pi)=\E\sum_{t\geq0}\gamma^t r_t$ and
$\mu_\gamma^\pi=(1-\gamma)\sum_{t\geq0}\gamma^t\mu_t^\pi$.
The bounded Bellman resolvent yields
\begin{equation*}
J^\gamma(\pi)-\Jh^\gamma(\pi)
=\frac1{1-\gamma}\E_{\mu_\gamma^\pi}
 [r-\widehat r+\gamma(P-\widehat P)\widehat V^\pi].
\end{equation*}
To see this, iterate
$V-\widehat V=(r-\widehat r)^\pi+
\gamma(P^\pi-\widehat P^\pi)\widehat V+\gamma P^\pi(V-\widehat V)$.
The remaining term after $n$ iterations has sup norm at most
$2\gamma^n R_b/(1-\gamma)$ and vanishes. Hence uniform reward and transition bounds imply
\begin{equation*}
|J^\gamma(\pi)-\Jh^\gamma(\pi)|
\leq\frac{\epsilon_r}{1-\gamma}
 +\frac{\gamma R_b\epsilon_p}{(1-\gamma)^2}.
\end{equation*}
This is a value extension only; the main finite-horizon statistical constants are not reused unchanged in infinite horizon.

\subsection{Conditional certificates and closed-loop composition}
\label{app:closed-loop}
\paragraph{Proof of Theorem~\ref{thm:closed-loop}.}
Fix a reference continuation $\pi_0$. For every admissible history state $s$ at time $t$ and first action $a$, let $\pi^a$ choose $a$ first and follow $\pi_0$ thereafter. This is a policy on the residual horizon $H-t$ with initial law concentrated at $s$. Applying the residual identity to this conditional problem gives
\begin{equation*}
Q_t^0(s,a)-\widehat Q_t^0(s,a)
=\sum_{j=t}^{H-1}\E_{P,\pi^a}\!\left[\Delta_j^{\pi^a}(s_j,a_j)\mid s_t=s\right].
\end{equation*}
Here $\Delta_j^{\pi^a}$ uses the learned continuation value of that same policy. Thus a valid conditional envelope is
\begin{equation*}
b_t(s,a)=\sum_{j=t}^{H-1}\E_{P,\pi^a}
 \left[|r-\widehat r|+\spn(\widehat V_{j+1}^{\pi_0})\TV(P,\widehat P)\mid s_t=s\right].
\end{equation*}
It is an analytical quantity unless its components are certified. Uniform row errors give
\begin{equation*}
b_t(s,a)\leq
\sum_{j=t}^{H-1}[\epsilon_r+(H-j-1)R_b\epsilon_p]
=(H-t)\epsilon_r+\frac{(H-t)(H-t-1)}2R_b\epsilon_p.
\end{equation*}
This proves~\eqref{eq:conditional-radius}. Finite-state audits bound all these conditional problems at once, including histories that a deployed agent has not yet visited.

\paragraph{Contrast-sensitive refinement.}
Let $\lambda_t=\kappa_t(\cdot\mid s)-\pi_{0,t}(\cdot\mid s)$, a signed measure of mass zero, and let $|\lambda_t|$ denote its total-variation measure. Then
\begin{equation*}
|d_t-\widehat d_t|
=\left|\int(Q_t^0-\widehat Q_t^0)\,\mathrm d\lambda_t\right|
\leq\int b_t\,\mathrm d|\lambda_t|
\leq\int b_t\,\mathrm d(\kappa_t+\pi_{0,t}).
\end{equation*}
The first upper bound is the exact support function of the pointwise uncertainty class:
\begin{equation*}
\sup_{|f(a)|\leq b_t(s,a)}
\left|\int f\,\mathrm d\lambda_t\right|
=\int b_t\,\mathrm d|\lambda_t|.
\end{equation*}
To prove equality, take $f=b_t\,\mathrm d\lambda_t/\mathrm d|\lambda_t|$ on the support of $|\lambda_t|$. The Radon--Nikodym derivative is $+1$ or $-1$ almost everywhere, so this choice is admissible and realizes the integral. Sharpness is for the stated value-error class, not a claim that every extremizer is a realizable MDP. For uniform $b_t=b$ the refined radius is $2b\,\TV(\kappa_t,\pi_{0,t})$, and it vanishes when the proposed action law is unchanged. Either radius may be used in Theorem~\ref{thm:closed-loop}.

\paragraph{The performance-difference identity.}
Let $r_t=r(s_t,a_t)$ and write $\E_\nu$ for expectation under the true trajectory law of the gated policy. By conditional expectation and the reference Bellman identity,
\begin{align*}
\E_\nu[r_t+V_{t+1}^{\pi_0}(s_{t+1})\mid s_t]
 &=\int Q_t^0(s_t,a)\nu_t(\mathrm da\mid s_t),
\\
V_t^{\pi_0}(s_t)
 &=\int Q_t^0(s_t,a)\pi_{0,t}(\mathrm da\mid s_t),
\\
\E_\nu[r_t+V_{t+1}^{\pi_0}(s_{t+1})-V_t^{\pi_0}(s_t)]
 &=\E_{\rho_t^\nu}[g_t d_t].
\end{align*}
Summing the left side cancels every intermediate value:
\begin{align*}
J(\nu)-J(\pi_0)
&=\sum_{t=0}^{H-1}\E_\nu
 [r_t+V_{t+1}^{\pi_0}(s_{t+1})-V_t^{\pi_0}(s_t)]\\
&=\sum_{t=0}^{H-1}\int_{\mathcal S}g_t(s)d_t(s)\,\rho_t^\nu(\mathrm ds)\\
&\geq\sum_{t=0}^{H-1}\int_{\mathcal S}g_t(s)
 [\widehat d_t(s)-B_t(s)]\,\rho_t^\nu(\mathrm ds)\\
&\geq\sum_{t=0}^{H-1}\int_{\mathcal S}g_t(s)T_t(s)\,\rho_t^\nu(\mathrm ds)\geq0.
\end{align*}
The first equality uses $V_H^{\pi_0}=0$ and $\E_\rho V_0^{\pi_0}=J(\pi_0)$; the inequalities use $|d_t-\widehat d_t|\leq B_t$, $\widehat d_t\geq S_t-\xi_t$, and $g_t=\ind\{T_t>0\}$. A finite sum of nonnegative integrable terms is strictly positive if one term is positive on an event of positive probability. The result guarantees expected improvement, not samplewise dominance of realized rewards.

\paragraph{Short rollouts and terminal estimates.}
Suppose a conditional model rollout stops after $h\in\{1,\ldots,H-t\}$ steps and bootstraps with a measurable function $\widetilde V$ satisfying
$\|\widetilde V-\widehat V_{t+h}^{\pi_0}\|_\infty\leq\omega$.
Let $\widetilde Q_t^0$ be the resulting expected truncated return. The tower property in the learned world gives
\begin{equation*}
|\widetilde Q_t^0-\widehat Q_t^0|\leq\omega,\qquad
|\widetilde Q_t^0-Q_t^0|\leq b_t+\omega.
\end{equation*}
Use $b_t+\omega$ in~\eqref{eq:local-radius} and add the separate Monte Carlo estimation radius. Without a terminal-value error bound, an arbitrary score for one predicted observation is not a continuation-value certificate. The same applies to truncated language-model reasoning scores and learned critics.

\section{Differential Reliability of Predictive Policy Search}
\label{app:gradient}

The parameter $\theta$ indexes behaviors evaluated through one fixed learned world model. The following results are optional local sufficient conditions for the same policy contrast used in the main text. They require differentiable policies and are not assumed for discrete action selection or frozen language-agent inference.

\subsection{Differential certificates for predictive policy search}
Assume shared rewards and freeze both worlds while differentiating. Policy densities on an open $\Theta\subseteq\mathbb R^d$ have common support, are continuously differentiable, and have score
$\psi_\theta=\nabla_\theta\log\pi_\theta$ with $\|\psi_\theta\|_2\leq G$.
Let $d_t=\E_{\mu_t^{\pi_\theta}}\TV(P,\widehat P)$,
$D_t=\min\{1,\sum_{j<t}d_j\}$, $e=\sum_{t=0}^{H-2}d_t$, and
$C_H=GR_bH(H+1)/2$. Empty sums are zero.
\begin{theorem}[Gradient reliability]\label{thm:gradient}
For $g=\nabla_\theta J(\pi_\theta)$ and $\widehat g=\nabla_\theta\Jh(\pi_\theta)$,
\begin{equation*}
\|g-\widehat g\|_2\leq
2GR_b\sum_{t=0}^{H-1}(t+1)D_t=:b_\theta
\leq2C_H\min\{1,e\}.
\end{equation*}
If $b_\theta<\|\widehat g\|_2$, then
$\langle g,\widehat g\rangle\geq\|\widehat g\|_2(\|\widehat g\|_2-b_\theta)>0$.
\end{theorem}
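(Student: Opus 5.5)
The plan is to write both gradients through the likelihood-ratio (score-function) identity and then reduce the gradient gap to a total-variation distance between true and imagined trajectory prefixes.

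\textbf{Step 1: score-function form of both gradients.} Since both worlds are frozen, the rewards are shared, and only the policy factors of the trajectory density depend on $\theta$, differentiating under the integral gives
\[
g=\sum_{t=0}^{H-1}\E_{p^{\pi_\theta}}\!\bigl[r(s_t,a_t)S_t\bigr],\qquad
\widehat g=\sum_{t=0}^{H-1}\E_{\widehat p^{\pi_\theta}}\!\bigl[r(s_t,a_t)S_t\bigr],
\]
where $S_t=\sum_{j\le t}\psi_\theta(a_j\mid s_j)$ is the accumulated score. Two facts justify the exchange of derivative and integral: the horizon is finite, and the bound $\|\psi_\theta\|_2\le G$ supplies a dominating function. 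Causality lets me drop the scores of actions taken after time $t$ in the $t$-th term. The key consequence is that the $t$-th term depends only on the law of the prefix $(s_0,a_0,\dots,s_t,a_t)$.

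\textbf{Step 2: reduce each term to a prefix total-variation distance.} The function $f_t=r(s_t,a_t)S_t$ satisfies $\|f_t\|_2\le R_b(t+1)G$. For a vector-valued bounded $f$, I apply the signed-measure argument from the proof of Theorem~\ref{thm:planning} to $\langle u,f\rangle$ for each unit vector $u$ and then take the supremum over $u$. This gives
\[
\|\E_p f-\E_q f\|_2\le 2\sup\|f\|_2\,\TV(p,q).
\]
Hence the $t$-th term of $g-\widehat g$ is at most $2GR_b(t+1)\,\TV(p_t,\widehat p_t)$, where $p_t$ and $\widehat p_t$ are the true and imagined prefix laws.

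\textbf{Step 3: bound the prefix distance by $D_t$.} This is the step I expect to be the main obstacle. The claim is $\TV(p_t,\widehat p_t)\le D_t$, and I will prove it with a step-by-step coupling in the spirit of Proposition~\ref{prop:trust-transport}.
\begin{itemize}[leftmargin=*,itemsep=1pt,topsep=2pt]
\item Start both chains at a common initial state.
\item Whenever the two histories agree, draw the same action from $\pi_\theta$ in both chains.
\item Then draw the next states from a maximal coupling of $P(\cdot\mid s,a)$ and $\widehat P(\cdot\mid s,a)$.
\item After the first disagreement, evolve the two chains independently, which preserves both marginals.
\end{itemize}
Under this coupling, the probability that the first disagreement occurs at transition $j$ is at most the expectation of $\TV(P,\widehat P)(s_j,a_j)$ under the true law on the event of agreement so far. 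That expectation is at most $d_j$. A union over $j<t$, together with the trivial bound $1$, gives $D_t$. The delicate points are the following.
\begin{itemize}[leftmargin=*,itemsep=1pt,topsep=2pt]
\item The maximal couplings must be chosen measurably in $(s,a)$; standard Borel spaces permit this.
\item The first-disagreement probabilities must be expressed under the true occupancy, which matches the definition of $d_j$.
\end{itemize}
If the coupling proves awkward to write rigorously, I would fall back on a hybrid telescoping argument: switch the kernel from $\widehat P$ to $P$ one step at a time and bound each switch by $d_j$.

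\textbf{Step 4: assemble the bounds.} Summing Step~2 with Step~3 gives $\|g-\widehat g\|_2\le b_\theta$. For every $t\le H-1$, the sum $\sum_{j<t}d_j$ only involves $j\le H-2$, so it is at most $e$, and therefore $D_t\le\min\{1,e\}$. Using $\sum_{t=0}^{H-1}(t+1)=H(H+1)/2$ then yields $b_\theta\le 2C_H\min\{1,e\}$. Finally, the inner-product claim follows from
\[
\langle g,\widehat g\rangle=\|\widehat g\|_2^2+\langle g-\widehat g,\widehat g\rangle
\]
together with Cauchy--Schwarz, which gives $\langle g,\widehat g\rangle\ge\|\widehat g\|_2(\|\widehat g\|_2-b_\theta)$. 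This lower bound is positive whenever $b_\theta<\|\widehat g\|_2$.
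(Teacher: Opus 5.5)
Your proposal is correct and follows essentially the same route as the paper: the causally centered score-function representation $g=\sum_t\E_P[S_t r(s_t,a_t)]$, the vector total-variation bound obtained by taking suprema over unit vectors, a sequential maximal coupling whose first-disagreement probabilities are bounded by $d_j$ under the true occupancy, and Cauchy--Schwarz for the inner-product claim. The paper is more explicit only at the differentiation step: it gets domination from the likelihood-ratio bound $\pi_{\theta+v}/\pi_\theta\leq\exp(G\|v\|_2)$ on a compact parameter ball, which is the precise form of your remark that the score bound supplies a dominating function.
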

\paragraph{Proof of Theorem~\ref{thm:gradient}.}
Let $Q_t,\widehat Q_t$ denote the true and learned laws of the prefix through $(s_t,a_t)$, and set $S_t=\sum_{j=0}^t\psi_\theta(s_j,a_j)$. Differentiation under the trajectory integral and score centering give
\begin{equation*}
g=\sum_{t=0}^{H-1}\E_P[S_t r(s_t,a_t)],
\qquad
\widehat g=\sum_{t=0}^{H-1}\E_{\widehat P}[S_t r(s_t,a_t)].
\end{equation*}
Sequential coupling gives $\TV(Q_t,\widehat Q_t)\leq D_t$. Thus
\begin{equation*}
\|g-\widehat g\|_2\leq
\sum_{t=0}^{H-1}2\|S_t r(s_t,a_t)\|_\infty\TV(Q_t,\widehat Q_t)
\leq b_\theta,
\end{equation*}
where $\|F\|_\infty=\sup\|F\|_2$ for a vector-valued function. Cauchy--Schwarz and the gradient-ball geometry yield, for $b_\theta<\|\widehat g\|_2$,
\begin{equation*}
\langle g,\widehat g\rangle
 \geq\|\widehat g\|_2(\|\widehat g\|_2-b_\theta)>0,
\qquad
\cos(g,\widehat g)\geq
\sqrt{1-\frac{b_\theta^2}{\|\widehat g\|_2^2}}.
\end{equation*}
Here $\cos(g,\widehat g)=\langle g,\widehat g\rangle/(\|g\|_2\|\widehat g\|_2)$.
The angular bound is attained in dimension at least two within the gradient-error ball. The following subsections supply the domination, coupling, and extremal calculations; this is geometric sharpness, not MDP minimax optimality.

\begin{theorem}[Local improvement certificate]\label{cor:safe-update}
Suppose $\|\widetilde g-\widehat g\|_2\leq\xi$ and $J$ is $L$-smooth along a feasible step $\theta^+=\theta+\eta\widetilde g$, with $L,\eta>0$. Set $b=b_\theta+\xi$ and $h=\|\widetilde g\|_2$. Then
\begin{equation*}
J(\pi_{\theta^+})-J(\pi_\theta)
\geq\eta h[(1-L\eta/2)h-b].
\end{equation*}
If $h>b$, the feasible choice $\eta=(h-b)/(Lh)$ guarantees $(h-b)^2/(2L)$ improvement. If $h\leq b$, no direction has positive worst-case first-order gain over the gradient ball.
\end{theorem}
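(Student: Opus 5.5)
The plan combines the standard descent lemma for $L$-smooth functions with the gradient-error ball supplied by Theorem~\ref{thm:gradient}. First, by the triangle inequality, $\|g-\widetilde g\|_2\leq\|g-\widehat g\|_2+\|\widehat g-\widetilde g\|_2\leq b_\theta+\xi=b$. So the true gradient $g$ lies in the closed ball of radius $b$ around the sampled direction $\widetilde g$.

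Second, apply $L$-smoothness along the feasible segment from $\theta$ to $\theta^+=\theta+\eta\widetilde g$. This gives
\[
J(\pi_{\theta^+})-J(\pi_\theta)\geq\eta\langle g,\widetilde g\rangle-\tfrac{L\eta^2}{2}h^2 .
\]
The segment must stay in the region where smoothness is assumed; I read "feasible step" as guaranteeing this. Writing $g=\widetilde g+(g-\widetilde g)$ and using Cauchy--Schwarz,
\[
\langle g,\widetilde g\rangle\geq h^2-bh .
\]
Substituting yields $\eta h[(1-L\eta/2)h-b]$, which is the stated bound.

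Third, for $h>b$, view the bound as a concave quadratic in $\eta$, namely $\eta h(h-b)-\tfrac{L}{2}\eta^2h^2$. It is maximized at $\eta=(h-b)/(Lh)>0$, where its value is $(h-b)^2/(2L)$. This is routine algebra, and the stated step is assumed feasible.

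Finally, for the converse claim with $h\leq b$, consider any unit direction $v$. The worst-case first-order gain over the ball is $\inf_{\|g-\widetilde g\|\leq b}\langle g,v\rangle=\langle\widetilde g,v\rangle-b$. This is at most $h-b\leq0$, with the infimum attained at $g=\widetilde g-bv$. Hence no direction has positive worst-case first-order gain.

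The main obstacle is interpretive rather than technical. Smoothness is only assumed "along a feasible step", so the descent lemma must be justified via the integral form $J(\theta^+)-J(\theta)=\int_0^1\langle\nabla J(\theta+s\eta\widetilde g),\eta\widetilde g\rangle\,ds$, with a Lipschitz gradient on the segment. I would state that reading explicitly before proceeding.
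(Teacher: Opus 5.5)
Your proof is correct and follows the paper's argument essentially step for step. Both proofs use the triangle inequality to place $g$ in the radius-$b$ ball around $\widetilde g$, the smoothness lower bound with Cauchy--Schwarz, maximization of the concave quadratic in $\eta$, and the worst-case support value $\langle\widetilde g,v\rangle-b\|v\|_2$ for the case $h\leq b$. The paper phrases that last case as ``zero belongs to the gradient ball.'' It also optimizes over arbitrary displacements $v$, showing that alignment with $\widetilde g$ at length $[h-b]_+/L$ is optimal; this is a sharpness remark beyond what the statement requires.
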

\paragraph{Proof of Theorem~\ref{cor:safe-update}.}
By the triangle inequality, $\|g-\widetilde g\|_2\leq b$. Smoothness therefore gives
\begin{equation*}
J(\pi_{\theta^+})-J(\pi_\theta)
\geq\eta\langle g,\widetilde g\rangle-\tfrac L2\eta^2h^2
\geq\eta h(h-b)-\tfrac L2\eta^2h^2.
\end{equation*}
For a feasible displacement $v$, the exact supporting lower model is
\begin{equation*}
\inf_{\|q-\widetilde g\|_2\leq b}
 \left\{\langle q,v\rangle-\frac L2\|v\|_2^2\right\}
=\langle\widetilde g,v\rangle-b\|v\|_2-\frac L2\|v\|_2^2.
\end{equation*}
For $v\ne0$, equality holds at $q=\widetilde g-bv/\|v\|_2$. At length $t=\|v\|_2$, the maximum is $(h-b)t-Lt^2/2$, attained by alignment with $\widetilde g$ when $h>0$. Its maximizing length is $[h-b]_+/L$, where $[x]_+=\max\{x,0\}$. If $h\leq b$, zero belongs to the gradient ball. Quadratic objectives attain the lower model; sharpness is relative to this local uncertainty class.
Appendix~\ref{app:robust} proves sharpness and the general-displacement certificate; Appendix~\ref{app:implementation} states the requirements for practical optimizers.

We record the regularity and prefix constants used in this section:
\begin{equation*}
\psi_\theta(s,a)=\nabla_\theta\log\pi_\theta(a\mid s),
\qquad
\|\psi_\theta(s,a)\|_2\leq G.
\end{equation*}
\begin{equation*}
\begin{aligned}
d_t&=\E_{\mu_t^{\pi_\theta}}\TV(P,\widehat P),
&e&=\sum_{t=0}^{H-2}d_t,\\
D_t&=\min\!\left\{1,\sum_{j<t}d_j\right\},
&C_H&=\frac{GR_bH(H+1)}2,
\end{aligned}
\end{equation*}
With $S_t=\sum_{j=0}^t\psi_\theta(s_j,a_j)$, the common trajectory representation is given by the common trajectory representation above.

\subsection{Differentiation under the trajectory integral}

For each state, use a common sigma-finite action reference measure and a strictly positive policy density on parameter-independent support. Fix $\theta$ and a compact ball around it contained in $\Theta$, with the bound the uniform score condition above throughout that ball. The mean-value theorem gives
\begin{equation*}
\frac{\pi_{\theta+v}(a\mid s)}{\pi_\theta(a\mid s)}
\leq\exp(G\|v\|_2).
\end{equation*}
Let $Q_\theta$ be the trajectory law in one fixed world and $Z_v=\mathrm dQ_{\theta+v}/\mathrm dQ_\theta$. For $\|v\|_2\leq a$ within the chosen ball,
\begin{align*}
Z_v&=\prod_{j=0}^{H-1}\frac{\pi_{\theta+v}(a_j\mid s_j)}{\pi_\theta(a_j\mid s_j)}
\leq e^{HGa},\\
\|\nabla_v Z_v\|_2&=\left\|Z_v\sum_{j=0}^{H-1}\psi_{\theta+v}(s_j,a_j)\right\|_2
\leq HG e^{HGa}.
\end{align*}
The return is bounded by $HR_b$. Dominated convergence permits differentiation under $Q_\theta$. This argument is applied separately in each world and requires no likelihood ratio between $P$ and $\widehat P$.

Differentiating policy normalization gives
$\int\psi_\theta(s,a)\pi_\theta(\mathrm da\mid s)=0$.
Let $r_t=r(s_t,a_t)$. The likelihood-ratio identity initially reads
\[
\nabla J=\E_P\left[\left(\sum_{j=0}^{H-1}\psi_\theta(s_j,a_j)\right)
                         \left(\sum_{t=0}^{H-1}r_t\right)\right].
\]
Let $\mathcal H_j=\sigma(s_0,a_0,\ldots,a_{j-1},s_j)$ be the history before action $a_j$. For $j>t$, $r_t$ is $\mathcal H_j$-measurable, and
\begin{equation*}
\E[\psi_\theta(s_j,a_j)r_t]
=\E\!\left[r_t\E[\psi_\theta(s_j,a_j)\mid\mathcal H_j]\right]=0.
\end{equation*}
Removing these terms gives
\begin{equation*}
\nabla J=\E_P F_\theta,\qquad
F_\theta=\sum_{t=0}^{H-1}S_t r_t,\qquad
S_t=\sum_{j=0}^t\psi_\theta(s_j,a_j).
\end{equation*}
Since $\|S_t\|_2\leq(t+1)G$, $\|F_\theta\|_2\leq C_H$. The same functional represents the imagined gradient when rewards are shared.

\subsection{Sequential coupling without a support assumption}

For standard Borel kernels a measurable maximal coupling can be constructed from their common part. At a state--action pair let $\nu=P+\widehat P$, write their densities relative to $\nu$ as $p,\widehat p$, and use $\min(p,\widehat p)\nu$ as the common subprobability. Its mass is $1-\TV(P,\widehat P)$. Sample identically from this part and couple the residual parts arbitrarily. This gives both required marginals and disagreement probability equal to the local total variation.

Run that coupling only while the histories agree, using identical policy draws on common histories. Let $E_j$ be first disagreement at transition $j$. If $A_j$ is agreement through $(s_j,a_j)$, then
\begin{equation*}
\Prob(E_j)=
\E\!\left[\ind\{A_j\}\TV\{P(\cdot\mid s_j,a_j),\widehat P(\cdot\mid s_j,a_j)\}\right]
\leq d_j.
\end{equation*}
Let $Q_t,\widehat Q_t$ be the prefix laws through $(s_t,a_t)$. The first-disagreement events are disjoint, so
\begin{equation*}
\TV(Q_t,\widehat Q_t)\leq\Prob\!\left(\bigcup_{j<t}E_j\right)
=\sum_{j<t}\Prob(E_j)\leq\sum_{j<t}d_j.
\end{equation*}
Combining with $\TV(Q_t,\widehat Q_t)\leq1$ gives $D_t$.

For any vector-valued $F$ with $\|F\|_2\leq C$,
\[
\|\E_PF-\E_QF\|_2
=\sup_{\|v\|_2=1}|\E_P\langle v,F\rangle-\E_Q\langle v,F\rangle|
\leq2C\TV(P,Q).
\]
Using $\|S_t r_t\|_\infty\leq(t+1)GR_b$,
\begin{align*}
\|g-\widehat g\|_2
&\leq\sum_{t=0}^{H-1}\left\|\int S_t r_t\,\mathrm d(Q_t-\widehat Q_t)\right\|_2,\\
&\leq2GR_b\sum_{t=0}^{H-1}(t+1)D_t
\leq2C_H\min\{1,e\}.
\end{align*}
With uniform local error $\epsilon$,
\begin{equation*}
b_\theta\leq 2GR_b\sum_{t=0}^{H-1}(t+1)\min\{1,t\epsilon\}.
\end{equation*}
For small $H\epsilon$, this is $2GR_b\epsilon H(H-1)(H+1)/3$. We do not claim this cubic horizon dependence is minimax optimal over policy-gradient MDPs.

\subsection{Prediction error and comparison-specific sensitivity}
\label{app:separation}

\paragraph{Proof of Corollary~\ref{cor:absolute}.}
Fix $\epsilon\in(0,1/2]$ and a two-step task with initial actions $a_0,a_1$, terminal states $g,b$, zero initial reward, and terminal reward $c\in(0,R_b]$ at $g$.  In the true world let
\[
P(g\mid a_0)=\frac12,
\qquad
P(g\mid a_1)=\frac12-\frac\epsilon2,
\]
whereas the learned world model satisfies
\[
\widehat P(g\mid a_0)=\frac12,
\qquad
\widehat P(g\mid a_1)=\frac12+\frac\epsilon2.
\]
The remaining probability is assigned to $b$, and the two worlds agree elsewhere.  Hence
\[
\sup_a\TV\!\left(P(\cdot\mid a),\widehat P(\cdot\mid a)\right)=\epsilon,
\]
but the learned world model assigns comparison $c\epsilon/2$ to replacing $a_0$ by $a_1$, while the true comparison is $-c\epsilon/2$.  Thus arbitrarily small uniform transition error reverses the action ranking.

Parameterize the two-action construction of Corollary~\ref{cor:absolute} by $\pi_\theta(a_1)=\sigma(\theta)=(1+\exp(-\theta))^{-1}$. Then
\begin{equation*}
J(\theta)=\frac c2-\frac{c\epsilon}2\sigma(\theta),\qquad
\Jh(\theta)=\frac c2+\frac{c\epsilon}2\sigma(\theta).
\end{equation*}

The same construction has a differential consequence. The learned world model ranks $a_1$ above $a_0$ while the true world ranks them in the opposite order; hence world-model-greedy planning is wrong without any parameter update. Increasing the logistic probability of $a_1$ strictly decreases the true return, so optimizing the predictive surrogate follows the wrong direction.

For the converse, let the next state be $(y,v)\in\{0,1\}^2$ and the terminal reward be $cy$, with $c\in(0,R_b]$. At the initial state, actions $a_0,a_1$ produce $y=1$ with probabilities $1/4,3/4$ in both worlds. The true world sets $v=0$ and the learned world sets $v=1$. Terminal dynamics and rewards agree. The two initial next-state laws have disjoint supports, so their total variation is one. Nevertheless every policy has the same value in both worlds: only its initial action affects reward, and the terminal action is immaterial. With logistic initial action probability,
\begin{equation*}
J(\theta)=\Jh(\theta)=c[1/4+\sigma(\theta)/2].
\end{equation*}
Values, action ordering, and policy gradients are all exact despite maximal transition discrepancy. This is why a total-variation-based sufficient gate need not characterize every valid use.

Qualification is comparison-specific. Consider three actions with terminal success probabilities
\begin{equation*}
p_0=1/4,\qquad
p_1\in[1/4-\epsilon/2,1/4+\epsilon/2],\qquad
p_2=3/4,
\end{equation*}
where $0<\epsilon\leq1/4$. Supply the upper-endpoint model and take the lower endpoint as the true world. Replacing $a_0$ by $a_2$ has contrast $c/2$ in every plausible world; replacing $a_0$ by $a_1$ has contrast $-c\epsilon/2$ in truth and $c\epsilon/2$ in the model. Thus a certificate for the first comparison does not certify the second, even within the same task and with the same predictive model.

\subsection{Why value equality and joint parameter updates are insufficient}

In a two-step task, initial actions $a_1,a_0$, in this order, lead to terminal reward $c$ with true success probabilities $1,0$ and learned success probabilities $0,1$. For the logistic policy $\pi_\theta(a_1)=\sigma(\theta)$,
\begin{equation*}
J(\theta)=c\sigma(\theta),\qquad
\Jh(\theta)=c[1-\sigma(\theta)].
\end{equation*}
At $\theta=0$ the values both equal $c/2$, but the gradients are $c/4$ and $-c/4$. Policy scores are bounded by one. Equality of scalar values at a point therefore provides no gradient certificate, even with bounded rewards and regular policies.

Freezing the world is also essential. Consider a one-action bandit with true reward $1/2$ and model reward $\widehat r_\theta=1/2+\theta$ for $|\theta|<1/4$. At $\theta=0$ model error is zero and the policy cannot change the true return. Nevertheless the total derivative of the model return is one. This derivative changes the model, not the policy. The main gradient theorem controls derivatives through a fixed predictive world; it does not certify unrestricted shared-parameter co-training. Shared-parameter methods require separating these derivative paths or proving an additional bound.

\subsection{Reward-model error}
\label{app:reward-error}

Let $\widehat r$ also be fixed with respect to $\theta$, and bounded in $[0,R_b]$. The learned gradient now uses
$\widehat F_\theta=\sum_t S_t\widehat r(s_t,a_t)$.
Adding and subtracting $\E_{\widehat P}F_\theta$ gives
\begin{equation*}
\|g-\widehat g\|_2
\leq 2GR_b\sum_{t=0}^{H-1}(t+1)D_t
 +G\sum_{t=0}^{H-1}(t+1)\E_{\widehat\mu_t^{\pi_\theta}}|r-\widehat r|.
\end{equation*}
Indeed, the first difference uses the common true-reward functional and prefix coupling; the second is bounded pointwise by
$\sum_t(t+1)G|r-\widehat r|$ and integrated under the learned occupancy. A uniform reward error $\epsilon_r$ yields the additional radius
$G\epsilon_r H(H+1)/2$.
Every gate must include this additional radius when rewards are learned. A sampled reward discrepancy is not a uniform bound unless separately certified.

\subsection{Sharp angular geometry}
\label{app:geometry}

We prove the sharp angular consequence the angular inequality above for Theorem~\ref{thm:gradient}.
Here $\cos(g,\widehat g)=\langle g,\widehat g\rangle/(\|g\|_2\|\widehat g\|_2)$.

Let $v\ne0$, $h=\|v\|_2$, and suppose $\|q-v\|_2\leq b<h$. Put $u=v/h$ and decompose $q=xu+y$ with $y\perp u$. The constraint is
$(x-h)^2+\|y\|_2^2\leq b^2$, so $x>0$. The largest possible squared tangent of the angle is
\[
\max_{x\in[h-b,h+b]}
 \frac{b^2-(x-h)^2}{x^2}.
\]
Writing $F(x)=[b^2-(x-h)^2]/x^2$, we have
\begin{equation*}
F'(x)=\frac{2(h^2-b^2-hx)}{x^3},\qquad
x_*=(h^2-b^2)/h,\qquad F(x_*)=\frac{b^2}{h^2-b^2}.
\end{equation*}
Since $x_*$ lies in $[h-b,h+b]$ and the derivative changes from positive to negative there,
\begin{equation*}
\frac{\langle q,v\rangle}{\|q\|_2\|v\|_2}
\geq\frac{\sqrt{h^2-b^2}}h.
\end{equation*}
For dimension at least two and $b>0$, choose a perpendicular $y$ attaining the boundary to obtain equality. If $b=0$ or the dimension is one, the cosine is one; the displayed lower bound still holds. If $b\geq h$, the uncertainty ball includes zero, and for $b>h$ it includes vectors pointing against $v$. This proves the exact robust threshold for an acute-angle guarantee.

\subsection{Optimal robust displacement and sharpness}
\label{app:robust}

For every displacement $v$ we prove the exact robust-support formula the robust-support identity above.

Suppose a differentiable objective has unknown gradient in the ball
$\{q:\|q-\widetilde g\|_2\leq b\}$ and satisfies the local smooth lower model
$J(\theta+v)-J(\theta)\geq\langle q,v\rangle-L\|v\|_2^2/2$.
For $v\ne0$, the adverse gradient is
$q=\widetilde g-bv/\|v\|_2$. Hence the robust-support identity above holds exactly; for $v=0$ both sides are zero.

Put $h=\|\widetilde g\|_2$ and let $\Phi(v)$ denote the right-hand side of the robust-support identity above. Cauchy--Schwarz and one-variable maximization give
\begin{align*}
\sup_{\|v\|_2=t}\Phi(v)&=(h-b)t-\tfrac L2t^2,\\
\sup_{v\in\mathbb R^d}\Phi(v)&=\sup_{t\geq0}\{(h-b)t-\tfrac L2t^2\}
=\frac{[h-b]_+^2}{2L}.
\end{align*}
For $h>b$, the maximizer is $v_*=(h-b)\widetilde g/(Lh)$; for $h\leq b$, it is $v_*=0$. The unconstrained maximum is attainable as a certified step only if the segment from $\theta$ to $\theta+v_*$ is feasible and lies in the smoothness region.

For a specified $v$, the quadratic
\begin{equation*}
f(\theta+w)=f(\theta)+\langle q,w\rangle-\frac L2\|w\|_2^2
\end{equation*}
attains the lower model. These local quadratics establish sharpness given gradient-ball and smoothness information. They are not asserted to be globally bounded MDP returns; a tighter guarantee exploiting further MDP structure is not ruled out.

A sufficient smoothness bound is available when policies are twice continuously differentiable and
$\|\nabla_\theta^2\log\pi_\theta(a\mid s)\|_2\leq K$ uniformly on a convex region. Differentiating the prefix expression for expected reward yields
\[
\nabla_\theta^2 J=
\E_P\sum_{t=0}^{H-1}r_t
 \left[S_tS_t^\top+\sum_{j=0}^t\nabla_\theta^2\log\pi_\theta(a_j\mid s_j)\right].
\]
The domination argument used for the first derivative extends using these bounds. Therefore
\begin{equation*}
L=R_b\left[
G^2\frac{H(H+1)(2H+1)}6+
K\frac{H(H+1)}2\right]
\end{equation*}
is sufficient. A smaller valid local $L$ improves the gate, but a numerical curvature estimate alone is not a guaranteed upper bound.

\subsection{Actual optimizers and latent interfaces}
\label{app:implementation}

The comparison certificate applies to a realized output of any predictive search procedure. The differential condition is useful when that procedure optimizes a parameterized behavior. It validates the proposed displacement; it does not require a synthetic-data training pipeline.

For differentiable policies, a proposed displacement need not be parallel to a policy-gradient estimate. If $\|g-\widetilde g\|_2\leq b$ and the proposed segment is $L$-smooth, then for every feasible $v$,
\begin{equation*}
J(\theta+v)-J(\theta)
\geq\langle\widetilde g,v\rangle-b\|v\|_2-\frac L2\|v\|_2^2.
\end{equation*}
This follows by inserting the gradient-ball support function into the smoothness inequality. It can certify a preconditioned, clipped, or otherwise proposed optimizer displacement without identifying it with the exact gradient. The estimate must target the score-function gradient of the frozen imagined world. If a practical estimate has an additional bias bounded by $\zeta$, replace $b$ by $b_\theta+\xi+\zeta$. Merely naming a critic or using automatic differentiation does not establish such a bound.

A common history interface need not require reconstructing every observation. The following bridge states precisely what is sufficient for a fixed latent policy interface.

\begin{proposition}[History-to-latent reliability bridge]\label{prop:latent}
Let $h$ be a true observable history, $s=f(h)$ a measurable, parameter-independent representation, and let the policy use only $s$. Start the latent model at the pushforward of the true initial history law. Write $K(\cdot\mid h,a)$ for the true conditional law of the next representation and $\widehat P(\cdot\mid f(h),a)$ for its learned prediction. Suppose, uniformly in admissible histories and actions,
\begin{equation*}
|r(h,a)-\widehat r(f(h),a)|\leq\epsilon_r,\qquad
\TV\{K(\cdot\mid h,a),\widehat P(\cdot\mid f(h),a)\}\leq\epsilon_p.
\end{equation*}
Then every such policy satisfies
\begin{equation*}
|J-\Jh|\leq H\epsilon_r+
 \frac{H(H-1)}2R_b\epsilon_p.
\end{equation*}
Under the same fixed-interface score regularity as Theorem~\ref{thm:gradient}, its gradient discrepancy is at most
\begin{equation*}
\|g-\widehat g\|_2
\leq 2GR_b\sum_{t=0}^{H-1}(t+1)\min\{1,t\epsilon_p\}
 +\frac{GH(H+1)}2\epsilon_r.
\end{equation*}
The true representation process need not be Markov.
\end{proposition}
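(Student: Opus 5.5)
The plan is to run both earlier arguments on an auxiliary process whose state is the true history, and never on the latent process itself. For a policy $\pi$ that reads only $s=f(h)$, I define the true occupancy of $(h_t,a_t)$ under $\pi$. The learned latent model is a Markov chain on $s$. I lift it to a chain on histories: the learned kernel from $(h,a)$ draws the next latent $s'\sim\widehat P(\cdot\mid f(h),a)$ and pays reward $\widehat r(f(h),a)$. The policy sees only latents, and the learned reward and kernel depend on $h$ only through $f(h)$. So the learned return of $\pi$ can be computed from a latent-valued Markov chain started at the pushforward initial law, and it equals $\Jh$. On the true side, $J$ is the expectation of rewards under the true history process. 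The only true-world quantity that enters a one-step comparison is the conditional law $K(\cdot\mid h,a)$ of the next representation. The policy and all learned continuation values depend only on that representation, so no Markov property of $f(h_t)$ is needed.

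For the value bound I would follow the telescoping proof of Theorem~\ref{thm:planning}. I take the learned continuation $\widehat V_{t+1}^\pi$, a function of the latent only with range in $[0,(H-t-1)R_b]$, and write the Bellman residual at a true history as
\[
r(h,a)-\widehat r(f(h),a)+\int\widehat V_{t+1}^\pi\,\mathrm d\bigl(K(\cdot\mid h,a)-\widehat P(\cdot\mid f(h),a)\bigr).
\]
I then telescope along the true history law. The quantity telescoped is $\E[\widehat V_t^\pi(f(h_t))]$ under the true process, together with the accumulated true rewards. By the tower property, $\E[\widehat V_{t+1}^\pi(f(h_{t+1}))\mid h_t,a_t]=\int\widehat V_{t+1}^\pi\,\mathrm dK(\cdot\mid h_t,a_t)$. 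This gives
\[
J-\Jh=\sum_{t=0}^{H-1}\E[\Delta_t],
\]
where $\Delta_t$ denotes this residual evaluated at $(h_t,a_t)$. The initial term matches because the latent model starts at the pushforward law. I then apply the span inequality $|(P-Q)f|\le\spn(f)\TV(P,Q)$. This bounds each summand by $\epsilon_r+(H-t-1)R_b\epsilon_p$, and summing gives $H\epsilon_r+\tfrac{H(H-1)}2R_b\epsilon_p$.

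For the gradient I would use the prefix representation from the proof of Theorem~\ref{thm:gradient}, written in terms of latents $s_t=f(h_t)$ and actions. The relevant quantity is $S_t$ times the reward, evaluated under the true prefix law of $(s_0,a_0,\dots,s_t,a_t)$ and under the learned one. Score centering needs only that $a_j$ is drawn from $\pi_\theta(\cdot\mid s_j)$ given the full past, which holds on both sides. The split between transition and reward error follows Appendix~\ref{app:reward-error}. First I compare learned rewards under the two prefix laws, which yields the $2GR_b\sum_t(t+1)D_t$ term. Then I compare true and learned rewards under the true law. There $|r(h,a)-\widehat r(f(h),a)|\le\epsilon_r$ pointwise in the history, giving $G\epsilon_r\sum_t(t+1)=GH(H+1)\epsilon_r/2$.

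The main obstacle is the sequential coupling. The true latent prefix is not Markov, so I would couple the true history process with the learned latent process directly. While the two latent prefixes agree, draw the true next history together with its latent from a maximal coupling of $K(\cdot\mid h,a)$ and $\widehat P(\cdot\mid f(h),a)$ (disintegrating the true next history given its latent). Actions are drawn identically because the policy sees only agreeing latents. The first-disagreement probability at each step is then at most $\epsilon_p$, so the latent prefix laws satisfy $\TV\le\min\{1,t\epsilon_p\}$. Substituting this for $D_t$ gives the stated bound. The measurability of this joint construction on standard Borel spaces is the delicate point, and I would handle it exactly as in the coupling subsection.
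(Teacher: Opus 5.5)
Your proposal is correct and follows essentially the same route as the paper. The value bound telescopes the lifted learned value $\widehat V_t(f(h_t))$ along the true history process, with residual $(r-\widehat r)+(K-\widehat P)\widehat V_{t+1}$. The gradient bound uses the learned reward functional on both sides, paying $G\epsilon_r\sum_t(t+1)$ for reward replacement under the true law, and bounds the latent prefix discrepancy by a maximal coupling of $K$ and $\widehat P$ together with a disintegration of the next true history given its representation.
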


\begin{proof}
Lift the learned value to histories as $\widehat V_t(f(h))$. Subtract its learned Bellman recursion from the true history recursion and add the true conditional expectation of $\widehat V_{t+1}(f(h'))$. The residual is the reward difference plus
$(K-\widehat P)\widehat V_{t+1}$, bounded by
$\epsilon_r+(H-t-1)R_b\epsilon_p$. The remaining true conditional difference telescopes exactly as in Theorem~\ref{thm:planning}. Summing proves the latent value bound above.

For gradients, keep the complete true history in the construction while coupling the latent trajectories. On agreement of latent prefixes, the two action laws coincide. Couple the true next representation, conditional on its full history, with the learned latent transition by maximal coupling; disagreement probability is at most $\epsilon_p$. A regular conditional distribution of the next true history given its representation preserves the full true marginal. Standard Borel assumptions ensure these conditional kernels exist. Once prefixes separate, continue with any marginal-preserving coupling. The latent prefix discrepancy through time $t$ is at most $\min\{1,t\epsilon_p\}$.

Use the learned reward functional in both worlds. The true gradient's reward-replacement error is at most
$G\epsilon_r\sum_t(t+1)$. The remaining common functional depends only on latent prefixes; the score is $\nabla\log\pi_\theta(a\mid f(h))$ in the true process and the same function of $(s,a)$ in the learned process. Applying the prefix argument proves the latent gradient bound above. Parameter independence of $f$ and the supplied world is required for this common functional.
\end{proof}

The premise includes representation aliasing: histories with the same representation must all have predictions close enough to the supplied kernel. Passive reconstruction loss does not establish it. If the representation admits an exact Markov kernel, finite-state row verification can be applied to that kernel; otherwise a latent-row average needs an additional uniform aliasing bound. Encoder updates or shared-parameter model updates require rechecking the interface and derivative premises. These distinctions allow the theory to be used with learned representations without assuming their sufficiency by definition.

\subsection{Dimension-free imagined-gradient sampling}

A sampling radius for the differential certificate is
\begin{equation*}
\xi=\frac{C_H}{\sqrt N}\left(1+\sqrt{2\log(1/\delta)}\right)
\end{equation*}
where $N$ is the number of independent imagined trajectories and $\delta\in(0,1)$.

\begin{proposition}[Bounded-vector sampling]\label{prop:gradient-sampling}
Condition on a fixed policy and learned world. Let $Y_1,\ldots,Y_N$ be independent samples of $\widehat F_\theta$ with $\|Y_i\|_2\leq C_H$, and let $\widetilde g=N^{-1}\sum_iY_i$. Then $\|\widetilde g-\widehat g\|_2\leq\xi$ with probability at least $1-\delta$, for $\xi$ in the stated sampling radius. The radius may be truncated at $2C_H$.
\end{proposition}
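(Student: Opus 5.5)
The plan is to treat the sample mean $\widetilde g-\widehat g=N^{-1}\sum_i(Y_i-\widehat g)$ as an average of independent, centered, bounded random vectors in $\mathbb R^d$, and to control its Euclidean norm in two stages. First bound its expectation, then add a concentration term through a bounded-differences inequality; neither stage depends on $d$. Because we condition on the policy and the learned world, the gradient representation from the differentiation subsection gives $\E Y_i=\E_{\widehat P}\widehat F_\theta=\widehat g$. The $Y_i$ are i.i.d., and the bound $\|\widehat F_\theta\|_2\leq C_H$, from $\|S_t\|_2\leq(t+1)G$ and rewards in $[0,R_b]$, gives $\|Y_i\|_2\leq C_H$ almost surely.

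\emph{Step 1 (mean).} Write $Z=\|\widetilde g-\widehat g\|_2$. Jensen's inequality and the orthogonality of independent centered summands give
\[
\E Z\leq(\E Z^2)^{1/2}=\Bigl(N^{-1}\E\|Y_1-\widehat g\|_2^2\Bigr)^{1/2}.
\]
Here $\E\|Y_1-\widehat g\|_2^2=\E\|Y_1\|_2^2-\|\widehat g\|_2^2\leq C_H^2$, so $\E Z\leq C_H/\sqrt N$. Using the variance rather than the crude range $2C_H$ is what produces the constant $1$ in front of the leading term of $\xi$.

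\emph{Step 2 (concentration).} View $Z$ as a function of $(Y_1,\ldots,Y_N)$. Replacing one $Y_i$ by some $Y_i'$ changes $Z$ by at most $\|Y_i-Y_i'\|_2/N\leq 2C_H/N$. McDiarmid's inequality then gives
\[
\Prob\{Z-\E Z\geq s\}\leq\exp\!\Bigl(-\frac{2s^2}{N(2C_H/N)^2}\Bigr)=\exp\!\Bigl(-\frac{Ns^2}{2C_H^2}\Bigr).
\]
Setting the right-hand side equal to $\delta$ gives $s=C_H\sqrt{2\log(1/\delta)}/\sqrt N$. Combining with Step 1 yields $Z\leq\xi$ with probability at least $1-\delta$.

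\emph{Step 3 (truncation).} Since $\|\widetilde g\|_2\leq C_H$ and $\|\widehat g\|_2\leq C_H$, we always have $Z\leq 2C_H$. Replacing $\xi$ by $\min\{\xi,2C_H\}$ therefore preserves the guarantee.

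The main obstacle is getting the exact constants and keeping the bound dimension-free. Step 2 must use the vector-valued bounded-differences argument, with Lipschitz constant $2C_H/N$ for the norm. Applying scalar Hoeffding coordinatewise and taking a union bound would introduce a factor of $d$ and must be avoided. One also has to check that McDiarmid's inequality applies conditionally, which holds because independence of the imagined trajectories is assumed given $\theta$ and $\widehat P$. The remaining steps are routine.
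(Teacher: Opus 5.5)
Your proposal is correct and follows the paper's proof essentially step for step. Both arguments use the same three ingredients: the second-moment/Jensen bound $\E\|\widetilde g-\widehat g\|_2\leq C_H/\sqrt N$; the bounded-differences inequality with $c_i=2C_H/N$, which gives the tail $\exp[-Nt^2/(2C_H^2)]$; and the deterministic truncation at $2C_H$ from $\|\widetilde g\|_2,\|\widehat g\|_2\leq C_H$.
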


\begin{proof}
Independence and centering cancel cross terms:
\[
\E\|\widetilde g-\widehat g\|_2^2
=\frac1{N^2}\sum_i\E\|Y_i-\widehat g\|_2^2
\leq\frac{C_H^2}{N}.
\]
Jensen gives $\E\|\widetilde g-\widehat g\|_2\leq C_H/\sqrt N$.
Replacing one sample changes this norm by at most $2C_H/N$. The bounded-differences inequality therefore bounds the probability of exceeding its expectation by $t$ by
$\exp[-Nt^2/(2C_H^2)]$.
Take $t=C_H\sqrt{2\log(1/\delta)/N}$.
Finally $\|\widetilde g\|_2,\|\widehat g\|_2\leq C_H$, giving the deterministic truncation.
\end{proof}

The bound has no parameter-dimension factor because the assumed norm bound already controls the entire vector. It applies to independent trajectory estimators of the stated gradient, not automatically to replay-correlated minibatches, bootstrapped actor losses, or biased value-gradient estimators. An additional estimator-bias radius must be added for those alternatives.

\section{Statistical Certification and Adaptive Verification}
\label{app:calibration}

\subsection{Calibrating decision-relevant prediction error}
Let $x=(z,\Mh,\pi_0,\pi_1)$ be a complete decision request, with
$e(x)=|\widehat\Gamma(x)-\Gamma_\M(x)|$. An independently fitted score $u$ predicts this nonnegative error. Conditional on that fit, assume exchangeability of the calibration pairs $(x_j,e_j)$ and one future pair. For $\alpha\in(0,1)$ put
\begin{equation*}
q_j=e_j-u(x_j),\quad
k=\lceil(n+1)(1-\alpha)\rceil,\quad
U_\alpha(x)=[u(x)+q_{(k)}]_+.
\end{equation*}
Set $q_{(k)}=+\infty$ if $k>n$. The request includes the actual model, reference, candidate selection procedure and continuation; changing their distribution is a change of the calibration population.

\begin{proposition}[Calibrated contrast and false-promotion risk]\label{prop:conformal}
Under the preceding exchangeability assumption,
$\Prob\{e(x)>U_\alpha(x)\}\leq\alpha$. Suppose
$\Prob\{\widehat\Gamma(x)<S(x)-\xi(x)\}\leq\delta$. Define
\begin{equation*}
T(x)=S(x)-U_\alpha(x)-\xi(x).
\end{equation*}
Then
\begin{equation*}
\Prob\{T(x)>0,\ \Gamma_\M(x)\leq0\}\leq\alpha+\delta.
\end{equation*}
\end{proposition}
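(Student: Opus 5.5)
The proof has two parts: a rank argument for the conformal coverage bound, and a deterministic event inclusion followed by a union bound for the false-promotion risk.

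\emph{Coverage.} Condition on the independently fitted score $u$, so that it is a fixed measurable function. Under the exchangeability assumption, the residuals $q_1,\ldots,q_n$ and the future residual $q_{n+1}=e(x)-u(x)$ are then exchangeable. The standard split-conformal rank lemma applies: by exchangeability, the rank of $q_{n+1}$ among all $n+1$ residuals is dominated by a uniform rank on $\{1,\ldots,n+1\}$, with ties only helping. Hence
\[
\Prob\{q_{n+1}>q_{(k)}\}\leq 1-\frac{k}{n+1}\leq\alpha,
\qquad k=\lceil(n+1)(1-\alpha)\rceil .
\]
If $k>n$, then $q_{(k)}=+\infty$ by convention and the bound is trivial. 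On the complementary event, $e(x)\leq u(x)+q_{(k)}$. Since $e(x)\geq0$, this also gives $e(x)\leq[u(x)+q_{(k)}]_+=U_\alpha(x)$. This yields $\Prob\{e(x)>U_\alpha(x)\}\leq\alpha$. The only delicate points are ties and the case $k>n$, both of which work in the favorable direction; the positive-part truncation costs nothing because $e\geq0$.

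\emph{False promotion.} Suppose $T(x)>0$ and $\Gamma_\M(x)\leq0$. Suppose moreover, for contradiction, that both $e(x)\leq U_\alpha(x)$ and $\widehat\Gamma(x)\geq S(x)-\xi(x)$ hold. The first gives $\Gamma_\M\geq\widehat\Gamma-U_\alpha$, and the second then gives
\[
\Gamma_\M\geq S-\xi-U_\alpha=T>0,
\]
which contradicts $\Gamma_\M\leq0$. This reproduces the inclusion in~\eqref{eq:main-marginal}: the event $\{T(x)>0,\ \Gamma_\M(x)\leq0\}$ is contained in $\{e(x)>U_\alpha(x)\}\cup\{\widehat\Gamma(x)<S(x)-\xi(x)\}$.

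A union bound with the coverage result and the hypothesis on the estimation error gives the final bound $\alpha+\delta$. No independence between the two failure events is needed, and the guarantee is marginal over the request rather than conditional on selection. The main step requiring care is the exchangeability conditioning on $u$ in the coverage part; the event inclusion and union bound are routine.
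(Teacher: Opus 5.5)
Your proposal is correct and follows essentially the same route as the paper: a split-conformal rank argument (the paper uses randomized tie-breaking to make the future rank exactly uniform on $\{1,\ldots,n+1\}$, which is equivalent to your domination remark), then the deterministic inclusion $\{T>0,\Gamma_\M\leq0\}\subseteq\{e>U_\alpha\}\cup\{\widehat\Gamma<S-\xi\}$, then a union bound. One cosmetic point: you do not need $e\geq0$ to pass from $e\leq u+q_{(k)}$ to $e\leq[u+q_{(k)}]_+$, since $y\leq[y]_+$ holds for every real $y$.
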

\begin{proof}
After independent randomized tie-breaking, exchangeability makes the future residual's rank $K$ uniform on $\{1,\ldots,n+1\}$. Therefore
\begin{align*}
\Prob\{q>q_{(k)}\}
&\leq\Prob\{K>k\}\\
&=\sum_{j=k+1}^{n+1}\Prob\{K=j\}
=\frac{n+1-k}{n+1}\leq\alpha.
\end{align*}
Since $q=e-u(x)$ and $U_\alpha=[u(x)+q_{(k)}]_+$, the event $e>U_\alpha$ is contained in $q>q_{(k)}$. On its complement and on $\widehat\Gamma\geq S-\xi$,
\begin{align*}
\Gamma_\M
&=\widehat\Gamma+(\Gamma_\M-\widehat\Gamma)\\
&\geq\widehat\Gamma-|\Gamma_\M-\widehat\Gamma|\\
&\geq S-\xi-U_\alpha=T.
\end{align*}
Hence
\begin{align*}
\{T>0,\Gamma_\M\leq0\}
&\subseteq\{e>U_\alpha\}\cup\{\widehat\Gamma<S-\xi\},\\
\Prob\{T>0,\Gamma_\M\leq0\}
&\leq\Prob\{e>U_\alpha\}+\Prob\{\widehat\Gamma<S-\xi\}\\
&\leq\alpha+\delta.
\end{align*}
\end{proof}
This is split conformal calibration applied to a decision-relevant error target \citep{angelopoulos2023conformal}. It does not identify the true error from a single transition. A valid upper label such as $\epsint(\pi_0)+\epsint(\pi_1)$ may replace $e$ when exact contrast errors are unavailable, subject to the labeling conditions below.
\paragraph{Calibration versus raw uncertainty scores.}
The certificate concerns a calibrated upper envelope of the decision-relevant error
$e(x)=|\widehat\Gamma(x)-\Gamma_\M(x)|$. A confidence, disagreement, entropy, or
critic score is therefore not itself an error certificate: it becomes usable in the
gate only after an argument establishes the required upper-error guarantee for the
request population under consideration.
\subsection{Split conformal coverage and the exact promotion event}

Condition on the training data used to fit $u$. Let $q_1,\ldots,q_n,q$ be the exchangeable calibration and future residuals, and let $K$ denote the rank of $q$ after independent randomized tie-breaking. For $k\leq n$,
\begin{align*}
\Prob(K=j)&=(n+1)^{-1},\qquad j=1,\ldots,n+1,\\
\Prob\{q>q_{(k)}\}&\leq\Prob(K>k)
=\frac{n+1-k}{n+1}\leq\alpha.
\end{align*}
Since $q=e-u(x)$ and $U_\alpha(x)=\max\{0,u(x)+q_{(k)}\}$,
\begin{equation*}
\{e>U_\alpha(x)\}\subseteq\{q>q_{(k)}\},\qquad
\Prob\{e>U_\alpha(x)\}\leq\alpha.
\end{equation*}
For $k=n+1$, set $q_{(k)}=+\infty$; coverage is then immediate.

Define the events
$E=\{e\leq U_\alpha\}$ and
$F=\{\widehat\Gamma\geq S-\xi\}$.
On $E\cap F$, positive $T$ implies strictly positive true gain. Therefore
\begin{equation*}
\{T>0,\ \Gamma_\M(x)\leq0\}
\subseteq E^c\cup F^c.
\end{equation*}
Consequently, without independence of $E,F$,
\begin{equation*}
\Prob\{T>0,\ \Gamma_\M(x)\leq0\}
\leq\Prob(E^c)+\Prob(F^c)\leq\alpha+\delta.
\end{equation*}

In contrast, if $p=\Prob(T>0)>0$, the argument gives at most
\begin{equation*}
\Prob\{\Gamma_\M(x)\leq0\mid T>0\}
\leq\min\{1,(\alpha+\delta)/p\},
\end{equation*}
not $\alpha+\delta$. A generic marginal certificate can fail exclusively on a selected minority: let $X$ be Bernoulli with mean $\alpha$, let $e(X)=X$, and set $U(X)=0$. Marginal coverage is $1-\alpha$, yet conditional miscoverage given $X=1$ is one. This demonstrates why marginal validity alone cannot justify selection-conditional validity; it is not a claim that this particular $U$ was produced by split conformal prediction.

For multiple future snapshots chosen before observing calibration outcomes, one may compute each marginal certificate at level $\alpha/m$ for a fixed pool of size $m$. A union bound gives simultaneous coverage at least $1-\alpha$ over the pool, permitting arbitrary subsequent selection within it. Each pair must still have the required marginal exchangeability; the pool cannot be manufactured adaptively from the same calibration residuals without further analysis.

\subsection{Noisy labels and finite strata}

If exact errors are unavailable, suppose an identically applied labeling procedure produces exchangeable upper labels $\overline e_i$, including a hypothetical future $\overline e$, and
$\Prob(e>\overline e)\leq\beta$.
Calibrate the residuals $\overline e_i-u(x_i)$ instead. Conformal coverage of $\overline e$ and a union bound then yield
\begin{equation*}
\Prob\{e>U_\alpha(x)\}\leq\alpha+\beta.
\end{equation*}
This statement requires upper labels and their separate validity bound, not merely unbiased estimates. A pixel discrepancy or mean state error may instead define a different target; conformal coverage of that target does not bound $e$ without a proven bridge.

For a prespecified finite partition $c(x)\in\{1,\ldots,m\}$, calibrate separately within each stratum. Conditional on a future stratum $h$, the fitted score, and its calibration count $n_h$, assume the corresponding residuals are exchangeable. The same rank argument with
$k_h=\lceil(n_h+1)(1-\alpha)\rceil$
gives
\begin{equation*}
\Prob\{e\leq U_h(x)\mid c(x)=h\}\geq1-\alpha.
\end{equation*}
The partition is fixed independently of calibration errors. Conditioning on arbitrary within-stratum promotion decisions is still not covered.

\subsection{Fresh finite-state verification}
\label{app:fresh}

This construction supplies a fully observable certificate under stronger access assumptions. Fix a finite task with $D=|\mathcal S|$ and $m=|\mathcal S||\mathcal A|$ state--action rows. After choosing the task, obtain $n$ independent next-state samples per row from a generative oracle. For learned rewards, also obtain bounded reward samples in $[0,R_b]$ with the correct conditional means. Samples within each row are conditionally independent given the pre-audit history. Across-row independence is unnecessary. Write $\overline P,\overline r$ for the empirical kernels and means and define
\begin{equation*}
a_n=\sqrt{\frac{D\log2+\log(4m/\alpha)}{2n}},
\qquad
c_n=R_b\sqrt{\frac{\log(4m/\alpha)}{2n}}.
\end{equation*}
For each row set
\begin{equation*}
u(s,a)=\min\{1,\TV(\overline P,\widehat P)+a_n\},
\qquad
v(s,a)=\min\{R_b,|\overline r-\widehat r|+c_n\}.
\end{equation*}
The symbols $s,a$ in row arguments denote a state and an action.

\begin{proposition}[Simultaneous fresh-audit certificate]\label{prop:fresh}
Conditional on the pre-audit history, with probability at least $1-\alpha$, every row obeys
$\TV(P,\widehat P)\leq u$ and $|r-\widehat r|\leq v$.
The statement holds simultaneously for all bounded world-model candidates, including candidates fitted using the audit. If $u_*=\max_{s,a}u(s,a)$ and $v_*=\max_{s,a}v(s,a)$, then every policy has
\begin{equation*}
\epsint(\pi)\leq Hv_*+\frac{H(H-1)}2R_bu_*,
\end{equation*}
and every policy satisfying the score bound has
\begin{equation*}
\|g-\widehat g\|_2
\leq
2GR_b\sum_{t=0}^{H-1}(t+1)\min\{1,tu_*\}
 +\frac{GH(H+1)}2v_*.
\end{equation*}
\end{proposition}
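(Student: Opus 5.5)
The plan is to build one high-probability event on which every row's empirical kernel is close to the true kernel and every empirical reward mean is close to the true mean. Everything else then follows from the triangle inequality and deterministic bounds already proved.

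\emph{Transition rows.} For a fixed row, write $\TV(P,\overline P)=\max_{B\subseteq\mathcal S}(P(B)-\overline P(B))$. This is a maximum over the $2^D$ subsets, and each $\overline P(B)$ is a mean of $n$ independent indicators. Hoeffding's inequality gives $\Prob\{\overline P(B)<P(B)-a\}\leq e^{-2na^2}$, and a union bound over the $2^D$ subsets gives $\Prob\{\TV(P,\overline P)>a\}\leq 2^D e^{-2na^2}$. With $a=a_n$ this equals $\alpha/(4m)$. One could instead use the symmetric two-sided bound, which costs a factor $2$ that the slack absorbs.

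\emph{Reward rows.} Hoeffding for the $[0,R_b]$-bounded reward samples gives $\Prob\{|\overline r-r|>c_n\}\leq 2e^{-2n c_n^2/R_b^2}=\alpha/(2m)$.

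\emph{Union bound.} Summing over the $m$ rows and both families gives total failure probability at most $\alpha/4+\alpha/2\leq\alpha$. All of this is conditional on the pre-audit history, under which the within-row samples are i.i.d. with the correct laws. Only within-row independence is used, so no across-row independence is needed.

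\emph{Deterministic step on the good event.} For any $\widehat P$, possibly fitted on the audit itself, $\TV(P,\widehat P)\leq\TV(P,\overline P)+\TV(\overline P,\widehat P)\leq \TV(\overline P,\widehat P)+a_n$. Total variation never exceeds $1$, so the truncation $\min\{1,\cdot\}$ is harmless. Likewise $|r-\widehat r|\leq|\overline r-\widehat r|+c_n$, and since $r,\widehat r\in[0,R_b]$ we also have $|r-\widehat r|\leq R_b$, so truncation at $R_b$ is harmless. The good event does not involve $\widehat P$ or $\widehat r$, so the conclusion holds simultaneously for all candidates, adaptive ones included; this is why no union over models is needed.

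\emph{Value bound.} Plug the uniform row bounds $v_*,u_*$ into $\epsint(\pi)$, using $\spn(\widehat V_{t+1}^\pi)\leq(H-t-1)R_b$. Summing $\sum_t[v_*+(H-t-1)R_bu_*]$ gives $Hv_*+\tfrac{H(H-1)}2R_bu_*$.

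\emph{Gradient bound.} Take $d_j\leq u_*$ in Theorem~\ref{thm:gradient}, so $D_t\leq\min\{1,tu_*\}$. Then add the learned-reward radius from Appendix~\ref{app:reward-error}, taken with uniform error $v_*$, which contributes $GH(H+1)v_*/2$.

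The main obstacle is the subset-union Hoeffding step: checking that the constant $D\log 2+\log(4m/\alpha)$ indeed yields per-row failure at most $\alpha/(4m)$ and leaves room for the reward budget. I would first try the one-sided formulation above, which gives exactly $\alpha/(4m)$ per row, so the total budget is comfortably at most $\alpha$.
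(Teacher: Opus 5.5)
Your proposal is correct and follows the paper's proof: Hoeffding with a union over the $2^D$ subsets and $m$ rows for transitions, a separate Hoeffding and row union for rewards, then triangle inequalities on an event that does not involve the candidate model, followed by the uniform value bound and the reward-augmented gradient bound. The only difference is cosmetic: you use the one-sided subset form of $\TV$, so the transition failure budget is $\alpha/4$ rather than the paper's two-sided $\alpha/2$, and both totals stay at most $\alpha$.
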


\begin{proof}
For any fixed subset of the $D$ next states, its empirical probability is the mean of Bernoulli variables. Hoeffding bounds absolute error above $a_n$ by $2\exp(-2na_n^2)$. Union over at most $2^D$ subsets and $m$ rows gives failure probability at most $\alpha/2$ for
$\TV(P,\overline P)\leq a_n$.
A separate Hoeffding and row union bound give
$|r-\overline r|\leq c_n$ with failure probability at most $\alpha/2$.

On this common event, triangle inequalities give the row envelopes above. The event constrains only the true and empirical kernels; hence the inequalities hold for every supplied world model, including data-dependent ones. Clipping is valid because total variation is at most one and rewards differ by at most $R_b$. The uniform value inequality proves the audited value bound above. The reward-gradient bound and $d_t\leq u_*$ prove the audited gradient bound, uniformly over the admitted policies.
\end{proof}

This verifier costs $mn$ real row queries and does not infer total variation from prediction loss. Its strength is simultaneous validity after policy selection; its weakness is dependence on finite spaces and generative coverage. In large or continuous spaces it must be replaced by a justified structured confidence set or a direct return audit, not presented as computationally free.

\subsection{Shared evidence under adaptive predictive decisions}
\label{app:shared}
\paragraph{Proof of Theorem~\ref{thm:shared}.}
Let $E$ be the simultaneous event from Proposition~\ref{prop:fresh}. It constrains the true and empirical transition rows, and reward means when audited, independently of the candidate policies or learned world models. For every selected world model, let $u_i,v_i$ be the maxima of the transition and reward envelopes defined above. Then
\begin{equation*}
B_i=2Hv_i+R_bH(H-1)u_i,\qquad
|\Gamma_\M(x_i)-\widehat\Gamma_i|\leq B_i\quad\hbox{on }E.
\end{equation*}
For known shared rewards put $v_i=0$. The triangle inequalities hold simultaneously for models fitted on the audit as well as for independently fitted models.

Let $\mathcal H_i$ contain the audit, prior requests and observations, the current world model, and the selected policy pair, before estimation of the current contrast. For
$F_i=\{\widehat\Gamma_i\geq S_i-\xi_i\}$ assume
$\Prob(F_i^c\mid\mathcal H_i)\leq\delta_i$. Write $\mathcal I$ for the random set of accepted indices. On $E\cap F_i$, any accepted request satisfies
\begin{align*}
\Gamma_\M(x_i)
&\geq \widehat\Gamma_i-B_i\\
&\geq S_i-\xi_i-B_i
=T_i>0.
\end{align*}
Consequently, the bad-promotion event obeys
\begin{equation*}
\mathcal B=\bigcup_{i\geq1}\bigl(\{i\in\mathcal I\}\cap\{\Gamma_\M(x_i)\leq0\}\bigr)
\subseteq E^c\cup\bigcup_{i\geq1}F_i^c.
\end{equation*}
The conditional validity of the fresh estimate and the tower property give
\begin{align*}
\Prob(\mathcal B)
&\leq\Prob(E^c)+\Prob\!\left(\bigcup_{i\geq1}F_i^c\right)\\
&\leq\Prob(E^c)+\sum_{i\geq1}\Prob(F_i^c)\\
&=\Prob(E^c)+\sum_{i\geq1}
  \E\!\left[\Prob(F_i^c\mid\mathcal H_i)\right]\\
&\leq\alpha+\sum_{i\geq1}\delta_i.
\end{align*}
No conditional coverage of $E$ after selection is used. The argument requires the unconditional common event and conditional validity of each fresh estimation step.

For a selected pair, draw $N$ independent paired imagined episodes and let $S_i$ be the sample mean difference. Each difference lies in $[-HR_b,HR_b]$, so
\begin{equation*}
\widehat\Gamma_i\geq S_i-\xi_i,\qquad
\xi_i=HR_b\sqrt{\frac{2\log(1/\delta_i)}{N}}
\end{equation*}
with conditional probability at least $1-\delta_i$. Pairing within an index is allowed; pairs must be independent. If the candidate is selected from $K$ prespecified candidates using these same samples, replace $\delta_i$ by $\delta_i/K$ and union the bounds. Unrestricted selection requires fresh evaluation or a justified uniform estimator. This sampling restriction is separate from reuse of the model audit.

For Theorem~\ref{thm:closed-loop}, the uniform row event bounds every conditional $Q_t^0$ through~\eqref{eq:conditional-radius}. Exact world-model evaluation then gives simultaneous local certificates at all states. With sampled local evaluation, either use a uniform finite-state correction or conditional fresh estimates along the actual trajectory. The latter controls bad local decisions along that trajectory; it must not be presented as a uniform bound over all unvisited histories.

For a fixed task archive, allocate audit risk $\alpha_z$ to task $z$. Total false-promotion risk is at most $\sum_z\alpha_z+\sum_i\delta_i$, with each stationary task's model risk charged once. If a task's transition kernel drifts by at most $\beta$ in uniform TV, replace $u_i$ by $\min\{1,u_i+\beta\}$. Reward drift analogously increases $v_i$. These statements follow from triangle inequalities. A new task or unbounded drift requires new coverage evidence.

\subsection{Amortized evidence cost at positive decision margins}
\label{app:amortized}
Assume shared rewards, $H\geq2$ and $R_b>0$. Set
\begin{equation*}
K=R_bH(H-1),\qquad q_i=\max_{s,a}\TV(P,\widehat P_i),\qquad
a_i=\widehat\Gamma_i-Kq_i.
\end{equation*}
The quantity $a_i$ is a sufficient margin involving the unknown true model error, not an observable gain estimate.

\begin{proposition}[Amortized verification]\label{prop:amortized-cost}
On the row-confidence event, $u_i\leq q_i+2a_n$ simultaneously for every world model. Exact world-model evaluation therefore gives~\eqref{eq:margin-audit-price}. The sample size~\eqref{eq:amortized-size} certifies every request with $a_i/K\geq s$. For a sampled estimate satisfying $|S_i-\widehat\Gamma_i|\leq\xi_i$, the corresponding bound is
\begin{equation*}
T_i\geq a_i-2Ka_n-2\xi_i.
\end{equation*}
\end{proposition}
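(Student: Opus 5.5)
Everything here happens on the row-confidence event $E$ of Proposition~\ref{prop:fresh}, on which $\max_{s,a}\TV(P,\overline P)\leq a_n$ holds for every row at once. The statement has four parts, and I will prove them in order: the bound $u_i\leq q_i+2a_n$, the margin bound~\eqref{eq:margin-audit-price}, the sample-size condition~\eqref{eq:amortized-size}, and the sampled variant.

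\textbf{Step 1: the radius bound.} Fix a row. The triangle inequality gives
\[
\TV(\overline P,\widehat P_i)\leq\TV(\overline P,P)+\TV(P,\widehat P_i)\leq a_n+\TV(P,\widehat P_i).
\]
Adding $a_n$ and clipping at $1$, which can only decrease the quantity, yields $\min\{1,\TV(\overline P,\widehat P_i)+a_n\}\leq q_i+2a_n$. Taking the maximum over rows gives $u_i\leq q_i+2a_n$. The event $E$ constrains only $P$ and $\overline P$, so this holds for every $\widehat P_i$ simultaneously, including adaptively chosen ones.

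\textbf{Step 2: the margin bound.} Under exact world-model evaluation we take $S_i=\widehat\Gamma_i$ and $\xi_i=0$. Then
\[
T_i=\widehat\Gamma_i-Ku_i\geq\widehat\Gamma_i-Kq_i-2Ka_n=a_i-2Ka_n,
\]
which is~\eqref{eq:margin-audit-price}.

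\textbf{Step 3: the sample size.} If $a_i/K\geq s$, then $T_i\geq K(s-2a_n)$, so it suffices to have $a_n<s/2$. Squaring the definition of $a_n$, this is
\[
\frac{D\log2+\log(4m/\alpha)}{2n}<\frac{s^2}{4},
\qquad\text{that is,}\qquad
n>\frac{2}{s^2}\bigl[D\log2+\log(4m/\alpha)\bigr],
\]
which is exactly~\eqref{eq:amortized-size}. Since $K>0$ (because $H\geq2$ and $R_b>0$), every such request has $T_i>0$ on $E$ simultaneously. By Theorem~\ref{thm:shared} it is then accepted and correctly certified.

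\textbf{Step 4: the sampled variant.} With $|S_i-\widehat\Gamma_i|\leq\xi_i$ we have $S_i\geq\widehat\Gamma_i-\xi_i$. Hence
\[
T_i=S_i-Ku_i-\xi_i\geq\widehat\Gamma_i-2\xi_i-Kq_i-2Ka_n=a_i-2Ka_n-2\xi_i.
\]

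The only step that needs care is the first one. We must use the audit event's control of $\TV(P,\overline P)$, not any property of $\widehat P_i$, and check that the clipping at $1$ does not break the inequality. Everything after that is algebra.
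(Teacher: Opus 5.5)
Your proof is correct and follows the paper's argument step for step. Both use the triangle inequality through $\overline P$, together with clipping and row maximization, to get $u_i\le q_i+2a_n$ on the audit event. Both then substitute directly for the exact and sampled margins, and note that~\eqref{eq:amortized-size} is equivalent to $2a_n<s$, which gives $T_i\ge K(s-2a_n)>0$.
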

\begin{proof}
For every row, the triangle inequality gives
\begin{equation*}
\TV(\overline P,\widehat P_i)+a_n
\leq\TV(P,\widehat P_i)+2a_n.
\end{equation*}
Maximization and clipping preserve this upper bound.
With $S_i=\widehat\Gamma_i$ and $\xi_i=0$,
$T_i=\widehat\Gamma_i-Ku_i\geq a_i-2Ka_n$.
Condition~\eqref{eq:amortized-size} is exactly $2a_n<s$.
If $a_i/K\geq s$, it follows that $T_i\geq K(s-2a_n)>0$.
For sampled evaluation, $S_i\geq\widehat\Gamma_i-\xi_i$ gives the sampled-margin inequality above.
\end{proof}
The real evidence cost is $mn$, instead of $Nmn$ for repeating that row audit before each of $N$ comparisons. Imagined evaluation and candidate search remain additional costs. Positive margins are a premise; their existence is not implied by confidence coverage. The statement is deterministic on the common event, so it remains valid for adaptive requests meeting that premise. A post-selection random margin cannot be treated as a prespecified unconditional power guarantee. When $H=1$ and rewards are shared, transitions do not affect returns and no transition audit is required.

\subsection{Stopping an audit at a positive margin}
\label{app:stopping}
For adaptively chosen audit size, use
\begin{equation*}
\alpha_n=\frac{\alpha}{n(n+1)},\qquad
a_n=\sqrt{\frac{D\log2+\log(4m/\alpha_n)}{2n}}.
\end{equation*}
For each row take successive prefixes of an independent sample stream. Since $\sum_{n\geq1}\alpha_n=\alpha$, one event covers all integer sample sizes and permits stopping at the first positive margin. The log factor now grows with $n$; the fixed-size bound is not an anytime bound.

For a fixed request with exact imagined contrast and positive sufficient margin $a=\widehat\Gamma-Kq$, fixed-size verification needs at most any integer satisfying
\begin{equation*}
n>\frac{2K^2}{a^2}[D\log2+\log(4m/\alpha)].
\end{equation*}
The right side is a sufficient audit budget, not a bound on how quickly learning reduces the model error $q$.

\subsection{A lower bound on the evidence needed for promotion}
\label{app:evidence}

We prove the necessary query budget~\eqref{eq:verification-lower}, with Bernoulli relative entropy $\operatorname{kl}$ defined below.

\begin{proposition}[Evidence required by a useful gate]\label{prop:verification-lower}
For every $\gamma\in(0,1/4]$ there are two two-step worlds and one fixed supplied world model such that one fixed candidate decision is beneficial in one and harmful in the other. If the audit's only world-dependent observations are row-query answers, promotion probability at least $1-\delta$ in the first world and at most $\delta$ in the second requires~\eqref{eq:verification-lower}, for $0<\delta<1/2$, even with adaptive query selection.
\end{proposition}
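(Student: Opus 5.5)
We construct two worlds that differ in a single row, use the supplied model and known rewards to turn the audit into a Bernoulli testing problem, and then apply the data-processing inequality for KL divergence.

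\textbf{Construction.} Take the two-step task already used in the proof of Corollary~\ref{cor:absolute}. Its ingredients are:
\begin{itemize}[leftmargin=*,itemsep=1pt,topsep=2pt]
\item initial state $s$, initial actions $a_0,a_1$, absorbing terminal states $g,b$;
\item zero initial reward and terminal reward $c\in(0,R_b]$ at $g$, with rewards known and shared;
\item reference $\pi_0$ choosing $a_0$ and candidate $\pi_1$ choosing $a_1$;
\item $P_\pm(g\mid s,a_0)=1/2$ in both worlds and $P_\pm(g\mid s,a_1)=1/2\pm\gamma$, all other rows identical;
\item supplied model $\widehat P$ equal to the common part of $P_\pm$ with $\widehat P(g\mid s,a_1)=1/2$ (any fixed model works).
\end{itemize}
The true contrast is $\Gamma_{\M_\pm}(x)=\pm c\gamma$, so $\pi_1$ is beneficial in $\M_+$ and harmful in $\M_-$. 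The restriction $\gamma\le1/4$ keeps both probabilities in $[1/4,3/4]$, which is what gives the constant $16$.

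\textbf{Reduction to a Bernoulli test.} The gate is allowed:
\begin{itemize}[leftmargin=*,itemsep=1pt,topsep=2pt]
\item $n$ row queries, adaptively chosen;
\item internal randomness;
\item the supplied model and the known operator.
\end{itemize}
Its promotion decision is a measurable function of the resulting transcript. Let $\Prob_\pm$ denote the transcript laws. The chain rule for KL divergence under adaptive sampling (the divergence decomposition) gives
\[
\mathrm{KL}(\Prob_+\|\Prob_-)=\sum_{\text{rows}}\E_+[N_{\text{row}}]\,\mathrm{KL}\bigl(P_+(\cdot\mid\text{row})\,\|\,P_-(\cdot\mid\text{row})\bigr)\le n\,\mathrm{kl}(1/2+\gamma,1/2-\gamma).
\]
Every row other than $(s,a_1)$ contributes zero, and $N_{(s,a_1)}\le n$. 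This accounting also covers a random stopping time with $n$ as an upper bound.

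\textbf{Data processing and the lower bound.} Apply data processing to the promotion indicator, whose probabilities are $p_+\ge1-\delta$ and $p_-\le\delta$. Since $\mathrm{kl}(p,q)$ is monotone in the appropriate directions when $p>q$ and $\delta<1/2$,
\[
\mathrm{kl}(1-\delta,\delta)\le\mathrm{kl}(p_+,p_-)\le \mathrm{KL}(\Prob_+\|\Prob_-).
\]
For the Bernoulli term, use the $\chi^2$ bound $\mathrm{kl}(p,q)\le (p-q)^2/(q(1-q))$:
\[
\mathrm{kl}(1/2+\gamma,1/2-\gamma)\le \frac{4\gamma^2}{1/4-\gamma^2}\le\frac{4\gamma^2}{3/16}\le 16\gamma^2 .
\]
The middle step uses $\gamma\le1/4$. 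Chaining the three displays gives $n\ge \mathrm{kl}(1-\delta,\delta)/(16\gamma^2)$, which is \eqref{eq:verification-lower}.

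\textbf{Main obstacle.} The only delicate step is the adaptive divergence decomposition. To state it rigorously, write the transcript density as a product of query-selection kernels and answer kernels, note that the selection kernels cancel in the likelihood ratio, and sum the per-step conditional KL terms. The remaining steps are routine.
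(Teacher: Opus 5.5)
Your construction, the adaptive divergence decomposition, the data-processing step and the monotonicity argument all match the paper's proof. Your supplied model differs from the paper's choice of the plus world, but it is common to both hypotheses, so this is harmless.

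The gap is in the final numerical step. Since $\frac{4\gamma^2}{3/16}=\frac{64}{3}\gamma^2>16\gamma^2$, your displayed chain $\frac{4\gamma^2}{1/4-\gamma^2}\le\frac{4\gamma^2}{3/16}\le16\gamma^2$ is false. This is not loose bookkeeping that a sharper estimate would repair. The $\chi^2$ quantity you use equals $\frac{16\gamma^2}{1-4\gamma^2}$, which is strictly larger than $16\gamma^2$ for every $\gamma>0$, so the route $\operatorname{kl}\le\chi^2$ can never yield the constant $16$. As written, your argument proves only $n\geq(1-4\gamma^2)\operatorname{kl}(1-\delta,\delta)/(16\gamma^2)$, which is at worst $3\operatorname{kl}(1-\delta,\delta)/(64\gamma^2)$. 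That keeps the $\gamma^{-2}$ order but is strictly weaker than \eqref{eq:verification-lower}.

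The fix, which is the paper's route, uses the symmetry $1-(1/2+\gamma)=1/2-\gamma$. This collapses the Bernoulli divergence exactly to $\operatorname{kl}(1/2+\gamma,1/2-\gamma)=2\gamma\log\frac{1+2\gamma}{1-2\gamma}$. Then bound the logarithm, for instance with $\log\frac{1+x}{1-x}\le\frac{2x}{1-x}$ at $x=2\gamma\le1/2$. This gives $\frac{8\gamma^2}{1-2\gamma}\le16\gamma^2$, which is precisely where $\gamma\le1/4$ is needed.

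Alternatively, you could keep your $\chi^2$ computation and use the Jensen bound $\operatorname{KL}\le\log(1+\chi^2)$ instead of $\operatorname{KL}\le\chi^2$. This works because $\log\frac{1+12\gamma^2}{1-4\gamma^2}\le16\gamma^2$ on $(0,1/4]$: with $y=4\gamma^2$, the difference $4y-\log\frac{1+3y}{1-y}$ vanishes at $0$ and has derivative $\frac{4y(2-3y)}{(1+3y)(1-y)}>0$.
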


We prove Proposition~\ref{prop:verification-lower}. All side information at the start of the audit is identical under the two hypotheses; only new oracle answers distinguish them. Use horizon two and states $\{s,g,b\}$, with zero reward at $s,b$ and terminal reward $c\in(0,R_b]$ at $g$. There are two actions. In both worlds $P(g\mid s,a_0)=1/2$; in the plus and minus worlds, respectively, $P(g\mid s,a_1)=1/2+\gamma$ and $1/2-\gamma$. All other rows agree. The supplied world model is the plus world, and the same policy family is used in both:
$\pi_\theta(a_1\mid s)=\sigma(\theta)$.
Then
\begin{equation*}
J_+(\theta)=c[1/2+\gamma\sigma(\theta)],\qquad
J_-(\theta)=c[1/2-\gamma\sigma(\theta)].
\end{equation*}
Replacing $a_0$ by $a_1$ has contrasts $c\gamma$ and $-c\gamma$, respectively. Thus a safe and useful promotion decision must distinguish the two worlds. The logistic parameterization merely interpolates between the same two actions.

For $p,q\in(0,1)$ define
\[
\operatorname{kl}(p,q)=p\log(p/q)+(1-p)\log[(1-p)/(1-q)].
\]
Use the usual limiting conventions at zero and one.
One informative query, at $(s,a_1)$, has relative entropy
\begin{equation*}
\operatorname{kl}(1/2+\gamma,1/2-\gamma)
=2\gamma\log\frac{1+2\gamma}{1-2\gamma}
\leq16\gamma^2.
\end{equation*}
For the inequality use
$\log[(1+x)/(1-x)]\leq2x/(1-x)$ at $x=2\gamma\leq1/2$.
Queries at other rows have zero relative entropy.

Let an audit make at most $n$ adaptively chosen row queries; include its independent randomization in the transcript $Y_{1:n}$. Conditional on $\mathcal H_{t-1}=\sigma(Y_{1:t-1})$, its selection kernel is identical under both hypotheses. Writing $\Prob_+^n,\Prob_-^n$ for the transcript laws, the relative-entropy chain rule yields
\begin{align*}
D_{\rm KL}(\Prob_+^n\|\Prob_-^n)
&=\sum_{t=1}^n\E_+\!\left[
 D_{\rm KL}\!\left(
 \Prob_+(Y_t\mid\mathcal H_{t-1})
 \middle\|
 \Prob_-(Y_t\mid\mathcal H_{t-1})
 \right)\right]\\
&\leq\sum_{t=1}^n
 \operatorname{kl}(1/2+\gamma,1/2-\gamma)\\
&\leq16n\gamma^2.
\end{align*}
If the audit stops early, append uninformative queries to reach $n$; this changes neither the decision nor the divergence bound.

Writing $p$ and $q$ for the plus and minus promotion probabilities, data processing for the binary decision gives transcript divergence at least $\operatorname{kl}(p,q)$. This follows by partitioning the likelihood-ratio integral over promotion and rejection and applying Jensen on each part. Since $p\geq1-\delta>\delta\geq q$, Bernoulli relative entropy is increasing in $p$ and decreasing in $q$ on this region; consequently
\begin{align*}
16n\gamma^2
&\geq D_{\rm KL}(\Prob_+^n\|\Prob_-^n)\\
&\geq\operatorname{kl}(p,q)\\
&\geq\operatorname{kl}(1-\delta,\delta),
\end{align*}
which proves~\eqref{eq:verification-lower}. The bound is for evidence needed by a safe, high-power promotion test, not a general lower bound on world-model training. It shows that an arbitrarily small unresolved intervention margin cannot be certified from a fixed amount of real evidence.

Together with the sufficient audit bound in
Appendix~\ref{app:amortized}, this establishes the matching
$\Theta(\gamma^{-2})$ dependence on the unresolved decision margin. We do not claim
matching state-space or horizon dependence: those factors arise from the particular
simultaneous finite-state verifier used for the upper bound.
\subsection{Direct return verification for general interfaces}

A finite-state confidence set is not necessary if true episodic evaluation is available. Choose policies $\pi,\pi'$ before collecting fresh evaluations. Let $Z_i$ be the difference of their returns in independent episode pairs. Pairing within an episode index is allowed, and $Z_i\in[-HR_b,HR_b]$. Then
\begin{equation*}
J(\pi')-J(\pi)
\geq \overline Z-HR_b\sqrt{\frac{2\log(1/\alpha)}n}
\end{equation*}
with probability at least $1-\alpha$, by one-sided Hoeffding. A positive lower bound directly certifies the candidate improvement, including for black-box planners and history-based agents.

This audit is more general but can be expensive: it validates each candidate with real episodes instead of amortizing model reliability across many imagined candidates. Its false-positive event, conditional on pre-audit choice, fits Theorem~\ref{thm:sequential}. If candidates are selected using these same evaluations, use a simultaneous finite-pool correction or fresh evaluation again.

\subsection{Elementary concentration tools}

For completeness, both concentration inequalities used above follow from a bounded exponential moment. If $X\in[a,b]$, let
$K(\lambda)=\log\E\exp[\lambda(X-\E X)]$.
Under the exponentially tilted law, $K''(\lambda)$ is the variance of $X$, bounded by $(b-a)^2/4$: center $X$ at $(a+b)/2$ and use that variance is at most the corresponding second moment. Since $K(0)=K'(0)=0$, integrating twice gives
\begin{equation*}
\E\exp[\lambda(X-\E X)]
\leq\exp\{\lambda^2(b-a)^2/8\}.
\end{equation*}
For independent variables, multiply these bounds. Markov's inequality for the exponential and optimization over $\lambda>0$ yield
$\Prob\{\overline X-\E\overline X\geq t\}\leq
\exp[-2nt^2/(b-a)^2]$.
Apply the same argument to $-X$ and add probabilities for a two-sided bound.

If a function $f(X_1,\ldots,X_n)$ of independent variables changes by at most $c_i$ when only $X_i$ changes, reveal the variables sequentially. The resulting Doob martingale difference has conditional mean zero and conditional range of length at most $c_i$: averaging over the remaining independent variables preserves the coordinate-wise difference bound. Applying the bounded exponential-moment inequality above conditionally and iterating gives
\begin{equation*}
\Prob\{f-\E f\geq t\}
\leq\exp\!\left[-\frac{2t^2}{\sum_i c_i^2}\right].
\end{equation*}
Using $c_i=2C_H/N$ proves the tail step in Proposition~\ref{prop:gradient-sampling}. These arguments also hold conditional on a pre-audit history when the required conditional independence is satisfied.

\section{Adaptive Co-learning and Resource Accounting}
\label{app:sequential}

\begin{theorem}[Adaptive fresh verification]\label{thm:sequential}
Let $\mathcal H_{k-1}$ contain all information before round $k$'s fresh audit, including its task choice. If the combined conditional failure probability is at most $\varepsilon_k$ and only positive certified comparisons are executed, let $\mathcal B$ denote the event of at least one non-improving execution in this sequence. Then
\begin{equation*}
\Prob(\mathcal B)
 \leq\sum_{k\geq1}\varepsilon_k,\qquad
\E N_K\leq\sum_{k=1}^K\varepsilon_k,
\end{equation*}
where $N_K$ counts non-improving executions through round $K$.
\end{theorem}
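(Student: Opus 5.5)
The proof will follow the pattern already used in Theorem~\ref{thm:shared}: reduce every bad execution to a failure of that round's certificate, then apply a union bound and the tower property. For each round $k$, let $G_k$ be the event that round $k$'s fresh audit and estimation steps all succeed. This means the confidence event holds, the estimation event $\widehat\Gamma\geq S-\xi$ holds, and so on. By hypothesis, $\Prob(G_k^c\mid\mathcal H_{k-1})\leq\varepsilon_k$ almost surely. Let $X_k=\ind\{\text{round }k\text{ executes a comparison with }\Gamma_\M\leq0\}$, with $X_k=0$ when nothing is executed.

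The first step is the deterministic inclusion $\{X_k=1\}\subseteq G_k^c$. Execution requires a positive certified margin $T_k>0$. On $G_k$, the chain of inequalities in Proposition~\ref{prop:conformal} or Theorem~\ref{thm:shared} gives $\Gamma_\M(x_k)\geq\widehat\Gamma_k-B_k\geq S_k-\xi_k-B_k=T_k>0$, so the execution cannot be non-improving. This inclusion holds pointwise, whatever task, request, or world model was chosen adaptively from $\mathcal H_{k-1}$. The only reason this is legitimate is that the certificate's validity is stated conditionally on $\mathcal H_{k-1}$, which already contains the task choice.

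The second step is the counting. Since $\mathcal B=\bigcup_k\{X_k=1\}\subseteq\bigcup_k G_k^c$, the union bound and the tower property give
\[
\Prob(\mathcal B)\leq\sum_k\Prob(G_k^c)=\sum_k\E\bigl[\Prob(G_k^c\mid\mathcal H_{k-1})\bigr]\leq\sum_k\varepsilon_k .
\]
Write $N_K=\sum_{k\leq K}X_k\leq\sum_{k\leq K}\ind_{G_k^c}$. Linearity of expectation then yields $\E N_K\leq\sum_{k\leq K}\varepsilon_k$ without any independence across rounds. For an infinite sequence, we apply monotone convergence to the union; if $\sum_k\varepsilon_k=\infty$, the bound is trivial.

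The main obstacle is measurability and filtration bookkeeping rather than any inequality. We must make precise that $G_k$, $T_k$, and the execution decision are measurable with respect to $\mathcal H_k$. We must also make precise that the hypothesis ``combined conditional failure probability at most $\varepsilon_k$'' means $\Prob(G_k^c\mid\mathcal H_{k-1})\leq\varepsilon_k$ almost surely, where $G_k$ is exactly the event on which $T_k>0$ implies $\Gamma_\M>0$. I would state this interpretation explicitly as the definition of a valid round certificate. I would then note that each round may be a conformal gate, a fresh row audit (Proposition~\ref{prop:fresh}), or a direct return audit, each with its own $\varepsilon_k$ obtained by a union of that round's sub-failure probabilities.
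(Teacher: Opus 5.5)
Your proposal is correct and is essentially the paper's argument: each bad execution is deterministically contained in that round's audit-or-estimation failure event, and the tower property with countable subadditivity (for $\Prob(\mathcal B)$) and linearity (for $\E N_K$) finishes. The one quibble is your closing remark: a split-conformal gate supplies only marginal coverage, not coverage conditional on $\mathcal H_{k-1}$ (the paper states that coverage conditional on selection is not implied). So, unlike a fresh row audit or a direct return audit, it meets the theorem's hypothesis only under an extra conditional-exchangeability assumption.
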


\paragraph{Proof of Theorem~\ref{thm:sequential}.}
Each bad execution implies audit or estimation failure. Conditional expectation and the tower property give its marginal bound; countable subadditivity and linearity give the two conclusions. Unlike Theorem~\ref{thm:shared}, this permits arbitrary task changes, but pays for fresh validity.

\subsection{Filtration and repeated promotion}

At round $k$, $\mathcal H_{k-1}$ contains previous world-model fits, policies, generator choices, and audit outcomes, together with the present task choice made before its fresh random samples. Let $E_k$ be the event on which all current world-model-error and sampling bounds used by the gate hold. Assume
$\Prob(E_k^c\mid\mathcal H_{k-1})\leq\varepsilon_k$,
where $\varepsilon_k$ is deterministic or predictable with a deterministic summable upper envelope. For simplicity the main theorem uses deterministic $\varepsilon_k$.

Let $I_k$ indicate that the algorithm accepts a candidate behavior and its true target return does not increase. The deterministic margin theorem gives $I_k\leq\ind\{E_k^c\}$. Consequently
\begin{align*}
\E[I_k\mid\mathcal H_{k-1}]
&\leq\E[\ind\{E_k^c\}\mid\mathcal H_{k-1}]
=\Prob(E_k^c\mid\mathcal H_{k-1})\leq\varepsilon_k,\\
\E\sum_{k=1}^K I_k
&=\sum_{k=1}^K\E\!\left[
  \E(I_k\mid\mathcal H_{k-1})\right]
\leq\sum_{k=1}^K\varepsilon_k.
\end{align*}
Moreover, the event of at least one bad accepted comparison is contained in $\bigcup_{k\geq1}E_k^c$, whence
\begin{align*}
\Prob(\mathcal B)
&\leq\Prob\!\left(\bigcup_{k\geq1}E_k^c\right)
\leq\sum_{k\geq1}\Prob(E_k^c)\\
&=\sum_{k\geq1}\E\!\left[
 \Prob(E_k^c\mid\mathcal H_{k-1})\right]
\leq\sum_{k\geq1}\varepsilon_k.
\end{align*}
Choosing
$\varepsilon_k=\varepsilon/[k(k+1)]$
protects every round with total risk at most $\varepsilon$.

Proposition~\ref{prop:fresh} supplies conditional world-model validity after adaptive task choice; the conditional sampling inequality above supplies simulation accuracy. Their error budgets add. This argument never conditions a marginal guarantee on a data-dependent promotion event. Between rounds all world models and task choices may change; this particular argument restores validity by fresh verification. For repeated uses on one fixed task, Theorem~\ref{thm:shared} instead retains a simultaneous world-confidence event and charges its failure probability only once.

\subsection{Fixed-objective progress and stationarity}
\label{app:stationarity}

The main-text contrast telescope~\eqref{eq:decision-progress} requires no derivatives. For differentiable predictive policy search, the local certificate also yields the following stationarity specialization.
The contrast telescope controls expected return rather than samplewise episode
rewards. Because $J(\pi)\leq HR_b$, any fixed positive certified margin can occur
only finitely many times along a sequence of accepted updates; this is the content
of~\eqref{eq:large-promotions}. The statement does not bound the waiting time,
number of rejected proposals, or real samples required between accepted updates.
On the event of simultaneous validity, enumerate accepted policy updates by $i=0,\ldots,n-1$. Model-only steps between them do not change the policy. Assume they optimize one fixed bounded objective $J$, have common smoothness $L$, and total gradient radius
$b_i\leq\kappa h_i$, where
$h_i=\|\widetilde g_i\|_2$ and $0\leq\kappa<1$.
Use $\eta=(1-\kappa)/L$ and require each segment to remain in the admissible region. Theorem~\ref{cor:safe-update} gives
\begin{equation*}
J(\theta_{i+1})-J(\theta_i)
\geq\frac{(1-\kappa)^2}{2L}h_i^2.
\end{equation*}
Since $\|\nabla J(\theta_i)\|_2\leq(1+\kappa)h_i$, summing yields
\[
\frac{(1-\kappa)^2}{2L(1+\kappa)^2}
 \sum_{i=0}^{n-1}\|\nabla J(\theta_i)\|_2^2
\leq J(\theta_n)-J(\theta_0)
\leq\sup_\theta J(\theta)-J(\theta_0).
\]
Therefore
\begin{equation*}
\min_{0\leq i<n}\|\nabla J(\theta_i)\|_2^2
\leq\frac1n\sum_{i=0}^{n-1}\|\nabla J(\theta_i)\|_2^2
\leq\frac{2L(1+\kappa)^2[\sup_\theta J(\theta)-J(\theta_0)]}{(1-\kappa)^2n}.
\end{equation*}
If infinitely many updates are accepted, boundedness of $J$ also gives
\begin{equation*}
\sum_{i=0}^\infty\|\nabla J(\theta_i)\|_2^2<\infty
\quad\Longrightarrow\quad
\lim_{i\to\infty}\|\nabla J(\theta_i)\|_2=0.
\end{equation*}
Existence of infinitely many certified updates is not assumed to follow from validity alone.

A fixed distribution over tasks can be incorporated in the initial state, defining one expected-return objective. By contrast, certifying improvement only for a selected task does not certify improvement of the task average. If objectives change to $J_i$ with
$\sup_\theta|J_{i+1}(\theta)-J_i(\theta)|\leq v_i$,
then
\begin{equation*}
\sum_{i=0}^{n-1}[J_i(\theta_{i+1})-J_i(\theta_i)]
\leq HR_b+\sum_{i=1}^{n-1}v_{i-1}.
\end{equation*}
To verify this, telescope while adding
$J_{i-1}(\theta_i)-J_i(\theta_i)$ at every internal point. The endpoint range is at most $HR_b$ and each internal discrepancy is at most its stated drift bound. Substituting this right-hand side in the preceding calculation gives a drift-corrected average bound for $\|\nabla J_i(\theta_i)\|_2^2$. Without drift control it is not a convergence theorem for a fixed target.

\subsection{What reliability does and does not allocate}

The frontiers in~\eqref{eq:frontiers}--\eqref{eq:agent-frontier} are disjoint: $\FW$ has nonpositive certified margin, while $\FA$ has positive margin. Membership in $\FW$ requests evidence or model refinement for a promising comparison. Membership in $\FA$ authorizes the candidate behavior. Neither identifies failure ownership or supplies an environment-wide guarantee.

A model can be reliable while its imagined return estimate is too noisy. Increasing the rollout count reduces $\xi$ without changing the predictive kernel. A request with no statistically resolved apparent benefit is deferred. This third possibility prevents a two-way routing rule from treating lack of evidence as proof of model ignorance or agent incompetence.

To define the next task, a generator may prioritize uncertainty, measured progress, diversity, or intervention coverage. The theory constrains the use of a selected task after verification; it does not claim an optimal open-ended task generator.

\subsection{Deterministic and stochastic precedence accounting}

Let $\mathcal Z$ now be a finite task archive. For each task fix nonnegative integers $w_z,a_z$. A productive model update decrements $w_z$ by one. A productive policy update decrements $a_z$ by one only once $w_z=0$. Updates have no transfer and no regression. An update outside these conditions is ineffective. These assumptions are an explicit learning-response abstraction, not consequences of a value-error bound.

\begin{theorem}[Exact precedence accounting]\label{thm:allocation}
Let $B^*=\sum_z(w_z+a_z)$. Every completed schedule has length
\begin{equation*}
B=B^*+m,
\end{equation*}
where $m$ is its number of ineffective updates. An eligible work-conserving schedule attains $B^*$.

More generally, suppose an eligible model trial on task $z$ succeeds independently with probability $p_z>0$ at cost $c_z>0$, and an eligible policy trial succeeds with probability $q_z>0$ at cost $d_z>0$. Each success decrements its corresponding integer by one. Every schedule that uses only eligible trials and continues until all tasks finish has expected cost
\begin{equation*}
B^*=\sum_z\left(\frac{c_z w_z}{p_z}+\frac{d_z a_z}{q_z}\right).
\end{equation*}
Extra ineffective trials add their expected costs.
\end{theorem}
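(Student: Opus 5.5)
The argument is a counting argument over the finite archive. For the deterministic part, we track the potential $\Phi=\sum_z(w_z+a_z)$, which equals $B^*$ at the start and $0$ exactly when the schedule is complete. Under the stated response abstraction, every update is either productive or ineffective. A productive model update on $z$ requires $w_z>0$ and lowers $w_z$ by one. A productive policy update requires $w_z=0$ and $a_z>0$ and lowers $a_z$ by one. Since there is no transfer and no regression, no update ever raises any counter or changes another task's counters. Hence each productive update lowers $\Phi$ by exactly one, and each ineffective update leaves it unchanged. A completed schedule with $m$ ineffective updates therefore has exactly $B^*$ productive updates, so $B=B^*+m$.

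For attainment, we say a schedule is eligible and work-conserving if, while $\Phi>0$, it always chooses an update that is currently productive. Such an update always exists. Pick any $z$ with $w_z+a_z>0$: if $w_z>0$, a model update on $z$ is productive; otherwise a policy update on $z$ is. The schedule then has $m=0$ and length $B^*$.

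For the stochastic part, we fix a task $z$ and decompose the schedule's cost into contributions from eligible model trials on $z$ and eligible policy trials on $z$. Each model trial on $z$ while $w_z>0$ is an independent Bernoulli$(p_z)$ attempt. The number of such trials needed to bring $w_z$ to zero is the sum of $w_z$ independent geometric variables with mean $1/p_z$, giving expected cost $c_zw_z/p_z$. The argument is the same for policy trials once $w_z=0$, with $a_z$ geometric$(q_z)$ waiting times and expected cost $d_za_z/q_z$.

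The main obstacle is that the scheduler is adaptive, so the order of interleaving across tasks may depend on past outcomes. We handle this with Wald's identity applied per counter. Trial outcomes are independent of the history given the trial's type, so the number of trials spent on a given counter is a stopping time with finite mean. Its expected cost is therefore the cost per trial times the expected number of trials, whatever the interleaving.

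Equivalently, we can attach an i.i.d.\ outcome stream to each (task, type) pair. The trials spent on that pair are then a deterministic function of its own stream, namely the index of its $w_z$-th (respectively $a_z$-th) success. This makes the interleaving irrelevant. We assume the schedule terminates almost surely, which holds because every trial has positive success probability and the schedule is required to continue until all tasks finish.

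Summing over $z$ by linearity of expectation gives $B^*$. Any additional ineffective trials, meaning trials on ineligible pairs, do not change any counter, so they contribute only their own expected costs additively.
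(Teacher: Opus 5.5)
Your proposal is correct and follows essentially the paper's argument: the potential $\Phi=\sum_z(w_z+a_z)$ drops by exactly one per productive update, giving $B=B^*+m$, and attaching an independent Bernoulli stream to each (task, stage) pair makes the trial counts $U_z,V_z$ independent of the interleaving, so linearity gives $\mathbb{E}C=\sum_z\left(c_zw_z/p_z+d_za_z/q_z\right)$. Your explicit check that a productive update always exists while $\Phi>0$, and your Wald-identity alternative, are harmless additions to the same proof.
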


\begin{proof}
For the deterministic case, let $\Phi_k=\sum_z(w_z(k)+a_z(k))$ be the remaining work after $k$ updates, and let $I_k$ indicate that update $k$ is ineffective. At completion time $B$,
\begin{equation*}
\Phi_0=B^*,\qquad\Phi_B=0,\qquad
\Phi_{k-1}-\Phi_k=1-I_k.
\end{equation*}
Writing $m=\sum_{k=1}^B I_k$, summation yields
\begin{equation*}
B^*=\sum_{k=1}^B(\Phi_{k-1}-\Phi_k)=B-m,\qquad B=B^*+m.
\end{equation*}
Eligible schedules have $m=0$ and attain $B^*$.

For the stochastic case, attach independent Bernoulli sequences to each task and stage. A nonanticipating schedule reveals the next unused entry only when that stage is attempted. Let $U_z,V_z$ be the numbers of eligible model and policy trials required to exhaust the two quotas. Geometric waiting times give
\begin{equation*}
\E U_z=w_z/p_z,\qquad\E V_z=a_z/q_z.
\end{equation*}
Every completing eligible schedule reveals exactly these prefixes, irrespective of interleaving. If $C$ is its total cost, then
\begin{equation*}
C=\sum_z(c_zU_z+d_zV_z),\qquad
\E C=\sum_z\left(\frac{c_zw_z}{p_z}+\frac{d_za_z}{q_z}\right)=B^*.
\end{equation*}
Ineffective trials neither reveal a productive entry nor change a quota; their costs add pathwise.
\end{proof}

Thus order inside an eligible frontier need not be uniquely optimal: the theorem characterizes a class of schedules with identical completion cost in this abstraction. It does not prove dominance over every scalar score, because a scalar rule can itself respect precedence. Cross-task transfer, warm-start benefits of early policy training, uncertain learning curves, forgetting, and model updates that never improve certification all change the allocation problem.

If every harmful false promotion in this abstraction wastes at most $c$ units of work, Theorem~\ref{thm:sequential} bounds its expected wasted work through round $K$ by $c\sum_{k=1}^K\varepsilon_k$. This is the cost of erroneous certification alone. Conservative rejection, audit cost, and ordinary optimization failure are additional costs and are not hidden inside that bound.

\section{Controlled Learned-World-Model Experiment}
\label{app:experiment-details}
\subsection{Experimental execution}
\label{app:exact-exp}\label{app:study-matrix}\label{app:empirical-gate}\label{app:request-eval}\label{app:update-exp}\label{app:allocation-exp}\label{app:ablations}\label{app:statistics}
We evaluate action-conditioned predictors learned from sampled experience in finite-horizon worlds. Sparse, chain, and grid families each use 16 states, four actions, known support, bounded shared rewards, and horizons $H\in\{4,8,12\}$. Forty development worlds fix the protocol, 80 independent worlds supply the finite-world audit quantities, and 240 held-out worlds are used once for evaluation. World identifiers and random streams are disjoint across these roles. The predictor is the Beta-smoothed empirical transition kernel fitted from nested row observations; a separately acquired 32-query-per-row model defines the fixed reference behavior. Every gate sees only the learned predictor and its permitted audit evidence. Exact simulator returns are revealed only after routing and are never used to choose a proposal, a gate, or an evidence location.

The protocol has three complementary components. First, 400 paired constructions share the learned model and passive record but differ on one unobserved action, isolating whether intervention evidence identifies the otherwise hidden world effect. Second, held-out planning and teaching requests compare unconditional positive-imagination acceptance, uncertainty-only rejection, and decision-relative qualification; the same requests measure value error, update-direction cosine, harmful-use rate, and realized gain. Third, online runs compare ungated, uniformly gated, and decision-directed evidence acquisition at an equal real-transition budget. Candidate proposals, reference behavior, and evaluation streams are matched within every comparison, so the only manipulated object is whether the world-model prediction is qualified and where new evidence is acquired.

Two theory-directed supplements reuse the same finite-world row-audit primitive. For evidence-complexity scaling, the audited Bernoulli row has means $1/2-\gamma$ and $1/2+\gamma$ in the two possible worlds, with 13 logarithmically spaced margins $\gamma\in[0.02,0.20]$. At each margin we enumerate sample sizes and report the smallest $n_{0.9}$ for which the exact randomized Neyman--Pearson test reaches $90\%$ power while controlling false promotion at $\alpha=0.05$; the log--log slope is fitted once across all registered margins. For evidence reuse, we fix $\gamma=0.08$, audit all eight transition rows, and evaluate $K\in\{10,20,50,100,200\}$ later comparisons. The shared rule pays for one simultaneous $\alpha$-level audit; the re-audit baseline pays for $K$ fresh tests, each assigned error $\alpha/K$ and separately sized to the same $90\%$ power. Consequently both procedures control familywise false promotion, while their real-query costs are measured under identical accuracy requirements. Exact integer results and the complete plotting records are retained with the experiment archive.

\paragraph{Theory-to-measurement correspondence.}
The finite-world design gives each theoretical statement an observable counterpart. Paired worlds implement passive non-identifiability and interventional recovery (Theorem~\ref{thm:impossibility} and Proposition~\ref{prop:attribution-sampling}); attribution accuracy and model-effect error are evaluated before any decision gate is scored. Planning curves test decision reliability and the separation between predictive accuracy and decision safety (Theorem~\ref{thm:planning} and Corollary~\ref{cor:absolute}) through realized gain, harmful-use rate, and risk--coverage. Online runs instantiate the closed-loop and adaptive-acquisition statements (Theorem~\ref{thm:closed-loop} and Theorem~\ref{thm:sequential}), while gradient and teaching diagnostics test gradient reliability and local safe improvement (Theorem~\ref{thm:gradient} and Corollary~\ref{cor:safe-update}). The log--log experiment tests the evidence lower-bound order (Proposition~\ref{prop:verification-lower}), and the reuse experiment tests amortized verification and precedence accounting (Proposition~\ref{prop:amortized-cost} and Theorem~\ref{thm:allocation}). This one-to-one mapping makes the appendix a validation suite for the theory rather than a collection of unrelated plots.

\subsection{Results and supplementary summaries}
\label{app:paired-exp}\label{app:audit-exp}\label{app:neural-exp}\label{app:extended-plots}
The results separate the three theoretical claims cleanly. The paired construction reaches $100\%$ attribution accuracy at the largest intervention budget, while model-effect error decreases from $0.08999$ to $0.01888$ (Figure~\ref{fig:app-exact}). On held-out planning requests, accepting every positive imagined advantage produces $5.10\%$ harmful uses; decision-relative qualification reduces this to $0.30\%$ while preserving positive aggregate gain (Figures~\ref{fig:app-planning}--\ref{fig:app-data}). The update-direction supplement raises mean true/imagined gradient cosine from $0.761$ to $0.9994$ as evidence increases. In the online study, ungated, uniform-gated, and decision-directed acquisition incur 44, 10, and 4 harmful updates, respectively (Figures~\ref{fig:app-bounds}--\ref{fig:app-secondary}). The evidence-complexity fit has slope $2.014$ on the log--log scale, matching the predicted $\gamma^{-2}$ order. At $K=200$ downstream comparisons, one shared audit uses 664 real queries versus 350,400 for fresh re-auditing, a $527.7\times$ reduction under the same registered familywise error and power targets.

These conclusions do not rely on post-hoc selection: all registered worlds and margins are retained, gates never observe exact evaluation returns, and paired methods reuse identical stochastic streams. The simultaneous theorem-derived certificate can abstain completely at small budgets, and adaptive acquisition is not uniformly superior in every cell; we report these limitations rather than replacing them with a favorable subset. The experiment therefore checks the finite-world assumptions and scaling laws directly, while the agent benchmark in Appendix~\ref{app:benchmark-eval} separately tests whether the same verify-then-promote structure remains useful without claiming a finite-state certificate.

\begin{figure}[htbp]
\centering\includegraphics[width=\linewidth]{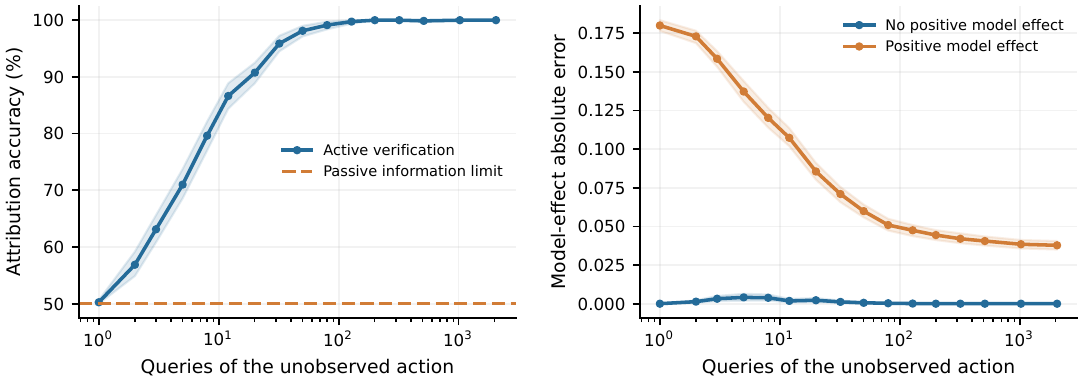}
\caption{Attribution and model-effect error under increasing intervention evidence. Active action-conditioned queries resolve worlds that are indistinguishable under the shared passive record.}
\label{fig:app-exact}
\end{figure}

\begin{figure}[htbp]
\centering
\begin{minipage}{.48\linewidth}
\includegraphics[width=\linewidth]{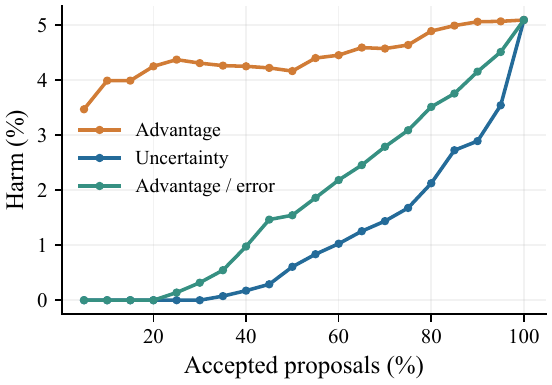}
\end{minipage}\hfill
\begin{minipage}{.48\linewidth}
\includegraphics[width=\linewidth]{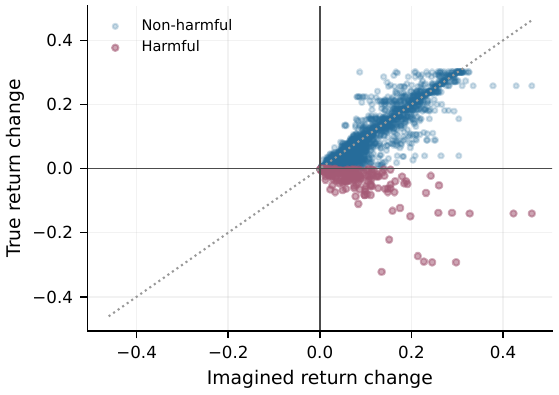}
\end{minipage}
\caption{Decision-relative planning qualification. The risk--coverage curve (left) and imagined-versus-realized changes (right) show that screening the proposed use, rather than trusting positive imagination alone, removes most harmful promotions.}
\label{fig:app-planning}
\end{figure}

\begin{figure}[htbp]
\centering\includegraphics[width=\linewidth]{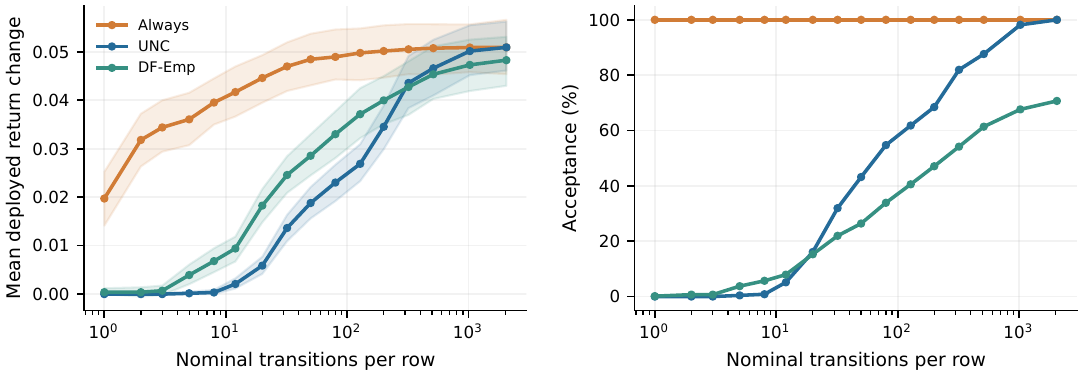}
\caption{Planning gain and promotion rate as action-conditioned evidence increases. Reliability improves without changing the candidate requests or reference policy.}
\label{fig:app-data}
\end{figure}

\begin{figure}[htbp]
\centering\includegraphics[width=\linewidth]{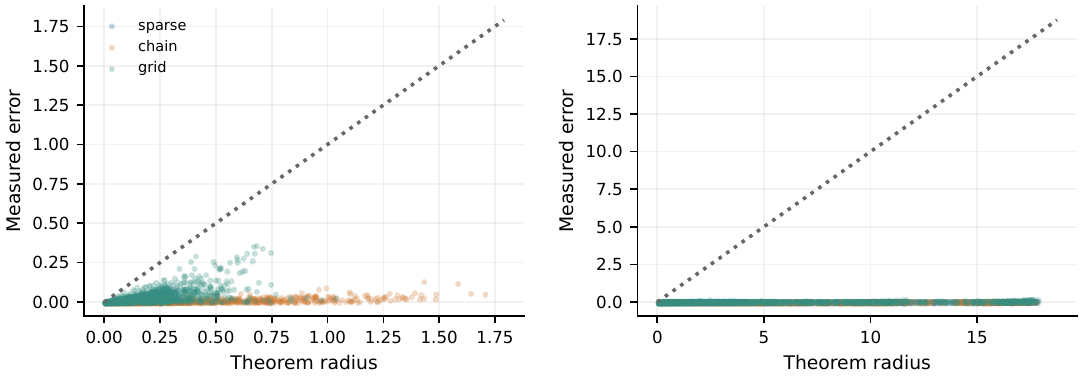}
\caption{Certificate diagnostics. Measured decision errors remain below their theorem-derived radii over the registered finite-world requests.}
\label{fig:app-bounds}
\end{figure}

\begin{figure}[htbp]
\centering\includegraphics[width=\linewidth]{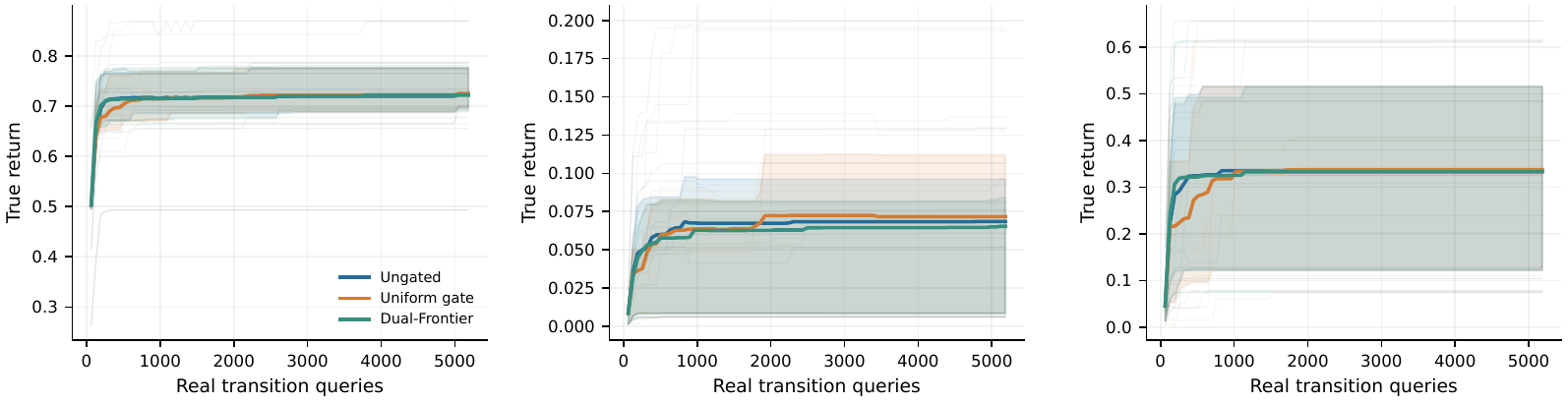}
\caption{Online qualification and acquisition under equal real-transition budgets. Decision-directed evidence reduces harmful updates relative to ungated and uniform alternatives.}
\label{fig:learning}
\end{figure}

\begin{figure}[htbp]
\centering
\begin{minipage}{.32\linewidth}\centering\includegraphics[width=\linewidth,height=1.05in,keepaspectratio=false,viewport=0 0 260 188,clip]{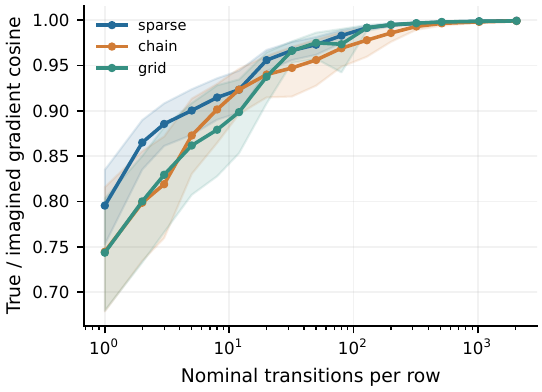}\end{minipage}\hfill
\begin{minipage}{.32\linewidth}\centering\includegraphics[width=\linewidth,height=1.05in,keepaspectratio=false]{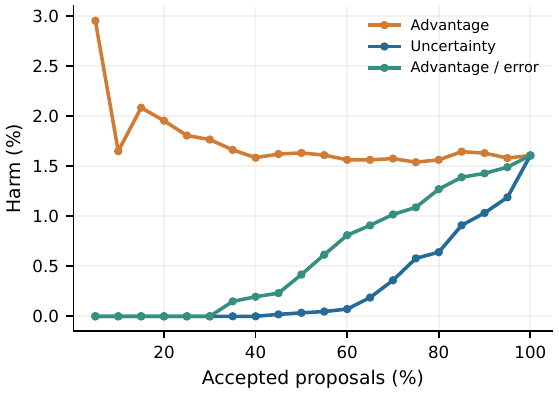}\end{minipage}\hfill
\begin{minipage}{.32\linewidth}\centering\includegraphics[width=\linewidth,height=1.05in,keepaspectratio=false]{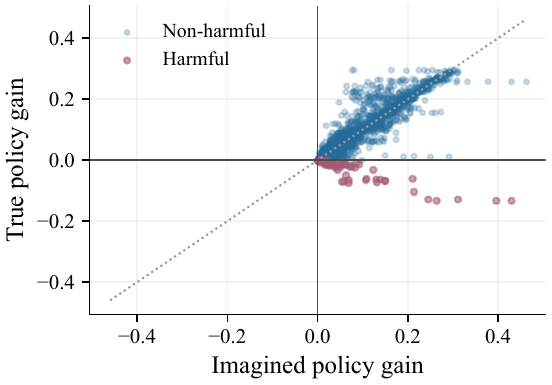}\end{minipage}
\caption{Transfer diagnostics for update-direction alignment (left), teaching risk--coverage (center), and imagined-versus-realized teaching gains (right).}
\label{fig:app-secondary}
\end{figure}

\clearpage
\section{Agent--World-Model Benchmark Evaluation}
\label{app:benchmark-eval}

The benchmark study evaluates the Dual-Frontier admission rule of Section~\ref{sec:dual} under a language-world-model interface. We fix Qwen-AgentWorld as the learned world model, vary the agent backbone between Llama-3.1-8B-Instruct and Qwen3-8B, and evaluate BFCL~\citep{patil2025bfcl}, API-Bank~\citep{li2023apibank}, and NexusRaven~\citep{nexusraven2023}. For each benchmark, the request manifest is partitioned once into a $20\%$ calibration split and an $80\%$ held-out test split. The partition is fixed before calibration and routing evaluation and is shared across all compared methods; the same request partition is used for both agent backbones.

Each request defines a reference--candidate comparison before admission: the base-agent output is the reference and a fixed Qwen-AgentWorld proposal is the candidate. Base actions and world-model generations are produced once and cached, and every non-Agent-only routing rule acts on the same candidate records. Calibration requests are used only to construct the decision-relevant error radius entering the Dual-Frontier margin. Their benchmark references are accessed only after all corresponding model-side predictions have been frozen. Test references are never used in candidate construction, calibration, or admission and are loaded only after the routed test outputs are fixed. Prompts, tool schemas, sampling budgets, parsers, generation order, and candidate-generation records are otherwise shared across routing rules.
\subsection{Decision rules and calibrated Dual-Frontier instantiation}
\label{app:benchmark-rules}
\paragraph{Predicted advantage and estimation uncertainty.}
Once the shared proposal $c_i^K$ is fixed, the world model evaluates its consequence relative to the reference action $c_i^0$ using a fixed repeated-evaluation budget. Let
$\widehat\gamma_{i\ell}\in[-1,1]$, $\ell=1,\ldots,L$, denote the resulting normalized candidate-minus-reference consequence scores. We define
\begin{equation}
S_i=\frac{1}{L}\sum_{\ell=1}^{L}\widehat\gamma_{i\ell},
\qquad
\xi_i=\sqrt{\frac{2\log(1/\delta)}{L}},
\label{eq:benchmark-advantage}
\end{equation}
where the same $L$ and $\delta$ are used throughout evaluation. Thus $S_i$ is the benchmark-side predicted advantage and $\xi_i$ accounts for finite-generation estimation uncertainty.

\paragraph{Decision-relevant model-error calibration.}
The repeated world-model records additionally provide proposal-level reliability information. We collect these quantities in
\begin{equation}
z_i=
\bigl(
1-\bar\rho_i,\,
1-\kappa_i,\,
h_i^{\max}
\bigr),
\qquad
\psi_i=g(z_i),
\label{eq:benchmark-uncertainty-score}
\end{equation}
where $g$ is a fixed nonnegative aggregation rule specified before calibration and used unchanged at test time. The individual confidence, agreement, and consequence-risk signals therefore contribute to the model-error estimate rather than acting as separate Dual-Frontier admission thresholds.

For each calibration request $j$, all model-side quantities are frozen before the benchmark evaluator provides the normalized candidate-minus-reference contrast $\Gamma_j^{\mathrm{eval}}$. We calibrate the residual decision discrepancy not already accounted for by $\xi_j$,
\begin{equation}
e_j^{B}
=
\left[
\left|S_j-\Gamma_j^{\mathrm{eval}}\right|-\xi_j
\right]_+,
\qquad
q_j=e_j^{B}-\psi_j.
\label{eq:benchmark-calibration-residual}
\end{equation}
For a calibration set of size $n_{\mathrm{cal}}$, let
\begin{equation}
k=
\left\lceil
(n_{\mathrm{cal}}+1)(1-\alpha)
\right\rceil,
\qquad
B_i=
\left[\psi_i+q_{(k)}\right]_+,
\label{eq:benchmark-calibrated-B}
\end{equation}
with $q_{(k)}$ the corresponding calibration quantile. Calibration is performed only on the designated $20\%$ split; $B_i$ is then evaluated without test labels on the remaining requests. Under the exchangeability condition of Proposition~\ref{prop:conformal}, the same rank argument gives
\begin{equation}
\Pr\!\left\{e_i^{B}>B_i\right\}\leq\alpha.
\label{eq:benchmark-B-coverage}
\end{equation}

\paragraph{Dual-Frontier promotion rule.}
The benchmark implementation uses the same three-term margin as Eq.~\eqref{eq:decision-margin}:
\begin{equation}
T_i^{\mathrm{DF}}
=
S_i-B_i-\xi_i.
\label{eq:benchmark-df-margin}
\end{equation}
A world-model proposal is promoted only when $T_i^{\mathrm{DF}}>0$. On the calibration-coverage event in~\eqref{eq:benchmark-B-coverage},
\begin{equation}
\Gamma_i^{\mathrm{eval}}
\geq
S_i-B_i-\xi_i
=
T_i^{\mathrm{DF}},
\label{eq:benchmark-bridge}
\end{equation}
so a positive benchmark margin has exactly the lower-bound semantics required by the Dual-Frontier decision rule. This construction separates predicted benefit, decision-relevant model error, and finite-generation estimation uncertainty while keeping the practical gate identical in form to the theoretical one.

\paragraph{Six routing rules.}
Let $a_i^r\in\{0,1\}$ denote whether rule $r$ admits the shared proposal $c_i^K$. If no valid proposal exists, all non-Agent-only rules fall back to $c_i^0$. For the three single-signal comparison rules, we retain fixed thresholds
\begin{equation}
\tau_\rho=0.70,\qquad
\tau_\kappa=3/7,\qquad
\tau_h^{P}=0.22,
\label{eq:benchmark-baseline-thresholds}
\end{equation}
used only to define the corresponding baselines. The routing decisions are
\begin{align}
a_i^{A}&=0,
&
a_i^{W}&=1,
&
a_i^{C}&=\ind\{\bar\rho_i\geq\tau_\rho\},
\nonumber\\
a_i^{K}&=\ind\{\kappa_i\geq\tau_\kappa\},
&
a_i^{P}&=\ind\{h_i^{\max}\leq\tau_h^{P}\},
&
a_i^{D}&=\ind\{T_i^{\mathrm{DF}}>0\}.
\label{eq:benchmark-rules}
\end{align}
Here $A,W,C,K,P,D$ denote Agent-only, Always-WM, Confidence, Consistency, Pessimistic, and Dual-Frontier, respectively. The candidate is identical across $W,C,K,P,D$; only the admission rule changes. The routed output is
\begin{equation}
c_i^r=
\begin{cases}
c_i^K,&a_i^r=1,\\
c_i^0,&a_i^r=0.
\end{cases}
\label{eq:benchmark-routed-output}
\end{equation}

The proposal-generation protocol, uncertainty construction, calibration level, and routing rules are fixed before evaluation of the held-out test split. Calibration references are confined to the designated calibration requests, and test references are inaccessible until~\eqref{eq:benchmark-routed-output} has been frozen. Every held-out request is retained in evaluation, including fallbacks and malformed outputs. Oracle is excluded because it observes reference outcomes and is not deployable.

\subsection{Metrics}
\label{app:benchmark-metrics}

For request $i$ and routing rule $r$, let $c_i^0$ denote the base-agent output and
$c_i^r$ the frozen routed output in~\eqref{eq:benchmark-routed-output}.
We use the fixed benchmark-aware evaluator of each benchmark throughout. Let
$y_i(c)\in\{0,1\}$ denote strict task success, $p_i(c)\in[0,1]$ parameter-level
accuracy, $u_i(c)\in[0,1]$ the decision-quality score assigned to output $c$, and
$q_i(c_i^0,c)\in[0,1]$ the corresponding reliability-loss score relative to the
base action. All evaluator definitions are fixed before routing-rule replay.

Evaluation follows the native structural conventions of each benchmark.
BFCL v4 preserves ordered and parallel-call multiplicity and validates function
names and arguments against the supplied schema. API-Bank matches the required
API identity and its normalized parameter dictionary. NexusRaven canonicalizes
its Python-style function expression before comparing function identity,
argument names, values, and multiplicity. Thus the metrics below share a common
form while retaining the native call semantics of each benchmark.

\paragraph{Task success.}
The strict task-level metric is
\begin{equation}
\mathrm{TS}_r
=
\frac{1}{N}\sum_{i=1}^{N} y_i(c_i^r).
\end{equation}
It requires the complete function-call sequence, including function names,
arguments, and call multiplicity, to satisfy the benchmark reference.

\paragraph{Parameter accuracy.}
We report the average parameter-level score
\begin{equation}
\mathrm{PA}_r
=
\frac{1}{N}\sum_{i=1}^{N} p_i(c_i^r),
\end{equation}
where $p_i(\cdot)$ follows the corresponding benchmark's native parameter
matching and normalization rules.

\paragraph{Net decision gain.}
To measure the signed effect of routing through the learned world model, we use
\begin{equation}
\mathrm{NDG}_r
=
\frac{1}{N}\sum_{i=1}^{N}
\left[
u_i(c_i^r)-u_i(c_i^0)
\right].
\end{equation}
Positive values indicate that the admitted world-model revisions improve the
average decision quality relative to the base agent, whereas negative values
indicate net degradation. We report NDG in percentage points.

\paragraph{Harmful revisions.}
Let $h_i(c_i^0,c_i^r)\in[0,1]$ denote the evaluator's degradation score for the
routed output relative to the base action. We report
\begin{equation}
\mathrm{HRR}_r
=
\frac{1}{N}\sum_{i=1}^{N}
h_i(c_i^0,c_i^r).
\end{equation}
This quantity measures the overall exposure to harmful interventions, including
both their occurrence and decision-level severity.

\paragraph{Revision coverage.}
The fraction of requests on which rule $r$ admits a world-model revision is
\begin{equation}
\mathrm{RC}_r
=
\frac{1}{N}\sum_{i=1}^{N} a_i^r.
\end{equation}
Coverage is descriptive rather than an objective by itself and should be read
jointly with the reliability metrics.

\paragraph{Selective risk.}
Among admitted revisions, we measure the residual reliability loss as
\begin{equation}
\mathrm{SR}_r
=
\frac{
\sum_{i=1}^{N}
a_i^r\, q_i(c_i^0,c_i^r)
}{
\sum_{i=1}^{N} a_i^r
},
\qquad
\sum_i a_i^r>0.
\end{equation}
Selective risk is undefined when a rule never revises. The pair
$(\mathrm{RC},\mathrm{SR})$ therefore provides the empirical risk--coverage
view of the verify-then-promote decision rule.

TS, PA, and NDG are higher-is-better metrics, whereas HRR and SR are
lower-is-better; RC is descriptive. Here $N$ always denotes requests in the
held-out $80\%$ test split; calibration requests are excluded from every
reported benchmark metric. The reported Avg.\ column is the equal-weight
arithmetic mean of the three benchmark percentages rather than a pooled
micro-average.

\subsection{Additional results}
\label{app:benchmark-additional}

The component ablation removes one term at a time from the same Dual-Frontier decision margin while keeping the shared proposal, calibrated quantities, and cached world-model records fixed. Advantage-only admits when $S_i>0$; w/o world-model error uses $S_i-\xi_i>0$; w/o estimation error uses $S_i-B_i>0$; and full Dual-Frontier uses
\[
S_i-B_i-\xi_i>0.
\]
The comparison therefore isolates the contribution of predicted advantage, decision-relevant model error, and finite-generation estimation uncertainty without changing proposal generation or test requests. Table~\ref{tab:agentworld-ablation-qwen} reports the Qwen3-8B results; the corresponding Llama-3.1-8B-Instruct results appear in Table~\ref{tab:agentworld-ablation}.

The complete margin is consistently strongest. Relative to Advantage-only, Dual-Frontier improves the equal-weight Success/Acc. averages by $7.05/4.34$ points for Llama-3.1-8B-Instruct and $6.27/3.92$ points for Qwen3-8B. Removing either uncertainty term recovers only part of this gain, while their joint use yields the strongest performance for both backbones. This pattern is consistent with Section~\ref{sec:dual}: predicted benefit identifies potentially useful interventions, whereas $B_i$ and $\xi_i$ determine whether that apparent advantage remains sufficiently supported for promotion.

\begin{table}[htbp]
\caption{Additional Qwen3-8B ablation results. Success and Acc. denote task success and parameter accuracy; Avg. is the uniform mean over BFCL v4, API-Bank, and NexusRaven.}
\label{tab:agentworld-ablation-qwen}
\vspace{3pt}
\centering
\scriptsize
\setlength{\tabcolsep}{3.25pt}
\renewcommand{\arraystretch}{0.96}
\resizebox{\textwidth}{!}{%
\begin{tabular}{@{}cc*{8}{c}@{}}
\toprule
\addlinespace[2.5pt]
\multirow{2}{*}{Models} & \multirow{2}{*}{Methods}
& \multicolumn{2}{c}{BFCL v4} & \multicolumn{2}{c}{API-Bank}
& \multicolumn{2}{c}{NexusRaven} & \multicolumn{2}{c}{Avg.}\\
\cmidrule(lr){3-4}\cmidrule(lr){5-6}\cmidrule(lr){7-8}\cmidrule(l){9-10}
& & Success & Acc. & Success & Acc. & Success & Acc. & Success & Acc.\\
\midrule
\multirow{4}{*}{Qwen3-8B}
& Advantage-only         & 67.80 & 84.94 & 95.07 & 96.41 & 94.40 & 95.71 & 85.76 & 92.35\\
& w/o world-model error  & 72.32 & 88.01 & 96.88 & 97.50 & 96.02 & 96.68 & 88.41 & 94.06\\
& w/o estimation error   & 77.79 & 91.18 & 98.40 & 98.31 & 97.11 & 97.52 & 91.10 & 95.67\\
& \textbf{Dual-Frontier} & \textbf{79.38} & \textbf{92.20} & \textbf{98.92} & \textbf{98.64} & \textbf{97.80} & \textbf{97.96} & \textbf{92.03} & \textbf{96.27}\\
\bottomrule
\end{tabular}
}
\end{table}

\paragraph{Coverage and conditional reliability.}
Tables~\ref{tab:revision-coverage} and~\ref{tab:selective-risk} report how often each deployable rule admits a world-model revision and the residual risk among those admitted revisions. Agent-only is omitted because it never revises; consequently, its selective risk is undefined rather than zero. Revision coverage is descriptive, whereas lower selective risk is better. Their joint reading is essential: Dual-Frontier deliberately operates at moderate coverage while removing most unsafe promotions, exactly the selective qualification behavior predicted by the theory.

Across the three benchmarks, Dual-Frontier revises $57.59\%$ of Llama and $57.51\%$ of Qwen requests. At this nontrivial coverage, its average selective risk is $28.20\%$ and $18.23\%$, respectively, versus $67.68\%$ and $78.20\%$ for the strongest competing selective baseline. The reductions of $39.48$ and $59.97$ percentage points explain why the primary-metric gains are not a consequence of indiscriminate revision: the method rejects precisely the candidate groups most likely to erase a correct base decision. Always-WM provides the opposite endpoint, with full coverage but selective risks of $78.89\%$ and $86.29\%$.

\par\medskip
\noindent\begin{minipage}{\textwidth}
\refstepcounter{table}\label{tab:revision-coverage}
\raggedright Table~\thetable: Revision coverage (\%) across agent backbones and benchmarks. Avg. is the uniform mean over BFCL v4, API-Bank, and NexusRaven.\par
\vspace{4pt}
\centering
\small
\renewcommand{\arraystretch}{0.96}
\begin{tabular*}{\textwidth}{@{\extracolsep{\fill}}llrrrr@{}}
\toprule
Models & Methods & BFCL v4 & API-Bank & NexusRaven & Avg.\\
\midrule
\multirow{5}{*}{\shortstack[c]{Llama-3.1-8B\\Instruct}}
& Always-WM     & 100.00 & 100.00 & 100.00 & 100.00\\
& Confidence    &  99.50 &  99.61 &  98.74 &  99.28\\
& Consistency   &  80.88 &  84.65 &  83.33 &  82.95\\
& Pessimistic   &  82.25 &  85.43 &  79.87 &  82.52\\
& Dual-Frontier &  55.62 &  61.81 &  55.35 &  57.59\\
\midrule
\multirow{5}{*}{Qwen3-8B}
& Always-WM     & 100.00 & 100.00 & 100.00 & 100.00\\
& Confidence    &  99.62 &  99.61 &  99.69 &  99.64\\
& Consistency   &  78.25 &  81.30 &  81.45 &  80.33\\
& Pessimistic   &  82.50 &  79.53 &  84.91 &  82.31\\
& Dual-Frontier &  53.62 &  57.28 &  61.64 &  57.51\\
\bottomrule
\end{tabular*}
\end{minipage}

\par\medskip
\noindent\begin{minipage}{\textwidth}
\refstepcounter{table}\label{tab:selective-risk}
\raggedright Table~\thetable: Selective risk (\%) among admitted world-model revisions. Lower is better; Avg. is the uniform mean over the three benchmarks.\par
\vspace{4pt}
\centering
\small
\renewcommand{\arraystretch}{0.96}
\begin{tabular*}{\textwidth}{@{\extracolsep{\fill}}llrrrr@{}}
\toprule
Models & Methods & BFCL v4 & API-Bank & NexusRaven & Avg.\\
\midrule
\multirow{5}{*}{\shortstack[c]{Llama-3.1-8B\\Instruct}}
& Always-WM     & 93.12 & 73.43 & 70.13 & 78.89\\
& Confidence    & 90.95 & 66.21 & 57.64 & 71.60\\
& Consistency   & 89.49 & 61.86 & 51.70 & 67.68\\
& Pessimistic   & 90.43 & 65.67 & 56.69 & 70.93\\
& \textbf{Dual-Frontier} & \textbf{9.12} & \textbf{33.66} & \textbf{41.82} & \textbf{28.20}\\
\midrule
\multirow{5}{*}{Qwen3-8B}
& Always-WM     & 97.88 & 79.53 & 81.45 & 86.29\\
& Confidence    & 96.99 & 72.53 & 76.03 & 81.85\\
& Consistency   & 96.49 & 67.07 & 71.04 & 78.20\\
& Pessimistic   & 96.97 & 72.52 & 75.56 & 81.68\\
& \textbf{Dual-Frontier} & \textbf{3.43} & \textbf{27.36} & \textbf{23.90} & \textbf{18.23}\\
\bottomrule
\end{tabular*}
\end{minipage}

\clearpage

\section*{AI Use Statement}
Generative AI tools were used solely for formatting checks and language polishing. The authors
reviewed all resulting revisions and take full responsibility for the final manuscript.

\end{document}